\def\eqref#1{equation~\ref{#1}}
\def\1{\bm{1}}
\DeclareMathAlphabet{\mathsfit}{\encodingdefault}{\sfdefault}{m}{sl}
\SetMathAlphabet{\mathsfit}{bold}{\encodingdefault}{\sfdefault}{bx}{n}
\newcommand{\cmark}{\ding{51}}
\newcommand{\xmark}{\ding{55}}
\newcommand{\ie}{\textit{i}.\textit{e}.}
\newcommand{\etc}{\textit{etc.}}
\newcommand{\cae}{CAE\xspace}
\newcommand{\caep}{CAE$+$\xspace}
\theoremstyle{plain}
\newtheorem{theorem}{Theorem}
\newtheorem{lemma}{Lemma}
\newtheorem{definition}{Definition}
\newtheorem{assumption}{Assumption}
\title{\cae: Repurposing the Critic as an Explorer in Deep Reinforcement Learning}
\author{\name Yexin Li \email liyexin@bigai.ai \\
      State Key Laboratory of General Artificial Intelligence, BIGAI}
\begin{document}

\maketitle

\begin{abstract}
Exploration remains a fundamental challenge in reinforcement learning, as many existing methods either lack theoretical guarantees or fall short in practical effectiveness. In this paper, we propose \cae, \ie, the Critic as an Explorer, a lightweight approach that repurposes the value networks in standard deep RL algorithms to drive exploration, without introducing additional parameters. \cae leverages multi-armed bandit techniques combined with a tailored scaling strategy, enabling efficient exploration with provable sub-linear regret bounds and strong empirical stability. Remarkably, it is simple to implement, requiring only about 10 lines of code. For complex tasks where learning reliable value networks is difficult, we introduce \caep, an extension of \cae that incorporates an auxiliary network. \caep increases the parameter count by less than 1\% while preserving implementation simplicity, adding roughly 10 additional lines of code. Extensive experiments on MuJoCo, MiniHack, and Habitat validate the effectiveness of \cae and \caep, highlighting their ability to unify theoretical rigor with practical efficiency.
\end{abstract}

\section{Introduction}
\label{introduction}

Exploration in reinforcement learning (RL) remains a fundamental challenge, particularly in sparse-reward environments. Although algorithms such as DQN~\citep{DQN}, PPO~\citep{PPO}, SAC~\citep{SAC}, DDPG~\citep{DDPG}, TD3~\citep{TD3}, IMPALA~\citep{IMPALA}, and DSAC~\citep{DSAC, DSAC-T} have achieved impressive performance on tasks like Atari games~\citep{atari_dqn, DQN}, StarCraft~\citep{StarCraft}, Go~\citep{Go}, \etc, they often depend on rudimentary exploration strategies. Common approaches, such as $\epsilon$-greedy or injecting noise into actions, are often sample-inefficient and struggle in environments with delayed or sparse rewards.

For decades, exploration strategies with provable optimality guarantees have been well established in tabular RL settings~\citep{tabular_rl}. More recently, methods with provable regret bounds have been progressively extended to RL with function approximation, including linear functions~\citep{provable_random_value_function, provable_random_value_function_, provable_q_learning, provable_linear_function, bayesian_q_net, PC-PG}, kernel-based models~\citep{provable_kernel_nn}, and neural networks~\citep{provable_kernel_nn}. While linear and kernel-based approaches make strong assumptions about the RL functions, the provable method based on neural networks suffers from prohibitive computational costs, specifically $O(n^{3})$, where $n$ is the number of parameters in the network. Subsequently, some studies~\citep{ACB, LMCDQN} propose algorithms with theoretical optimality guarantees under the linearity assumption and extend them directly to deep RL without further proof. Other works~\citep{LSVI-PHE, General_LMCDQN} derive provable regret bounds for their deep RL methods, but these approaches are either practically burdensome or rely on unknown sampling errors.

More practical exploration approaches typically rely on heuristic mechanisms, giving rise to a variety of empirically successful methods, including Pseudocount~\citep{pseudocount}, ICM~\citep{ICM}, RND~\citep{RND}, RIDE~\citep{RIDE}, NovelD~\citep{NovelD}, AGAC~\citep{AGAC}, and E3B~\citep{E3B, E3B_study}. These methods introduce intrinsic bonuses to encourage agents to visit novel states. For instance, RND computes the exploration bonus using the prediction error of a randomly initialized target network. Despite their empirical effectiveness, such approaches often distort the extrinsic reward signal and generally lack formal theoretical guarantees. In contrast, potential-based reward shaping methods, such as Liberty~\citep{Liberty} and EME~\citep{EME}, provide stronger theoretical justification but are often challenging to implement in practice. Moreover, all of the above methods require training auxiliary networks beyond the standard policy and value networks used in deep RL, resulting in substantially increased computational overhead.

\begin{figure}[t]
    \centering
    \includegraphics[width=1.0\textwidth]{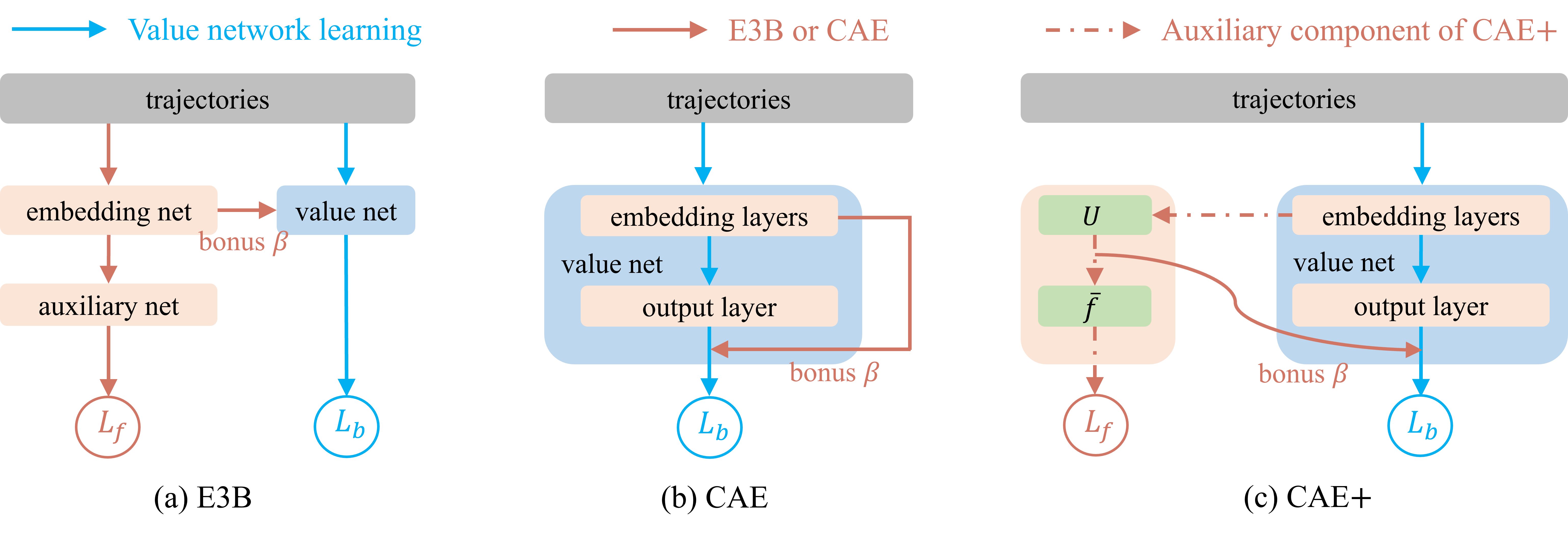}
    \caption{Comparison of baselines, such as E3B, with \cae and \caep. $L_{b}$ denotes the Bellman loss used to update the value function, while $L_{f}$ refers to the loss of the auxiliary network. Unlike E3B, which requires a separate network to compute exploration bonuses, \cae exploits the embedding layers of the value network for this purpose. \caep further enhances \cae by incorporating a compact, lightweight auxiliary network, $f = \bar{f} \circ \boldsymbol{U}$, improving performance in sparse-reward environments with only a minor increase in parameters.}
    \label{fig:comparison}
    \vspace{-10pt}
\end{figure}

In this work, we aim to combine the strengths of both theoretically grounded and empirically effective exploration methods. Provably efficient approaches are fundamentally rooted in the theory of Multi-Armed Bandits (MAB)~\citep{LinUCB, LinUCB_proof, LinTS, ComLinUCB, NeuralTS, NeuralUCB}. Building on this foundation, we hypothesize that advanced techniques from \textbf{neural MAB} can be effectively adapted for exploration in \textbf{deep RL}. Recent studies~\citep{deep_neural_lin_bandit, Neural-LinTS, shallow_exploration} suggest that decoupling deep representation learning from exploration strategies holds promise for achieving efficient exploration in neural MAB settings.

Motivated by these insights, we propose \cae. Unlike existing methods that train additional embedding networks to generate exploration bonuses, \cae leverages the embedding layers of the value networks in the RL algorithms and employs MAB techniques to produce exploration bonuses. To ensure the practical stability of \cae, we adopt an appropriate scaling strategy~\citep{sample-mean-var, streaming-rl} to process the bonuses. Consequently, \cae introduces no additional parameters beyond those in the original algorithms, showcasing that RL algorithms inherently possess strong exploration capabilities if their learned networks are effectively leveraged. Moreover, \cae is simple to implement, requiring only about 10 lines of code. A comparison between \cae and existing methods is in Figure~\ref{fig:comparison}.

For tasks with complex dynamics and very sparse rewards, learning value networks is challenging, impeding exploration based on them. Accordingly, we propose an extended version \caep, as illustrated in Figure~\ref{fig:comparison}. \caep integrates a compact, lightweight auxiliary network to facilitate the learning process. The structure of the auxiliary network is carefully designed to prevent severe coupling between the environment dynamics and the returns, thereby further enhancing the performance of \caep. Remarkably, this addition increases the parameter count by less than 1\% and requires only about 10 extra lines of code, thus preserving the simplicity and lightweight nature.

Our experiments cover a diverse set of benchmarks, \ie, MuJoCo, MiniHack, and Habitat, representing dense-reward, sparse-reward, and reward-free environments. \cae improves the performance of state-of-the-art RL baselines, including PPO~\citep{PPO}, SAC~\citep{SAC}, TD3~\citep{TD3}, and DSAC~\citep{DSAC, DSAC-T}. Additionally, \caep demonstrates robust performance, consistently outperforming E3B~\cite{E3B, E3B_study}, the state-of-the-art exploration method for MiniHack and Habitat, across all evaluated tasks. These results highlight the superior reliability and effectiveness of \cae and \caep in diverse RL scenarios.

In summary, we make three key contributions. First, we propose lightweight \cae and \caep, which enable the use of linear MAB techniques for exploration in deep RL. By adopting a scaling strategy and carefully designing the small auxiliary network for \caep, we ensure both practical stability and functionality in environments with dense and sparse rewards. Second, our theoretical analysis demonstrates that any deep RL algorithm with \cae or \caep achieves a sub-linear regret bound over episodes. Finally, experiments on MuJoCo, MiniHack, and Habitat validate the effectiveness of \cae and \caep, showcasing their superior performance. Our code is available at 
\href{https://github.com/liyexn/Critic-as-an-Explorer}{\textcolor{magenta}{https://github.com/liyexn/Critic-as-an-Explorer}}.

\section{Related Work}
\label{sec:related_work}

\paragraph{Multi-armed bandits.} MAB algorithms address the exploration-exploitation dilemma by making sequential decisions under uncertainty. LinUCB~\citep{LinUCB} assumes a linear relationship between arm contexts and rewards, enabling efficient exploration via uncertainty quantification in the estimated parameter space and ensuring a sub-linear regret bound~\citep{LinUCB_proof}. 
To relax the linearity assumption, KernelUCB~\citep{KernelUCB, GP_UCB} and NegUCB~\citep{NegUCB} transform contexts into high-dimensional spaces and apply LinUCB to the mapped contexts. Neural-UCB~\citep{NeuralUCB} and Neural-TS~\citep{NeuralTS} leverage neural networks to model the complex relationships between contexts and rewards. However, their computational complexity of $O(n^{3})$, where $n$ denotes the number of network parameters, limits their scalability in real-world applications. Neural-LinTS~\citep{Neural-LinTS} and Neural-LinUCB~\citep{shallow_exploration} mitigate this limitation by decoupling representation learning from exploration strategies, improving the practicality of neural MAB.

\paragraph{Provable exploration in RL.} Provably efficient exploration methods~\citep{tabular_rl, provable_random_value_function, provable_random_value_function_, provable_q_learning, provable_linear_function, PC-PG, OPPO, Bayes-UCBVI, RandQL} often face empirical limitations or remain primarily theoretical, with limited applicability to deep RL. Some studies~\citep{MR-NaS, bayesian_q_net, ACB, LMCDQN} propose methods with theoretical guarantees under tabular or linearity assumptions and extend them directly to deep RL settings. Other works~\citep{LSVI-PHE, General_LMCDQN} provide provable bounds for deep RL, but these approaches may suffer from any of the following limitations: computationally prohibitive, practically burdensome, or reliance on unknown sampling errors. A comparative analysis is presented in Table~\ref{tab:provable_algorithm_comparison}.

\begin{table}[t]
\caption{Method comparison. \textbf{Linear function} indicates whether the theoretical guarantees apply to RL algorithms with linear function approximation. 
\textbf{Network-based function} specifies whether the guarantees extend to RL algorithms using network-based function approximation. \textbf{Toy task} denotes whether the method can be effectively applied to simple deep RL tasks. \textbf{Practical complex task} indicates whether the method remains practically feasible when deployed in realistic, large-scale deep RL environments.}
\label{tab:provable_algorithm_comparison}
\begin{center}
    \scalebox{0.85}{
    \begin{tabular}{l|cc|cc}
        \toprule
        \multirow{2}{*}{Method} & \multicolumn{2}{c|}{Provable} & \multicolumn{2}{c}{Empirical with Deep RL} \\
        \noalign{\vskip 2pt}
        \cline{2-5}
        \noalign{\vskip 2pt}
        & Linear function & Network-based function & Toy task & Practical complex task \\
        \midrule
        LSVI-UCB~\citep{provable_linear_function} & \cmark & \xmark & \xmark & \xmark \\
        NN-UCB~\citep{provable_kernel_nn} & \cmark & \cmark & \xmark & \xmark \\
        OPT-RLSVI~\citep{OPT-RLSVI} & \cmark & \xmark & \xmark & \xmark \\
        LSVI-PHE~\citep{LSVI-PHE} & \cmark & \cmark & \cmark & \xmark \\
        BDQN~\citep{bayesian_q_net}  & \cmark & \xmark & \cmark & \cmark \\
        ACB~\citep{ACB} & \cmark & \xmark & \cmark & \cmark \\
        LMCDQN~\citep{LMCDQN}  & \cmark & \xmark & \cmark & \cmark \\
        \midrule
        \cae & \cmark & \cmark & \cmark & \cmark \\
        \caep & \cmark & \cmark & \cmark & \cmark \\
        \bottomrule
    \end{tabular}
    }

\end{center}
\end{table}

\paragraph{Practical exploration in deep RL.} Empirically successful methods~\citep{pseudocount, ICM, study_curiosity, RIDE, RND, NovelD, AGAC, E3B, E3B_study, curiosity_in_hindsight, Automatic_ir} typically generate exploration bonuses to encourage agents to visit novel states. However, these approaches often lack rigorous theoretical foundations. In contrast, methods inspired by potential-based reward shaping~\citep{PBRS}, such as Liberty~\citep{Liberty} and EME~\citep{EME}, offer stronger theoretical grounding, but are challenging to implement in practice. Moreover, both classes of methods generally require training a substantial number of additional parameters. In comparison, \cae and \caep utilize MAB techniques, assisted by embedding layers within the RL value networks, providing empirical benefits with minimal additional parameters. Figure~\ref{fig:comparison} illustrates the differences among various exploration methods, while Table~\ref{tab:empirical_algorithm_comparison} summarizes their additional networks and parameters.

\begin{table}[H]
\caption{Comparison of exploration methods on MiniHack. \textbf{Networks}: additional networks beyond those in the base RL algorithm IMPALA~\citep{IMPALA}, which contains $25,466,652$ parameters; \textbf{\# Params}: the number of additional parameters introduced by the exploration method. Networks in \textbf{bold} represent those with significant parameters, while those in \textcolor{gray}{gray} indicate substantially fewer parameters.}
\label{tab:empirical_algorithm_comparison}
\begin{center}
\resizebox{\textwidth}{!}{
\begin{tabular}{lccr}
        \toprule
        Method & Networks & \# Params & Params $\uparrow$ \\
        \midrule
        ICM~\citep{ICM} & \textbf{Embedding} \textcolor{gray}{$+$ Forward dynamics $+$ Inverse dynamics} & \textbf{$16, 074, 512$} \textcolor{gray}{$ + 2, 110, 464 + 527, 371$} & $73\%$ \\
        RND~\citep{RND} & \textbf{Embedding} & \textbf{$16, 074, 512$} & $63\%$ \\
        RIDE~\citep{RND} & \textbf{Embedding} \textcolor{gray}{$+$ Forward dynamics $+$ Inverse dynamics} & \textbf{$16, 074, 512$} \textcolor{gray}{$ + 2, 110, 464 + 527, 371$} & $73\%$ \\
        NovelD~\citep{NovelD} & \textbf{Embedding} & \textbf{$16, 074, 512$} & $63\%$ \\
        E3B~\citep{E3B, E3B_study} & \textbf{Embedding} \textcolor{gray}{$+$ Inverse dynamics} & \textbf{$16, 074, 512$} \textcolor{gray}{$ + 527, 371$} & $65\%$ \\

        \midrule
        \cae & \textcolor{gray}{-} & \textcolor{gray}{-} & $0$ \\
        \caep & \textcolor{gray}{Inverse dynamics} & \textcolor{gray}{$199, 819$} & $0.8\%$ \\
        
        \bottomrule
    \end{tabular}
}
\end{center}
\end{table}
\vspace{-10pt}

\section{Methodology}
\label{sec:methodology}

Unless otherwise specified, bold uppercase symbols denote matrices, while bold lowercase symbols represent vectors. $\boldsymbol{I}$ refers to an identity matrix. Frobenius norm for matrices and $l_{2}$ norm for vectors are both denoted by $\left \| \cdot \right \|_{2}$. Mahalanobis norm of a vector $\boldsymbol{x}$ with respect to $\boldsymbol{A}$ is defined as $\left \| \boldsymbol{x} \right \|_{\boldsymbol{A}} = \sqrt{\boldsymbol{x}^{\mathsf{T}} \boldsymbol{A} \boldsymbol{x}}$. For any integer $K > 0$, the set of integers $\{ 1, 2, ..., K \}$ is denoted by $\left[ K \right]$.

\subsection{Preliminary}
\label{subsec:preliminary}

An episodic Markov Decision Process (MDP) is formally defined as a tuple $(\mathcal{S}, \mathcal{A}, H, \mathbb{P}, r)$, where $\mathcal{S}$ denotes the state space and $\mathcal{A}$ is the action space. Integer $H > 0$ indicates the duration of each episode. Functions $\mathbb{P}: \mathcal{S} \times \mathcal{A} \times \mathcal{S} \rightarrow [0, 1]$ and $r: \mathcal{S} \times \mathcal{A} \rightarrow [0, 1]$ are the Markov transition and reward functions, respectively. During an episode, the agent follows a policy $\pi: \mathcal{S} \times \mathcal{A} \rightarrow [0, 1]$. At each time step $h \in \left[ H \right]$ in the episode, the agent observes the current state $s_{h} \in \mathcal{S}$ and selects an action $a_{h} \sim \pi (\cdot | s_{h} )$ to execute, then the environment transits to the next state $s_{h+1} \sim \mathbb{P}(\cdot|s_{h}, a_{h} )$, yielding an immediate reward $r_{h} = r(s_{h}, a_{h})$. At time step $h$, the action-value function $Q (s_{h}, a_{h})$ measures the cumulative return obtained by taking action $a_{h}$ in state $s_{h}$ and subsequently following policy $\pi$:

\begin{equation}
    Q (s_{h}, a_{h}) = \sum_{t=h}^{H} \gamma^{t-h} r_{t}
\end{equation}
where $0 \leq \gamma \leq 1$ is the discount parameter.

Many algorithms have been developed to learn the optimal policy $\pi^{*}$ for the agent to select and execute actions at each time step $h$ in the episode, thus ultimately maximizing the cumulative return $\sum_{h=1}^{H} \gamma^{h-1} r_{h}$. Notable algorithms include DQN~\citep{DQN}, PPO~\citep{PPO}, SAC~\citep{SAC}, TD3~\citep{TD3}, IMPALA~\citep{IMPALA}, DSAC~\citep{DSAC, DSAC-T}, and others. A common component of these algorithms is a neural network that parameterizes the action-value function\footnote{In some algorithms, the state-value function is learned instead of the action-value function. However, this does not affect the implementation and conclusion of our method, as will be seen in Subsection~\ref{subsec:cae}.} $Q(\cdot, \cdot)$ under a specific policy as Equation~\ref{eq:q_fun}, where $\phi(\cdot, \cdot|\boldsymbol{W})$ is the embedding layers, $\boldsymbol{\theta}$ and $\boldsymbol{W}$ are trainable parameters of the network.

\begin{equation}
    Q (s, a) = \boldsymbol{\theta}^{\mathsf{T}} \phi (s, a|\boldsymbol{W})
\label{eq:q_fun}
\end{equation}

Bellman loss as Equation~\ref{eq:bellman} is often employed to learn the action-value function. Using the most recent action-value function, the policy can be updated in various ways, depending on the specific algorithm. Since \cae focuses on leveraging Equation~\ref{eq:q_fun} for efficient exploration while preserving the core techniques of existing RL algorithms, we introduce \cae within the context of DQN for simplicity. However, it can be easily adapted to other RL algorithms.

\begin{equation}
    L_{B} = \left( Q (s_h, a_h) - \mathbb{E}_{s_{h+1} \sim \mathbb{P} (\cdot|s_h,a_{h})} \left[ r_{h} + \gamma \cdot \max_{a_{h+1}} Q (s_{h+1}, a_{h+1}) \right] \right)^{2}
\label{eq:bellman}
\end{equation}

\subsection{\cae: the Critic as an Explorer}
\label{subsec:cae}
For a state-action pair $(s, a)$, the estimated action-value $Q(s, a)$ is subject to an uncertainty term $\beta (s, a)$, arising from the novelty or limited experience with the particular state-action pair. Similar to MAB problems, explicitly accounting for such uncertainty is crucial. Incorporating this uncertainty term promotes exploration and can lead to improved long-term returns. Accordingly, we adjust the action-value function with an uncertainty bonus, as shown in Equation~\ref{eq:q_value_update}, where $\alpha \geq 0$ denotes the exploration coefficient. In the literature, \textit{uncertainty} is often also referred to as a \textit{bonus}, and we use the two terms interchangeably when no ambiguity arises.

\begin{equation}
\label{eq:q_value_update}
    Q (s, a) = \boldsymbol{\theta}^{\mathsf{T}} \phi(s, a|\boldsymbol{W}) + \alpha \beta (s, a)
\end{equation}

However, defining $\beta (s, a)$ remains challenging. Provably efficient methods often attempt to address this by either assuming a linear value function~\citep{provable_q_learning} or requiring $O(n^{3})$ computation time~\citep{provable_kernel_nn} in terms of the number of parameters $n$ in the value network. Both of these approaches have drawbacks, \ie, linearity fails to capture the complexity of tasks while the $O(n^{3})$ computational cost is impractical.

To overcome these limitations, we draw inspiration from Neural-LinUCB~\citep{shallow_exploration} and Neural-LinTS~\citep{Neural-LinTS}, which effectively decouple representation learning from exploration strategies. Building on this idea and following the standard value network structure in Equation~\ref{eq:q_fun}, \cae decomposes the action-value function into two distinct components.

\begin{itemize}[leftmargin=2em]
    \item Network $\phi (s, a|\boldsymbol{W})$ extracts the embedding of the state-action pair $(s, a)$;
    \item $Q (s, a) = \boldsymbol{\theta}^{\mathsf{T}} \phi (s, a|\boldsymbol{W} )$ is a linear function of the embedding $\phi (s, a|\boldsymbol{W} )$ with parameter $\boldsymbol{\theta}$.
\end{itemize}

Consequently, after appropriate modifications, MAB theory under the linearity assumption can be adapted to work with the embeddings $\phi(s, a)$ for $\forall s \in \mathcal{S}$ and $\forall a \in \mathcal{A}$. Simultaneously, the action-value function retains its representational capacity through the embedding layers $\phi(s, a)$, ensuring promising practical performance. While various MAB techniques can be adapted to the embeddings, we illustrate \cae using the two most representative ones. Other techniques can be utilized similarly, showcasing that \cae is a generalizable framework rather than a fixed method.

\textbf{Upper Confidence Bound (UCB)} is an optimistic exploration strategy in MAB. It defines the uncertainty term as Equation~\ref{eq:beta_ucb}, where $\phi(\cdot, \cdot)$ denotes the latest embedding layers, and $\boldsymbol{A}$ is the Gram matrix, initialized as $\boldsymbol{A} = \lambda \boldsymbol{I}$ with $\lambda$ being the ridge regularization parameter. At each step, $\boldsymbol{A}$ is updated using Equation~\ref{eq:variance_update}.

\begin{equation}
    \label{eq:beta_ucb}
    \beta (s, a) = \sqrt{\phi (s, a)^{\mathsf{T}} \boldsymbol{A}^{-1} \phi (s, a)}
\end{equation}

\vspace{-10pt}
\begin{equation}
\label{eq:variance_update}
    \boldsymbol{A} \leftarrow \boldsymbol{A} + \boldsymbol{\phi} (s, a) \boldsymbol{\phi}^{\mathsf{T}} (s, a)
\end{equation}

\textbf{Thompson Sampling} is a randomized exploration strategy that samples the value function, adjusted for uncertainty, from a posterior distribution. It defines the uncertainty term as Equation~\ref{eq:beta_thompson}, where the Gram matrix $\boldsymbol{A}$ is initialized and updated in the same manner as in UCB.

\vspace{-5pt}
\begin{equation}
\label{eq:beta_thompson}
\begin{aligned}
    \boldsymbol{\Delta \theta} \sim N(0, \boldsymbol{A}^{-1}); \quad \beta (s, a) = (\boldsymbol{\Delta \theta})^{\mathsf{T}} \phi (s, a)
\end{aligned}
\end{equation}

As the value network undergoes continuous updates, exploration based on the ever-changing embedding layers $\phi(\cdot, \cdot)$ can become highly unstable, significantly impairing practical performance. Inspired by existing scaling strategies~\citep{sample-mean-var, streaming-rl}, we adopt an appropriate one for the generated uncertainty at each time step, ensuring both stability and practical functionality, as detailed in Algorithm \ref{alg:running_mean_variance}. This scaling strategy normalizes the generated uncertainty at each time step using the running standard deviation, which, despite its simplicity, has a profound impact on the performance of \cae. The critical importance of this design is further highlighted through ablation studies presented in Appendix~\ref{app:experiment}.

\paragraph{Adapting \cae to General RL Algorithms.} Depending on the RL algorithm employed, we may sometimes learn a state-value network instead of an action-value network. In such cases, the network produces embeddings for states rather than for state-action pairs. Even when an action-value network is learned, it might still output only state embeddings if it is designed to take a state as input and produce values for multiple actions. In these scenarios, we use either the embedding of the next state or the embedding concatenated with the action as a proxy for the current state–action embedding when computing uncertainty.

\begin{algorithm}[H]
\caption{Scaling strategy for the uncertainty}
\label{alg:running_mean_variance}
\begin{algorithmic}[1]
\footnotesize
   \State {\bfseries Input:} Uncertainty $b$, running mean $\mu$, cumulative squared deviation $\nu^{2}$, and running count of samples $\mathcal{N}$
   
   \State Update the sample count $\mathcal{N} \leftarrow \mathcal{N} + 1$
   
   \vspace{2pt}
   \State Compute $\delta = b - \mu$
   \State Update the running mean $\mu \leftarrow \mu + \frac{\delta}{\mathcal{N}}$

   \vspace{2pt}
   \State Update the cumulative squared deviation $\nu^{2} \leftarrow \nu^{2} + \delta \times (b - \mu)$

   \vspace{2pt}
   \State {\bfseries Output:} Scaled uncertainty $\frac{b}{\sqrt{\nu^2 / \mathcal{N}}}$, and the updated $\mu$, $\nu^{2}$, and $\mathcal{N}$
\end{algorithmic}
\end{algorithm}

\subsection{\texorpdfstring{\caep: Enhancing \cae with Minimal Overhead}{CAE+: Enhancing CAE with Minimal Overhead}}
\label{subsec:caep}

For tasks with complex dynamics or very sparse rewards, learning value networks is particularly challenging, which in turn hinders exploration reliant on them. Accordingly, we propose \caep, an extension of \cae that incorporates a lightweight auxiliary network, introducing less than $1\%$ additional parameters. 

Specifically, we utilize an Inverse Dynamics Network (IDN)~\citep{ICM, RIDE, E3B} to enhance the learning of the embedding layers contained in the value networks. This is achieved by a compact network $f$ that infers the distribution $p(a_{h})$ over taken actions given consecutive states $s_{h}$ and $s_{h+1}$, which is trained by maximum likelihood estimation as Equation~\ref{eq:idn_loss}.

\begin{equation}
    \label{eq:idn_loss}
    L_{f} = - \log p(a_h | s_{h}, s_{h+1}) 
\end{equation}

We utilize the embedding layers as follows:
\begin{itemize}[leftmargin=2em]
    \item If the value network is an action-value network with embedding layers $\phi(s, a)$, a fixed default action $\ddot{a}$ is assigned to the action input while the true states are used; the resulting outputs of $\phi(\cdot, \ddot{a})$ are then treated as state embeddings.
    
    \item If the value network is a state-value network with embedding layers $\phi(s)$, the actual states are directly fed in to generate their embeddings. 
\end{itemize}
These state embeddings are subsequently transformed by a linear layer $\boldsymbol{U}$, followed by a small network $\bar{f}$, which processes the transformed consecutive embeddings to infer the action. Equation~\ref{eq:attn} illustrates this procedure for the case of embedding layers $\phi(s, a)$.

\vspace{-10pt}
\begin{align}
    \label{eq:attn}
    p(a_h | s_{h}, s_{h+1}) = f(\phi(s_{h}, \ddot{a}), \phi(s_{h+1}, \ddot{a})) = \bar{f}(\boldsymbol{U} \phi(s_{h}, \ddot{a}), \boldsymbol{U} \phi(s_{h+1}, \ddot{a}))
\end{align}

Although incorporating the IDN loss can accelerate learning of the embedding layers by leveraging knowledge of environment dynamics, it also introduces additional constraints on the shared embedding space, thereby limiting the flexibility of the embeddings for exploration. In \caep, to address this limitation, we adopt the transformed embeddings $\boldsymbol{U} \phi (s, a)$ instead of the original ones $\phi (s, a)$ to calculate the uncertainty. Consequently, by replacing $\phi (s, a)$ with $\boldsymbol{U} \phi (s, a)$, Equations~\ref{eq:beta_ucb} and~\ref{eq:beta_thompson}, which are used in \cae to generate uncertainty, are modified into Equation~\ref{eq:beta}, with the Gram matrix $\boldsymbol{A}$ updated according to Equation~\ref{eq:variance_update_}. Complete procedure of \caep is in Algorithm~\ref{alg:CAE+}.

\begin{align}
\label{eq:beta}
\beta (s, a) \triangleq
\begin{cases}
\sqrt{\phi^{\mathsf{T}} (s, a) \boldsymbol{U}^{\mathsf{T}} \boldsymbol{A}^{-1} \boldsymbol{U} \phi (s, a)} & \textup{UCB} \\[5pt]
\left. (\boldsymbol{\Delta \theta})^{\mathsf{T}} \boldsymbol{U} \phi (s, a) \; \right|_{\boldsymbol{\Delta \theta} \sim \mathcal{N}(0, \boldsymbol{A}^{-1})} & \text{Thompson Sampling}
\end{cases}
\end{align}

\vspace{-5pt}
\begin{align}
\label{eq:variance_update_}
\boldsymbol{A} \leftarrow \boldsymbol{A} + \boldsymbol{U} \phi(s, a) \phi^{\mathsf{T}} (s, a) \boldsymbol{U}^{\mathsf{T}}
\end{align}

In \caep, the structure of the network $f$ offers several advantages. First, the transformation $\boldsymbol{U}$ decouples environment dynamics from returns, mitigating interdependencies that could hinder flexibility and thereby enhancing empirical performance, as evidenced by the ablation studies presented later. Second, since $\boldsymbol{U}$ is a simple linear transformation, it approximately preserves the theoretical guarantees of UCB-based and Thompson Sampling-based exploration strategies, ensuring both rigor and stability in practice. Third, by projecting $\phi(s, a)$ into a lower-dimensional embedding with $\bar{d} < d$, the approach not only reduces the number of additional parameters but also lowers the computational complexity of uncertainty estimation at each time step, \ie, from $O(d^3)$ to $O(\bar{d}^3)$, making the method more efficient.

\textbf{Speed Up \caep with Rank$-1$ Update} \quad According to Algorithm~\ref{alg:CAE+}, the Gram matrix $\boldsymbol{A}$ needs to be inverted at each step, which is cubic in dimension. \textbf{Alternatively}, we can use
the Sherman-Morrison matrix identity~\citep{rank-1, E3B} to perform rank$-1$ updates of $\boldsymbol{A}^{-1}$ in quadratic time as Equation~\ref{eq:rank-1}.

\begin{equation}
\label{eq:rank-1}
    \boldsymbol{A}^{-1} \leftarrow \boldsymbol{A}^{-1} - \frac{\boldsymbol{A}^{-1} \boldsymbol{U} \boldsymbol{\phi} (s, a) \boldsymbol{\phi}^{\mathsf{T}} (s, a) \boldsymbol{U}^{\mathsf{T}} (\boldsymbol{A}^{-1})^{\mathsf{T}}}{1 + \boldsymbol{\phi}^{\mathsf{T}} (s, a) \boldsymbol{U}^{\mathsf{T}} \boldsymbol{A}^{-1} \boldsymbol{U} \boldsymbol{\phi} (s, a)}
\end{equation}

\begin{algorithm}[t]
   \caption{\caep with Action-value Network}
   \label{alg:CAE+}
\begin{algorithmic}[1]
\footnotesize
   \State {\bfseries Input:} Ridge parameter $\lambda > 0$, exploration parameter $\alpha \geq 0$, episode length $H$, episode number $M$, learning rate $\eta$

   \vspace{2pt}
   \State {\bfseries Initialize:} Gram matrix $\boldsymbol{A} = \lambda \boldsymbol{I}$, initial policy $\pi (\cdot)$ and value function $Q(\cdot, \cdot)$, network $f(\cdot|\boldsymbol{U})$, running mean $\mu=0$, cumulative squared deviation $\nu^2=0$, running count $\mathcal{N}=0$

   \vspace{2pt}
   \For{episode $m=1$ {\bfseries to} $M$}
   
   \State Receive the initial state $s_{1}^{m}$ from the environment
   
   \For{step $h=1, 2, ..., H-1$}

   \vspace{2pt}
   \State Execute action $a_{h}^{m} \sim \pi (s_{h}^{m})$, observe the next state $s_{h+1}^{m}$, and receive the immediate reward $r_{h}^{m}$
   
   \vspace{2pt}
   \State Generate bonus $\beta(s_{h}^{m}, a_{h}^{m})$ by Equation~\ref{eq:beta}
   
   \vspace{5pt}
    \State Provide $\beta(s_{h}^{m}, a_{h}^{m})$, $\mu$, $\nu^{2}$, and $\mathcal{N}$ as inputs to Algorithm~\ref{alg:running_mean_variance} to obtain scaled bonus $\beta_{h}^{m}$ and updated $\mu$, $\nu^{2}$, and $\mathcal{N}$

   \vspace{5pt}
    \State Reshape the reward $\tilde{r}_{h}^{m} = r_{h}^{m} + \alpha \beta_{h}^{m}$
    
    \vspace{2pt}
    \State Update the Gram matrix $\boldsymbol{A}$ by Equation~\ref{eq:variance_update_}

   \EndFor

   \vspace{2pt}
   \State Sample a batch $\mathcal{B} = \left\{ s_{h}^t, a_{h}^t, s_{h+1}^t, \tilde{r}_{h}^t \mid h \in [1, H-1], t \leq m \right\}$

   \vspace{2pt}
   \State Calculate the IDN loss $L_{f}$ by Equation~\ref{eq:idn_loss} and the Bellman loss $L_{B}$ by Equation~\ref{eq:bellman} on batch $\mathcal{B}$

   \vspace{2pt}
   \State Update value function $Q(\cdot, \cdot)$ and network $f$ jointly by minimizing the combined loss $\min (L_{f} + L_{B})$ with step size $\eta$

   \State Update the policy $\pi(\cdot)$ based on the latest value function $Q(\cdot, \cdot)$
   \EndFor

\end{algorithmic}
\end{algorithm}

\section{Theoretical Analysis}
\label{sec:theory}

Under the optimal policy $\pi^{*}$, let the corresponding action-value function $Q^{*}$ be structured as in Equation~\ref{eq:q_fun} and parameterized by $\boldsymbol{\theta}^{*}$ and $\boldsymbol{W}^{*}$. In Algorithm~\ref{alg:CAE+}, the policy executed in episode $m \in \left[ M \right]$ is denoted by $\pi_{m}$, and its associated action-value function is $Q^{\pi_{m}}$. Cumulative regret of Algorithm~\ref{alg:CAE+} is given in Definition~\ref{def:regret}.

\begin{definition}
\label{def:regret}
    \textbf{Cumulative Regret}. After $M$ episodes of interactions with the environment, the cumulative regret of \cae or \caep is defined as Equation~\ref{eq:regret}, where $u_{1}^{m}$ is the optimal action at state $s_{1}^{m}$ generated by policy $\pi^{*}$ while $a_{1}^{m}$ is that selected by the executed policy $\pi_{m}$.

\vspace{-5pt}
\begin{small}
    \begin{equation}
    \textup{R}_{M} = \sum_{m=1}^{M} Q^{*} (s_{1}^{m}, u_{1}^{m}) - Q^{\pi_{m}} (s_{1}^{m}, a_{1}^{m})
    \label{eq:regret}
    \end{equation}
\end{small}

\end{definition}

The cumulative regret measures the gap between the optimal return and the actual return accumulated over $M$ episodes. As discussed earlier, \cae draws inspiration from Neural-LinUCB~\citep{shallow_exploration} and Neural-LinTS~\citep{Neural-LinTS}. While Neural-LinUCB is supported by theoretical guarantees, Neural-LinTS has so far only been validated empirically. In this work, we complete the regret analysis for Neural-LinTS and subsequently derive the regret bound of \cae. Before stating Theorem~\ref{theorem:cae_regret}, we introduce the standard assumptions used in the literature on \textit{deep representation and shallow exploration}~\citep{shallow_exploration} as Assumption~\ref{ass:ass_1} and Assumption~\ref{ass:ass_3}. Assumption~\ref{ass:ass_2} is deferred to the Appendix because it requires more detailed notation.

\begin{assumption}
    \label{ass:ass_1}
    Assume that $\left\| (s; a) \right\|_{2} = 1$ for $\forall s \in \mathcal{S}, \forall a \in \mathcal{A}$. The entries of $(s;a)$ satisfy Equation~\ref{eq:ass_1}, where $j=1, 2, \ldots, \frac{D}{2}$ and $D$ represents the dimension of $(s;a)$.
    \begin{align}
        (s;a)_{j} = (s;a)_{j + \frac{D}{2}}
        \label{eq:ass_1}
    \end{align}
\end{assumption}
Note that even if the original state-action pairs $(s; a)$ do not satisfy this assumption, they can be preprocessed by augmenting them to $(s; a; s; a)$ and applying appropriate scaling to ensure the assumption holds.

\begin{assumption}
    \label{ass:ass_3}
    The neural tangent kernel $\boldsymbol{H}$ of the action-value network is positive definite.
\end{assumption}
Neural tangent kernel $\boldsymbol{H}$ is defined in accordance with a recent line of research~\citep{neural_tangent_kernel, wide_net, shallow_exploration} and is essential for the analysis of overparameterized neural networks. A detailed discussion of these assumptions is deferred to Appendix~\ref{app:assumptions}, where we show that they are mild and commonly adopted in the literature. In the following, we present the regret guarantee of \cae.

\begin{theorem}
    \label{theorem:cae_regret}
    Assume that Assumption~\ref{ass:ass_1}, Assumption~\ref{ass:ass_3}, and Assumption~\ref{ass:ass_2} hold, and that $\left \| \boldsymbol{\theta}^{*} \right \|_{2} \leq 1$ as well as $\left \| \phi(s, a) \right \|_{2} \leq 1$ for $\forall s \in \mathcal{S}, \forall a \in \mathcal{A}$. For any $\sigma \in (0, 1)$, assume that the number of parameters $\iota$ in each of the $L$ layers of $\phi(\cdot, \cdot)$ satisfies $\iota = \textup{poly} (L, D, \frac{1}{\sigma}, \log \frac{M \left| \mathcal{A} \right|}{\sigma})$, where $\left| \mathcal{A} \right|$ denotes the size of the action space and $\textup{poly} (\cdot)$ denotes a polynomial function of the listed variables. Set the exploration coefficient and step size as:

\begin{small}
    \begin{align}
    \notag
        & \alpha = \sqrt{2 (D \cdot \log (1 + \frac{M \cdot \textup{log} \left| \mathcal{A} \right|}{\lambda}) - \log \sigma)} + \sqrt{\lambda} \\ \notag
        & \eta \leq C_{1} (\iota \cdot D^{2} M^{\frac{11}{2}} L^{6} \cdot \log \frac{M \left| \mathcal{A} \right|}{\sigma})^{-1}
    \end{align}
\end{small}
    
    then with probability at least $1-\sigma$, it holds that:

    \vspace{-10pt}
    \begin{small}
    \begin{equation}
    \label{eq:regret_bound}
        \textup{R}_{M} \leq C_{2} \alpha H \sqrt{M D \textup{log} (1 + \frac{M}{\lambda D})} + C_{4} H \sqrt{MH \textup{log} \frac{1}{\sigma}} + \frac{C_{3} H L^{3} D^{\frac{5}{2}} M \sqrt{\log \iota \log \left( \frac{1}{\sigma} \right) \log \left( \frac{M \left| \mathcal{A} \right|}{\sigma} \right)} \left \| \boldsymbol{q} - \tilde{\boldsymbol{q}} \right \|_{\boldsymbol{H}^{-1}}}{\iota^{\frac{1}{6}}} \notag
    \end{equation}
    \end{small}
    
    where $C_{1}, C_{2}, C_{3}, C_{4}$ are constants; $\boldsymbol{q}$ and $\tilde{\boldsymbol{q}}$ are the target value vector and the estimated value vector of the action-value network, respectively. More discussions of these notations are in Appendix~\ref{app:long_alg} and Appendix~\ref{app:lemma}.
\end{theorem}

Specifically, we assume $\left \| \boldsymbol{\theta}^{*} \right \|_{2} \leq 1$ and $\left \| \phi(s, a) \right \|_{2} \leq 1$ to make the bound scale-free. From this theorem, we can conclude that the upper bound of the cumulative regret grows sub-linearly with the number of episodes $M$, \ie, $\widetilde{O} (\sqrt{M})$ where $\widetilde{O} (\cdot)$ hide constant and logarithmic dependence of $M$, indicating that the executed policy improves over time. Since MAB techniques are applied to the linear layer on top of the embedding layers, the cumulative regret naturally consists of two components, i.e., the exploration regret from the linear layer and the error induced by the network's estimation, which appears as the last term in the regret bound. It involves a trade-off between $M$ and $\iota$. Moreover, as the estimation error $\left \| \boldsymbol{q} - \tilde{\boldsymbol{q}} \right \|_{\boldsymbol{H}^{-1}}$ decreases over time, this term typically becomes negligible.

\section{Experiment}
\label{sec:experiment}

\paragraph{Benchmarks.} In our experiments, we evaluate \cae and \caep on MuJoCo, MiniHack, and Habitat, which correspond to dense-reward, sparse-reward, and reward-free settings, respectively.

\paragraph{Baselines.} We evaluate our approach against \textbf{nine} baselines, \ie, SAC~\citep{SAC}, PPO~\citep{PPO}, TD3~\citep{TD3}, DSAC~\citep{DSAC, DSAC-T}, ICM~\citep{ICM}, RND~\citep{RND}, RIDE~\citep{RIDE}, NovelD~\citep{NovelD}, and E3B~\citep{E3B, E3B_study}.

For \textbf{MuJoCo} tasks, we evaluate SAC, PPO, TD3, and DSAC, both with and without \cae. Notably, the other baselines are excluded from the MuJoCo experiments, as they are rarely applied to dense reward settings. Since \cae introduces no additional parameters, it is meaningful to assess whether it can improve existing RL algorithms without increasing training overhead.

For \textbf{MiniHack and Habitat} tasks, we adopt IMPALA~\citep{IMPALA} and PPO as the base RL algorithms, respectively, following the standard configurations in the open-source codebases. We compare \cae and \caep against ICM, RND, RIDE, NovelD, and E3B. Since E3B achieves state-of-the-art performance on both MiniHack and Habitat, we report only the results of E3B. For results of ICM, RND, RIDE, and NovelD, refer to the E3B paper~\citep{E3B, E3B_study}.

All the experiments are based on open-source codebases from E3B, CleanRL~\citep{cleanrl}, DSAC, and Habitat-lab~\citep{Habitat, Habitat_2, Habitat_3}. The core code and hyperparameters are provided in Appendices~\ref {app:code} and~\ref{app:experiment}, respectively. All experiments were conducted on an Ubuntu 22.04 LTS system equipped with a 13th Gen Intel Core i9-13900KF CPU and an NVIDIA RTX 4090 GPU.

\subsection{MuJoCo tasks with Dense Rewards}
\label{subsec:mujoco}

\begin{table}[H]
\caption{Experimental results after $1e6$ environment interaction steps on MuJoCo-v4 tasks, except for the \textit{Humanoid} task, which is evaluated after $4e6$ steps. \textit{RPI} represents the \textbf{R}elative \textbf{P}erformance \textbf{I}mprovement achieved by \cae. Refer to Appendix~\ref{app:experiment} for the corresponding experimental curves.}
\label{tab:mujoco_v4_results}

\begin{center}
\resizebox{\textwidth}{!}{
\begin{tabular}{l|ccc|ccc|ccc}
        \toprule
        \diagbox{Env}{Alg.} & PPO & PPO $+$ \cae & \textit{RPI \%} & TD3 & TD3 $+$ \cae & \textit{RPI \%} & SAC & SAC $+$ \cae & \textit{RPI \%} \\
        \midrule
        Swimmer & $99 \pm 11.5$ & $\mathbf{107 \pm 6.47}$ & $8.08 $ & $78 \pm 15.4$ & $\mathbf{130 \pm 14.2}$ & $66.7$ & $61 \pm 35.2$ & $\mathbf{161 \pm 26.9}$ & $164$ \\
        Hopper & $\mathbf{2503 \pm 786.6}$ & $2453 \pm 673.2$ & $-2.00$ & $3044 \pm 574.0$ & $\mathbf{3244 \pm 226.3}$ & $6.57$ & $2908 \pm 600.8$ & $\mathbf{3188 \pm 485.2}$ & $9.63$ \\
        Walker2d & $3405 \pm 842.0$ & $\mathbf{3554 \pm 928.0}$ & $4.38$ & $3764 \pm 234.4$ & $\mathbf{4251 \pm 567.1}$ & $12.9 $ & $4362 \pm 405.5$ & $\mathbf{4742 \pm 484.4}$ & $8.71$ \\
        Ant & $1762 \pm 540.0$ & $\mathbf{2378 \pm 843.4}$ & $35.0$ & $3492 \pm 1745.7$ & $\mathbf{5074 \pm 519.3}$ & $45.3$ & $4846 \pm 1306.4$ & $\mathbf{5482 \pm 511.9}$ & $13.1$ \\
        HalfCheetah & $2636 \pm 1344.3$ & $\mathbf{3104 \pm 926.0}$ & $17.8$ & $10316 \pm 193.8$ & $\mathbf{10473 \pm 563.4}$ & $1.52 $ & $11154 \pm 457.1$ & $\mathbf{11587 \pm 418.3}$ & $3.88$ \\
        Humanoid & $619 \pm 92.1$ & $\mathbf{646 \pm 127.1}$ & $4.36$ & $5973 \pm 257.7$ & $\mathbf{6275 \pm 483.2}$ & $5.06$ & $\mathbf{5261 \pm 186.4}$ & $5218 \pm 228.4$ & $-0.82 $ \\
        \midrule
        \textit{RPI Mean} & - & - & $11.3 $ & - & - & $23.0 $ & - & - & $33.1 $ \\
        \bottomrule
    \end{tabular}
}
\vspace{-10pt}
\end{center}
\end{table}

MuJoCo testbed is a widely used physics-based simulation environment. MuJoCo provides a suite of continuous control tasks where agents must learn to perform various actions, such as locomotion, manipulation, and balancing, within simulated robotic environments. Since comparisons among state-of-the-art RL baselines, such as PPO, SAC, TD3, and DSAC on MuJoCo, have been extensively covered in previous studies, our focus is on investigating how \cae can enhance these algorithms.

\begin{wrapfigure}{r}{0.5\textwidth}
\vspace{-1.1\baselineskip}
  \captionsetup{type=table}
  \caption{Experimental results after $1e6$ interaction steps on MuJoCo-v3, except for the \textit{Humanoid} task, which is evaluated after $2e6$ steps.}
  \label{tab:mujoco_v3_results}
  \centering
  \scalebox{0.8}{
  \begin{tabular}{l|ccc}
        \toprule
        \diagbox{Env}{Alg.} & DSAC & DSAC $+$ \cae & \textit{RPI \%} \\
        \midrule
        Swimmer & $131 \pm 14.8$ & $\mathbf{150 \pm 7.96}$ & $14.5$ \\
        Hopper & $2417 \pm 541.6$ & $\mathbf{2845 \pm 594.5}$ & $17.7$ \\
        Walker2d & $5550 \pm 624.0$ & $\mathbf{6069 \pm 422.1}$ & $9.35$ \\
        Ant & $5912 \pm 809.7$ & $\mathbf{6305 \pm 322.7}$ & $6.65$ \\
        HalfCheetah & $16036 \pm 439.1$ & $\mathbf{16338 \pm 249.}1$ & $1.88$ \\
        Humanoid & $10059 \pm 996.1$ & $\mathbf{10333 \pm 1104.4}$ & $2.72$ \\
        \midrule
        \textit{RPI Mean} & - & - & $8.80$ \\
        \bottomrule
    \end{tabular}
}
\end{wrapfigure}
Results are summarized in Table~\ref{tab:mujoco_v4_results}, averaged over random seeds $\left\{ 1, 2, 3, 4, 5 \right\}$. The results show that \cae consistently improves the performance of PPO, TD3, and SAC across most MuJoCo tasks. Notably, TD3 and SAC augmented with \cae achieve substantially better performance on the \textit{Swimmer} task. Although this task is not considered particularly challenging, the standalone TD3 and SAC baselines obtain relatively limited returns.

In Table~\ref{tab:mujoco_v3_results}, we summarize the performance of DSAC with and without \cae. These experiments are conducted on MuJoCo-v3, rather than MuJoCo-v4, because the open-source DSAC codebase is built upon MuJoCo-v3. As shown, incorporating \cae consistently improves the performance of DSAC across the MuJoCo benchmark tasks.

\subsection{MiniHack tasks with Sparse Rewards}
\label{subsec:minihack}

\begin{wrapfigure}{r}{0.5\textwidth}
\vspace{-1.8\baselineskip}
  \centering
  \subcaptionbox{Horn-Restricted\label{fig:freeze_horn_restricted}}[0.45\linewidth]{
    \includegraphics[width=\linewidth, trim={0.5cm 0.5cm 3.5cm 1.5cm}, clip]{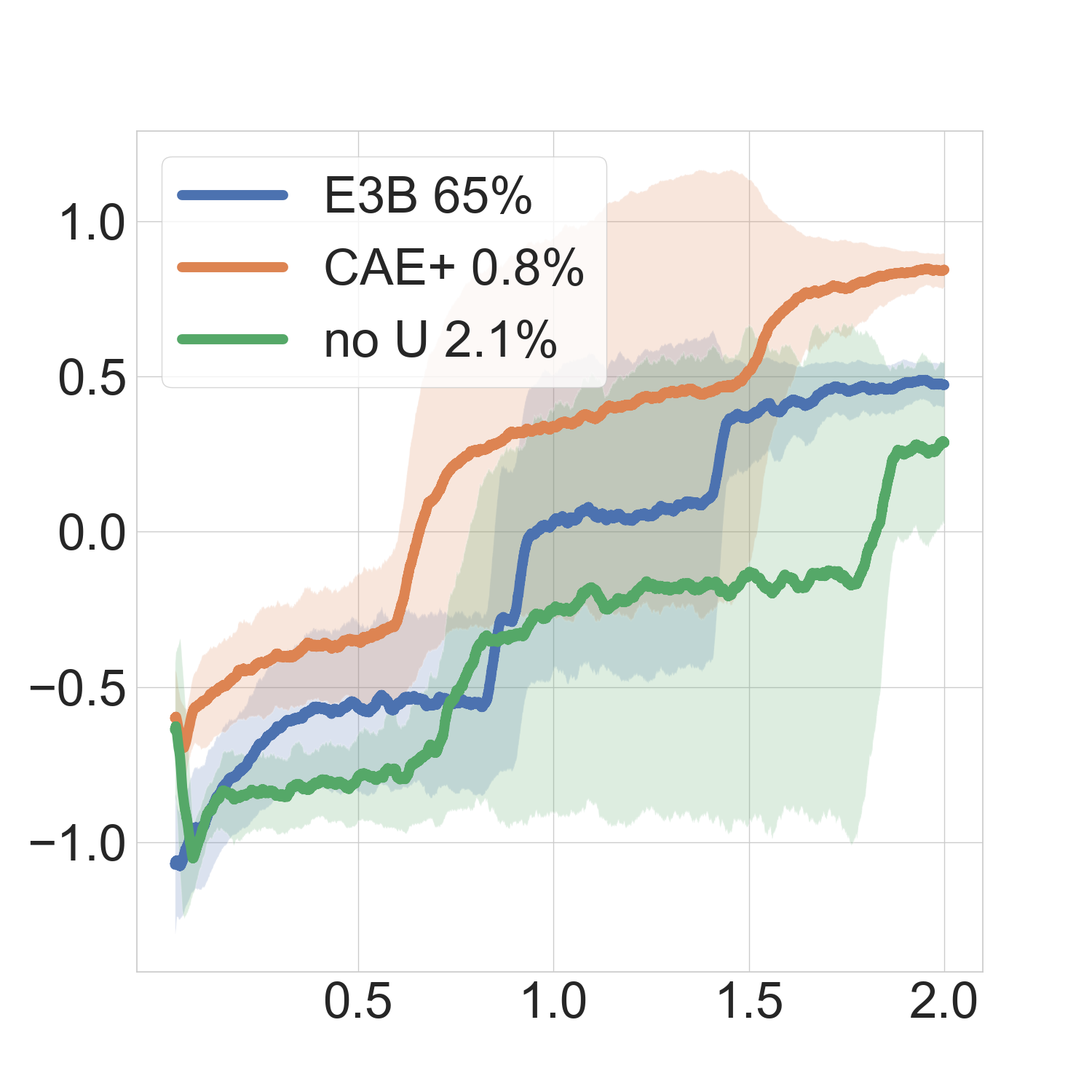}
  }
  \hfill
  \subcaptionbox{N6-Locked\label{fig:multiroom-n6-locked}}[0.45\linewidth]{
    \includegraphics[width=\linewidth, trim={0.5cm 0.5cm 3.5cm 1.5cm}, clip]{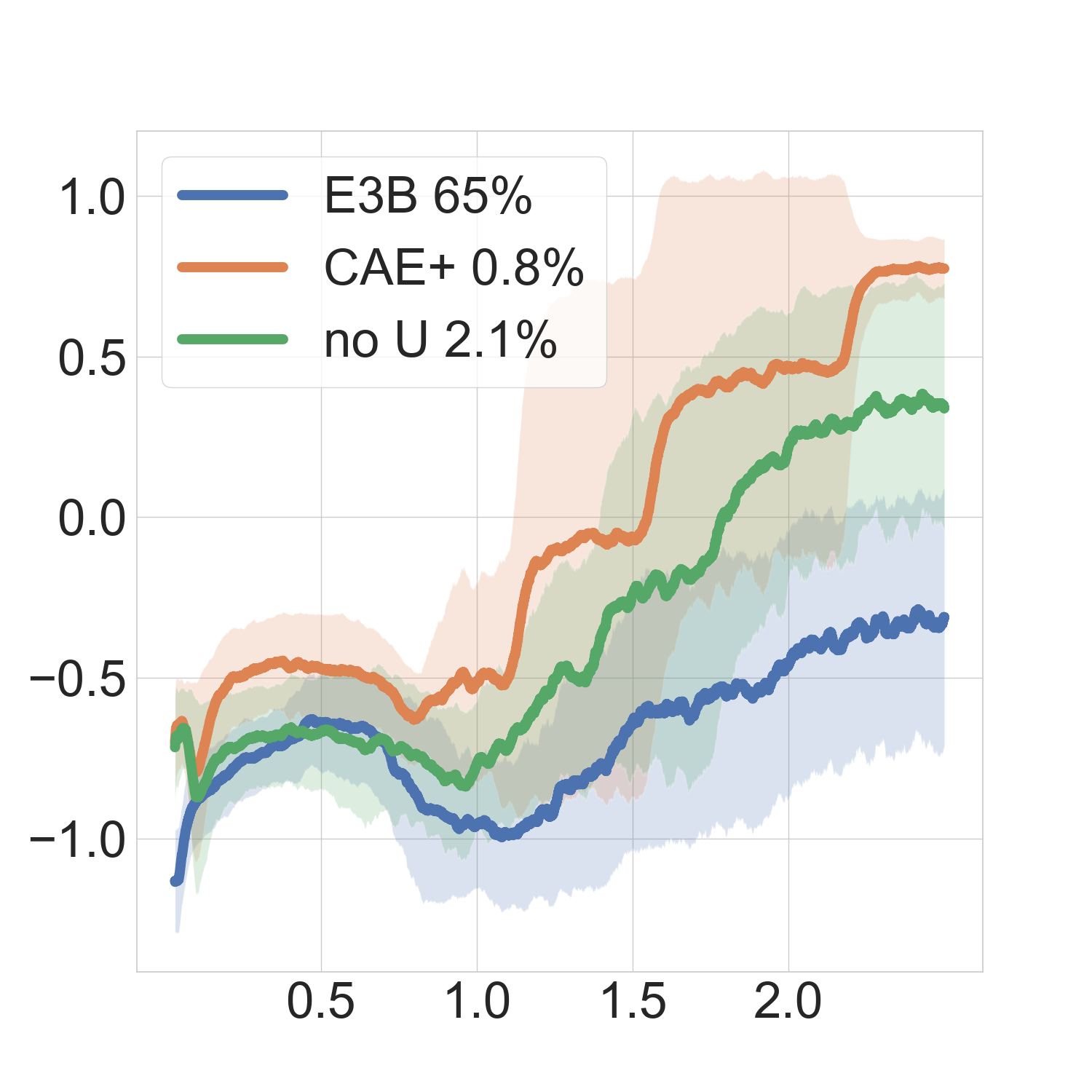}
  }
  \caption{Ablation study to $\boldsymbol{U}$ on MiniHack. Horizontal axis denotes the steps in multiples of $1e7$.}
  \label{fig:ablation_study_minihack}
\end{wrapfigure}
MiniHack~\citep{Minihack} is built on the NetHack Learning Environment~\citep{nethack}, a challenging video game where an agent navigates procedurally generated dungeons to retrieve a magical amulet. MiniHack tasks present a diverse set of challenges, such as locating and utilizing magical objects, traversing hazardous environments like lava and monsters. These tasks are characterized by sparse rewards, and the state provides a wealth of information, including images, texts, and more, though only a subset is relevant to each specific task.

As shown in Table~\ref{tab:empirical_algorithm_comparison}, \caep introduces only a $0.8\%$ increase in parameters compared to the base RL algorithm, IMPALA. In contrast, other exploration baselines, such as RIDE and E3B, require $60\% - 80\%$ additional parameters, underscoring the lightweight design of \caep. Experimental results for E3B and \caep, averaged over seeds $\left\{ 1, 2, 3 \right\}$, are summarized in Table~\ref{tab:minihack_results}. Performance is evaluated on nine representative tasks, \ie, five \textit{navigation} tasks and four \textit{skill} tasks. The results demonstrate that \caep consistently outperforms E3B across tasks. In particular, on challenging tasks such as \textit{N6-Locked} and \textit{LavaCross}, \caep achieves substantial performance improvements of $348\%$ and $282\%$, respectively, highlighting its strong exploration capabilities.

\paragraph{Ablation study to \cae on MiniHack.} Results of \cae on MiniHack tasks are provided in Table~\ref{tab:minihack_results}. As shown, \cae successfully solves a subset of tasks and even surpasses \caep in certain cases. However, it struggles to achieve positive performance in others, such as \textit{N6-Locked}, \etc, which pose significant exploration challenges. This limitation stems from the difficulty of training effective value networks in complex environments, adversely affecting exploration reliant on them.

\paragraph{Ablation study to the transformation $\boldsymbol{U}$.} Additionally, we present experimental results for \caep without the transformation matrix $\boldsymbol{U}$ in the auxiliary network $f$. As shown in Figure~\ref{fig:ablation_study_minihack}, \caep without $\boldsymbol{U}$ occasionally outperforms E3B, though there are instances where it does not. Importantly, it consistently underperforms the full \caep method with $\boldsymbol{U}$. Moreover, \caep introduces more additional parameters without matrix $\boldsymbol{U}$, specifically $2.1 \%$.

\paragraph{Wall-clock running time analysis.} RL experiments involve both network training and environment interaction on CPUs; therefore, wall-clock time is typically not regarded as a standard evaluation metric. Nevertheless, to provide a clear picture of the practical efficiency of \cae, we report wall-clock running times on MiniHack tasks. \cae requires approximately 17 hours to complete training per task, whereas E3B takes around 22 hours, demonstrating the superior efficiency of \cae.

\vspace{-5pt}
\begin{table}[H]
\caption{Experimental results after $2e7-3e7$ interaction steps on nine MiniHack tasks, whose detailed descriptions are in Appendix~\ref{app:minihack}. \textit{RPI} quantifies the improvement of \caep compared to E3B. Since E3B is the state-of-the-art on MiniHack~\citep{E3B}, and \caep consistently outperforms it, we conclude that \textbf{\caep $\succ$ E3B, ICM, RND, RIDE, NovelD}. Refer to Appendix~\ref{app:experiment} for experimental figures.}
\label{tab:minihack_results}

\begin{center}
\resizebox{\textwidth}{!}{
\begin{tabular}{c|ccccccccc}
        \toprule
        \diagbox{Alg.}{Env} & N4 & N4-Locked & N6 & N6-Locked & N10-OD & Horn & Random & Wand & LavaCross 
        \\
        \midrule
        E3B  & $0.86 \pm 0.010$ & $0.72 \pm 0.090$ & $0.75 \pm 0.019$ & $-0.31 \pm 0.403$ & $0.71 \pm 0.042$ & $0.47 \pm 0.071$ & $0.57 \pm 0.071$ & $0.49 \pm 0.201$ & $0.22 \pm 0.412$ \\
        \cae  & $0.93 \pm 0.014$ & $0.84 \pm 0.041$ & $-0.46 \pm 0.193$ & $-0.37 \pm 0.310$ & $-0.84 \pm 0.216$ & $\mathbf{0.92 \pm 0.022}$ & $\mathbf{0.93 \pm 0.022}$ & $\mathbf{0.93 \pm 0.030}$ & $0.16 \pm 0.394$ \\
        \caep  & $\mathbf{0.97 \pm 0.006}$ & $\mathbf{0.87 \pm 0.017}$ & $\mathbf{0.94 \pm 0.014}$ & $\mathbf{0.77 \pm 0.093}$ & $\mathbf{0.86 \pm 0.023}$ & $0.84 \pm 0.055$ & $0.80 \pm 0.040$ & $0.65 \pm 0.131$ & $\mathbf{0.84 \pm 0.024}$ \\
        \midrule
        \textit{RPI \%} & $12.79 $ & $20.83 $ & $25.3 $ & $348.39 $ & $21.13 $ & $78.72 $ & $40.35 $ & $32.65 $ & $281.82 $ \\
        \bottomrule
    \end{tabular}
}
\vspace{-10pt}
\end{center}
\end{table}

\subsection{Reward-free Habitat task}
\label{subsec:habitat}

Habitat~\citep{Habitat, Habitat_2, Habitat_3} is a platform for embodied AI research that supports agent navigation and interaction within simulations of real-world indoor environments. The experiments in this subsection are designed to evaluate exploration capabilities in visually rich, realistic settings. We employ the HM3D dataset~\citep{HM3D}, which comprises high-quality reconstructions of $1,000$ diverse indoor spaces. Agents are trained \textbf{solely} with the generated exploration bonuses on the \textit{PointNav} task. Evaluation is performed in unseen test environments by measuring the proportion of the environment revealed.

\begin{wrapfigure}{r}{0.5\textwidth}
\vspace{-1.1\baselineskip}
  \captionsetup{type=table}
  \caption{Experimental results after $1e8$ interaction steps on Habitat.}
  \label{tab:habitat_results}
  \centering
  \scalebox{0.9}{
  \begin{tabular}{l|cc|c}
        \toprule
        E3B & \cae & \caep & \textit{RPI \%} \\
        \midrule
        $0.51 \pm 0.097$ & $0.49 \pm 0.102$ & $\mathbf{0.69 \pm 0.074}$ & $35.29 $  \\
        \bottomrule
    \end{tabular}
}
\end{wrapfigure}
Results for E3B, \cae, and \caep, averaged over three seeds $\left\{ 1, 2, 3 \right\}$, are summarized in Table~\ref{tab:habitat_results}, highlighting the superior performance of \caep. Since E3B is the state-of-the-art on Habitat, and \caep further outperforms E3B, it follows that \textbf{\caep $\succ$ E3B, ICM, RND, RIDE, NovelD}. Figure~\ref{fig:habitat} provides an illustration of a specific case, where the trained policy of \caep explores a larger portion of the environment compared to that of E3B.

\begin{figure}[H]
  \centering
  \begin{subfigure}[t]{0.3\textwidth}
    \includegraphics[width=\textwidth, trim={0 0 0 0}, clip]{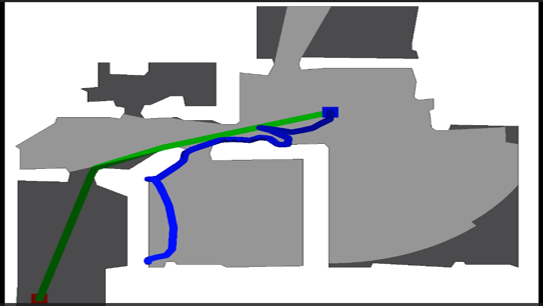}
    \caption{\caep reveals $0.67$ of the map}
    \label{fig:habitat_CAEP-n4}
  \end{subfigure}
  \hfill
  \begin{subfigure}[t]{0.3\textwidth}
    \includegraphics[width=\textwidth, trim={0 0 0 0}, clip]{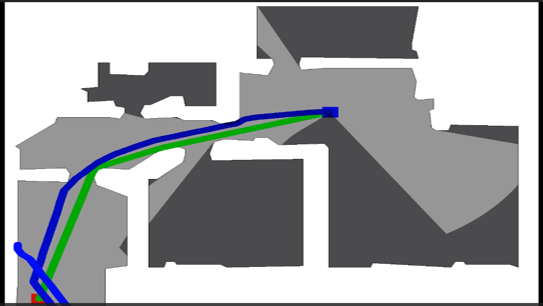}
    \caption{E3B reveals $0.53$ of the map}
    \label{fig:habitat_E3B}
  \end{subfigure}
  \hfill
  \begin{subfigure}[t]{0.3\textwidth}
    \includegraphics[width=\textwidth, trim={0 0 0 0}, clip]{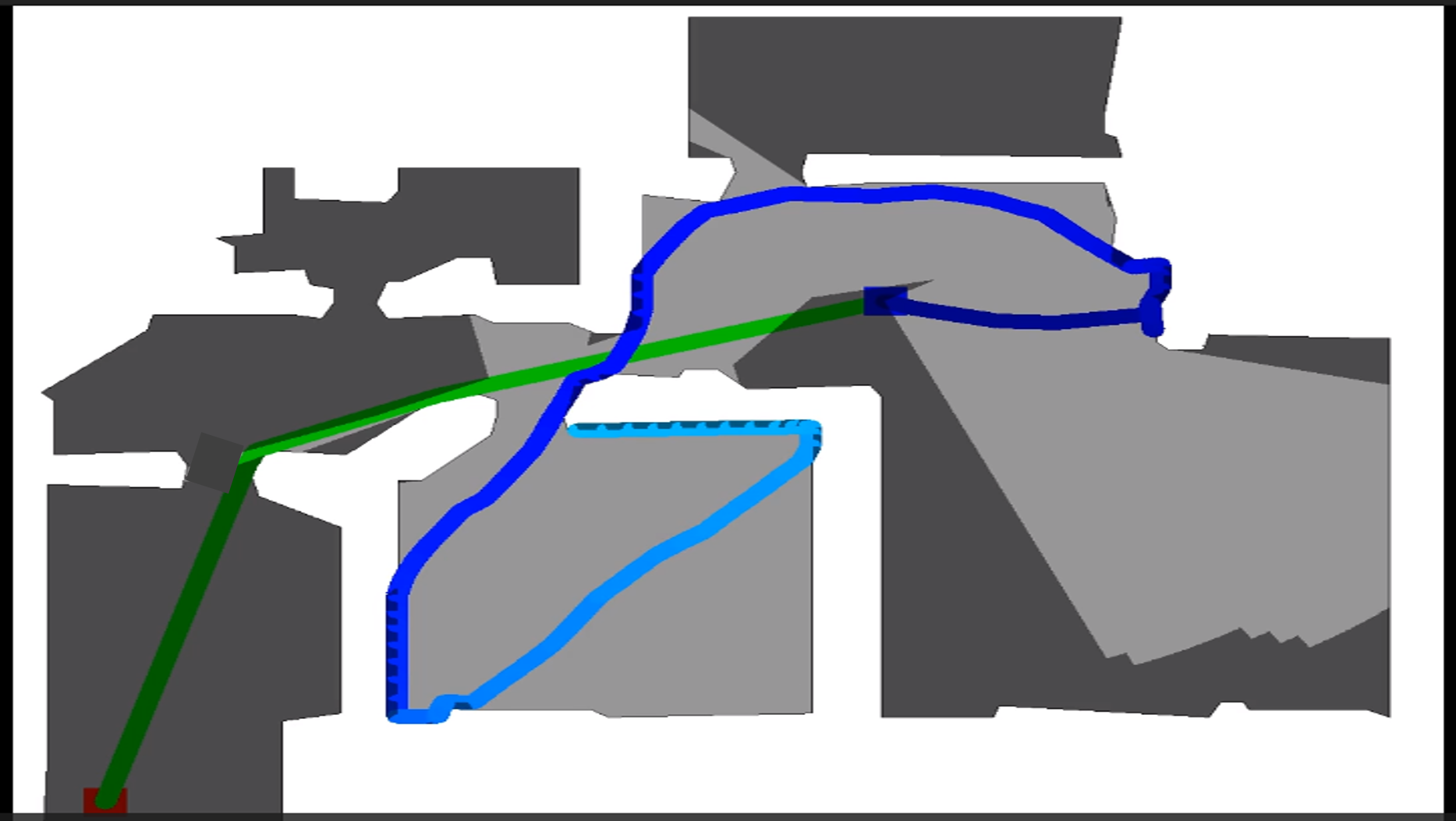}
    \caption{\cae reveals $0.51$ of the map}
    \label{fig:habitat_CAE}
  \end{subfigure}
  \caption{Trajectories of the learned policies on a Habitat environment unseen during training.}
  \label{fig:habitat}
\vspace{-5pt}
\end{figure}

\section{Conclusion}
\label{sec:conclusion}

In this paper, we propose \cae, a lightweight exploration method that seamlessly integrates with existing RL algorithms without adding parameters. \cae exploits the value network’s embedding layers to guide exploration, requiring no changes to the rest of the algorithm. A simple scaling strategy ensures stable learning. For sparse-reward tasks, we extend \cae to \caep by adding a small auxiliary network, accelerating learning with minimal overhead. We provide theoretical guarantees in the form of sub-linear regret bounds. Extensive experiments show that \cae and \caep significantly outperform baseline methods across dense-reward, sparse-reward, and reward-free settings.

\subsection*{Broader Impact Statement}
\label{sec:broader_impact}

\cae and \caep bridge the gap between provably efficient and practically successful exploration methods. While theoretically grounded approaches often suffer from scalability and applicability issues, empirically driven methods typically lack theoretical guarantees and require extensive parameter training. Inspired by the \textit{deep representation and shallow exploration} paradigm, this work proposes a novel framework that decomposes value networks in deep RL and repurposes their embedding layers for exploration, thereby achieving theoretical guarantees with minimal or no additional parameter training.

This framework contributes to the development of sample-efficient and interpretable exploration methods in deep RL, potentially accelerating progress in applications such as robotics, recommendation systems, autonomous decision-making systems, and large language models. Moreover, the lightweight nature of the method makes it suitable for deployment in resource-constrained settings. Future research could focus on integrating a broader spectrum of MAB techniques, evaluating the robustness across a wide range of tasks, and improving the computational efficiency of uncertainty estimation.

\subsubsection*{Acknowledgments}
The first author thanks Ping Huang and Hanfang Zhang for their assistance with configuring several open-source baselines, and Shuo Chen and Siyuan Qi for their substantial help in revising and improving the manuscript.

This work is supported by the National Natural Science Foundation of China (No. 62506041).

\bibliography{main}
\bibliographystyle{tmlr}

\newpage
\appendix

\section{Implementation}
\label{app:code}

Our experiments are based on the following open-source codebases:

\begin{itemize}[leftmargin=1em]
    \item E3B~\citep{E3B}: \href{https://github.com/facebookresearch/e3b}{https://github.com/facebookresearch/e3b}
    \item CleanRL~\citep{cleanrl}: \href{https://github.com/vwxyzjn/cleanrl}{https://github.com/vwxyzjn/cleanrl}
    \item DSAC~\citep{DSAC, DSAC-T}: \href{https://github.com/Jingliang-Duan/DSAC-v2}{https://github.com/Jingliang-Duan/DSAC-v2}
    \item Habitat-lab~\citep{Habitat_2, Habitat_3}: \href{https://github.com/facebookresearch/habitat-lab}{https://github.com/facebookresearch/habitat-lab}
\end{itemize}

\vspace{10pt}
\lstset{
    language=Python,
    basicstyle=\scriptsize\ttfamily,        
    keywordstyle=\color{blue},   
    commentstyle=\color{blue},   
    stringstyle=\color{red},     
    emph={hello_world},          
    emphstyle=\color{magenta},   
    numbers=left,               
    numberstyle=\tiny,           
    stepnumber=1,                
    frame=single,                
    breaklines=true,
    abovecaptionskip=10pt,
    belowcaptionskip=10pt
}

In Listing~\ref{code:core_code}, we present the core code of \cae, while the rest of the deep RL algorithm remains unchanged. As shown, \cae is simple to implement, integrates seamlessly with any existing RL algorithm, and requires no additional parameter learning beyond what is already in the RL algorithm.

\begin{lstlisting}[caption={\cae core code}, label={code:core_code}, numbers=none]
A_inverse = torch.inverse(A)
phi = q_net.get_emb(torch.Tensor(obs), torch.Tensor(action)).squeeze().detach()
bouns = np.sqrt(torch.matmul(phi.T, torch.matmul(A_inverse, phi)).item())
reward += scaled_bonus
A += torch.outer(phi, phi)
\end{lstlisting}

In Listing~\ref{code:core_code_caep}, we present the additional code of \caep alongside that of \cae. As shown, \caep minimizes an additional loss, specifically the Inverse Dynamics Network (IDN) loss, in addition to the losses from the original RL algorithm.

\begin{lstlisting}[caption={Additional core code of \caep}, label={code:core_code_caep}, numbers=none]
phi = q_net.get_emb(torch.Tensor(batch['obs']), torch.Tensor(default_actions))
predict_action = inverse_dynamic_net(phi[:-1], phi[1:])
idn_loss = compute_inverse_dynamics_loss(predict_action, batch['action'][:-1])

def compute_inverse_dynamics_loss(action, true_action):
    loss=F.nll_loss(F.log_softmax(action, dim=-1), true_action, reduction='none')
    return loss

\end{lstlisting}

\newpage
\section{Long version of \cae}
\label{app:long_alg}

For a more comprehensive theoretical analysis, we present the theoretical version of \cae in Algorithm~\ref{alg:theoretical_algorithm}, where, for conciseness, we denote the embedding of the state-action pair at time step $h$ in episode $m$ by $\boldsymbol{\phi}_{h}^{m} = \phi(s_{h}^{m}, a_{h}^{m} | \boldsymbol{W}_{h}^{m})$. Following the standard notation in the literature on provable algorithms~\citep{provable_linear_function, provable_kernel_nn}, function parameters are assumed to be independent across different time steps $h \in \left[ H \right]$, and this convention is also adopted in Algorithm~\ref{alg:theoretical_algorithm}. As shown in the algorithm, the parameters $\boldsymbol{\theta}_{h}$ and $\boldsymbol{W}_{h}$ are updated iteratively to learn the two decomposed components of the action-value function in Equation~\ref{eq:q_fun} via the Bellman equation. Specifically, $\boldsymbol{\theta}_{h}$ is updated in Line~9 using its closed-form solution~\citep{LinUCB}, while the embedding network $\phi(\cdot, \cdot \mid \boldsymbol{W}_{h})$ is kept fixed. Subsequently, in Line~10, the embedding network $\phi(s, a \mid \boldsymbol{W}_{h})$ is updated with $\boldsymbol{\theta}_{h}$ held fixed. In this step, $\eta$ denotes the learning rate, $L_{h}^{m}$ is the Bellman loss, and $s_{h}^{t}, a_{h}^{t}, r_{h}^{t}$ for $\forall t \in \left[ m \right]$ denote the collected historical experiences.

\begin{algorithm}[h]
   \caption{DQN~\citep{DQN} enhanced with \cae}
   \label{alg:theoretical_algorithm}
\begin{algorithmic}[1]
\footnotesize
   \State {\bfseries Input:} Ridge parameter $\lambda > 0$, exploration parameter $\alpha \geq 0$, episode length $H$, episode number $M$, step size $\eta$, and the discount factor $\gamma$
   
   \State {\bfseries Initialize:} Gram matrix $\boldsymbol{A}_{h}^{1} = \lambda \boldsymbol{I}$, $\boldsymbol{b}_{h}^{1} = \boldsymbol{0}$, parameters $\boldsymbol{\theta}_{h}^{1} \sim \frac{1}{D} N(\boldsymbol{0}, \boldsymbol{I})$, networks $\phi (\cdot, \cdot | \boldsymbol{W}_{h}^{1})$~\citep{shallow_exploration}, $Q_{h}^{1} = (\boldsymbol{\theta}_{h}^{1})^{\mathsf{T}} \phi(\cdot, \cdot | \boldsymbol{W}_{h}^{1}) $, and the target value-networks $\bar{Q}_{h}^{1} = Q_{h}^{1}$, for $\forall h \in \left[ H \right]$

   \vspace{5pt}
   \For{episode $m=1$ {\bfseries to} $M$}

   \State Sample the initial state of the episode $s_{1}^{m}$
   
   \For{step $h=1, 2, ..., H$}
   
   \State Execute action $a_{h}^{m} = \arg \max_{a} Q^{m}_{h} (s_{h}^{m}, a)$ and get the next state $s_{h+1}^{m}$ and reward $r_{h}^{m}$

   \vspace{5pt}
   \State Compute the target value $q_{h}^{m} = r_{h}^{m} + \gamma \cdot \max_{a} \bar{Q}_{h+1}^{m} (s_{h+1}^{m}, a)$

   \vspace{5pt}
   \State Update $\boldsymbol{A}_{h}^{m+1} = \boldsymbol{A}_{h}^{m} + \boldsymbol{\phi}_{h}^{m} (\boldsymbol{\phi}_{h}^{m})^{\mathsf{T}}$ and $\boldsymbol{b}_{h}^{m+1} = \boldsymbol{b}_{h}^{m} + q_{h}^{m} \boldsymbol{\phi}_{h}^{m} $

   \vspace{5pt}
   \State Update parameter $\boldsymbol{\theta}_{h}^{m+1} = (\boldsymbol{A}_{h}^{m+1})^{-1} \boldsymbol{b}_{h}^{m+1}$

   \vspace{2pt}
   \State Update the embedding network to $\phi (\cdot, \cdot | \boldsymbol{W}_{h}^{m+1})$ by performing a gradient step $\boldsymbol{W}_{h}^{m+1} = \boldsymbol{W}_{h}^{m} - \eta \nabla _{\boldsymbol{W}_{h}^{m}} L_{h}^{m}$ where
    
    \begin{align}
    L_{h}^{m} = \sum_{t=1}^{m} \left| (\boldsymbol{\theta}_{h}^{m+1})^{\mathsf{T}} \phi (s_{h}^{t}, a_{h}^{t} | \boldsymbol{W}_{h}^{m}) - r_{h}^{t} - \gamma \cdot \max_{a} \bar{Q}_{h+1}^{m} (s_{h+1}^{t}, a) \right|^{2} \notag
    \end{align}

    \vspace{5pt}
    \State \textcolor{red}{Obtain UCB-based uncertainty}
    \begin{align}
        \textcolor{red}{\beta_{h}^{m+1} (\cdot, \cdot) = \sqrt{(\phi (\cdot, \cdot | \boldsymbol{W}_{h}^{m+1}))^{\mathsf{T}} (\boldsymbol{A}_{h}^{m+1})^{-1} \phi(\cdot, \cdot | \boldsymbol{W}_{h}^{m+1})}} \notag
    \end{align}

    \State \textcolor{blue}{Obtain Thompson Sampling-based uncertainty}
    \begin{align}
        \textcolor{blue}{\boldsymbol{\Delta \theta}_{h}^{m+1} \sim N(0, (\boldsymbol{A}_{h}^{m+1})^{-1}) \Longrightarrow \beta_{h}^{m+1} (\cdot, \cdot) = (\boldsymbol{\Delta \theta}_{h}^{m+1})^{\mathsf{T}} \phi(\cdot, \cdot | \boldsymbol{W}_{h}^{m+1})} \notag
    \end{align}

    \State Approximate the action-value function
    \begin{align}
        Q_{h}^{m+1} (\cdot, \cdot) = (\boldsymbol{\theta}_{h}^{m+1})^{\mathsf{T}} \phi(\cdot, \cdot | \boldsymbol{W}_{h}^{m+1}) + \alpha \beta_{h}^{m+1} (\cdot, \cdot)\notag
    \end{align}

   \EndFor
   \State Update the target network $\bar{Q}_{h}^{m+1} (\cdot, \cdot) = Q_{h}^{m+1} (\cdot, \cdot), h \in \left[ H \right]$
   \EndFor

\end{algorithmic}
\end{algorithm}

Notably, $Q_{h}^{m} (s_{h}^{m}, a_{h}^{m})$ denotes the estimated value, whereas $q_{h}^{m}$ represents the corresponding target value at each time step $h$ in episode $m$. By concatenating the values over all time steps $h \in \left[ H \right]$ and episodes $m \in \left[ M \right]$, we construct the estimated value vector $\tilde{\boldsymbol{q}}$ and the target value vector $\boldsymbol{q}$ of the action-value network, as referenced in Theorem~\ref{theorem:cae_regret}.

\newpage
\section{Proof of Theorem~\ref{theorem:cae_regret}}
\label{app:proof}

In this section, we analyze the cumulative regret bound of Algorithm~\ref{alg:theoretical_algorithm}. As $\boldsymbol{U}$ is a straightforward linear transformation of the embeddings, it approximately preserves the theoretical guarantees of UCB-based and Thompson Sampling-based exploration strategies, thereby maintaining the rigor of \caep.

Before delving into the detailed theory, we first review the notation used in this appendix. Let $\pi^{*}$ denote the true optimal policy, and let $\pi_{m}$ denote the policy executed in episode $m \in \left[ M \right]$. More generally, for any policy—illustrated here with $\pi_{m}$—the relationship between the action-value function $Q^{\pi_{m}}$ and the corresponding maximum return $V^{\pi_{m}}$ can be expressed by:

\begin{align}
& V_{h}^{\pi_{m}} (s) = \max_{a} Q_{h}^{\pi_{m}} (s, a) \\[5pt]
& Q_{h}^{\pi_{m}} (s, a) = r_{h}(s, a) + \mathbb{E}_{s_{h+1} \sim \mathbb{P}_{h} (\cdot | s, a)} V_{h+1}^{\pi_{m}} (s_{h+1})
\end{align}

In Algorithm~\ref{alg:theoretical_algorithm}, the estimated action-value function at step $h$ in episode $m$ is denoted by $Q_{h}^{m}(s, a)$, with the corresponding state-value function represented as $V_{h}^{m}(s)$. For clarity of presentation, we introduce the following additional notations.

\begin{align}
    & (\mathbb{P}_{h} V_{h+1}^{m})(s_{h}^{m}, a_{h}^{m}) = \mathbb{E}_{s_{h+1}^{m} \sim \mathbb{P}_{h} (\cdot | s_{h}^{m}, a_{h}^{m})} V_{h+1}^{m} (s_{h+1}^{m}) \\[5pt]
    & \delta_{h}^{m} (s_{h}^{m}, a_{h}^{m}) = r_{h}^{m} + (\mathbb{P}_{h} V_{h+1}^{m})(s_{h}^{m}, a_{h}^{m}) - Q_{h}^{m} (s_{h}^{m}, a_{h}^{m}) \\[5pt]
    & \zeta _{h}^{m} = \Big[ V_{h}^{m} (s_{h}^{m}) - V_{h}^{\pi_{m}} (s_{h}^{m}) \Big] - \Big[ Q_{h}^{m} (s_{h}^{m}, a_{h}^{m}) - Q_{h}^{\pi_{m}} (s_{h}^{m}, a_{h}^{m}) \Big] \\[5pt]
    & \varepsilon_{h}^{m} = \Big[ (\mathbb{P}_{h} V_{h+1}^{m}) (s_{h}^{m}, a_{h}^{m}) - (\mathbb{P}_{h} V_{h+1}^{\pi_{m}}) (s_{h}^{m}, a_{h}^{m}) \Big] - \Big[ V_{h+1}^{m} (s_{h+1}^{m}) - V_{h+1}^{\pi_{m}} (s_{h+1}^{m}) \Big]
\end{align}

\vspace{5pt}
Specifically, $\delta_{h}^{m} (s_{h}^{m}, a_{h}^{m})$ represents the temporal-difference error for the state-action pair $(s_{h}^{m}, a_{h}^{m})$. The notations $\zeta _{h}^{m}$ and $\varepsilon_{h}^{m}$ capture two sources of randomness, \ie, the selection of action $a_{h}^{m} \sim \pi_{m} (\cdot | s_{h}^{m})$ and the generation of the next state $s_{h+1}^{m} \sim \mathbb{P}_{h} (\cdot | s_{h}^{m}, a_{h}^{m})$ from the environment.

\begin{proof} \textbf{Theorem~\ref{theorem:cae_regret}}.

Based on Lemma~\ref{lemma:org_regret}, the cumulative regret in Equation~\ref{eq:regret} can be decomposed into three terms as follows, where $\left\langle \cdot, \cdot \right\rangle$ means the inner product of two vectors.

\begin{align}
    \notag
    \textup{R}_{M} = & \sum_{m=1}^{M} Q_{1}^{*} (s_{1}^{m}, u_{1}^{m}) - Q_{1}^{\pi_{m}} (s_{1}^{m}, a_{1}^{m}) \\ \notag
    = & \sum_{m=1}^{M} \sum_{h=1}^{H} \Big [ \mathbb{E}_{\pi^{*}} \left [ \delta_{h}^{m} (s_{h}, a_{h}) | s_{1} = s_{1}^{m} \right ] - \delta_{h}^{m} (s_{h}^{m}, a_{h}^{m}) \Big ] + \sum_{m=1}^{M} \sum_{h=1}^{H} (\zeta_{h}^{m} + \varepsilon _{h}^{m}) \\ \notag 
    & + \sum_{m=1}^{M} \sum_{h=1}^{H} \mathbb{E}_{\pi^{*}} \left [ \left \langle Q_{h}^{m} (s_{h}, \cdot), \pi_{h}^{*}(\cdot | s_{h}) - \pi_{m} (\cdot| s_{h}) \right \rangle | s_{1} = s_{1}^{m} \right ]
\end{align}

According to the definition of $\pi_{m}$, there is Equation~\ref{eq:alg_optimal}.

\begin{align}
    \left \langle Q_{h}^{m} (s_{h}, \cdot), \pi_{h}^{*}(\cdot | s_{h}) - \pi_{m} (\cdot| s_{h}) \right \rangle \leq 0
    \label{eq:alg_optimal}
\end{align}

Consequently, with probability at least $1 - \sigma$ for $ \sigma \in (0, 1)$, the cumulative regret in Equation~\ref{eq:regret} can be bounded as follows.

\begin{small}
\begin{align}
\label{eq:decompose_regret}
    \textup{R}_{M} \leq & \sum_{m=1}^{M} \sum_{h=1}^{H} \Big [ \mathbb{E}_{\pi^{*}} \left [ \delta_{h}^{m} (s_{h}, a_{h}) | s_{1} = s_{1}^{m} \right ] - \delta_{h}^{m} (s_{h}^{m}, a_{h}^{m}) \Big ] + \sum_{m=1}^{M} \sum_{h=1}^{H} (\zeta_{h}^{m} + \varepsilon _{h}^{m}) \\ \notag
    \leq & \sum_{m=1}^{M} \sum_{h=1}^{H} \Big [ \mathbb{E}_{\pi^{*}} \left [ \delta_{h}^{m} (s_{h}, a_{h}) | s_{1} = s_{1}^{m} \right ] - \delta_{h}^{m} (s_{h}^{m}, a_{h}^{m}) \Big ] + \sqrt{16MH^{3} \textup{log} \frac{2}{\sigma_{1}}} \\ \notag
    \leq & 4H \sqrt{ MH \log \frac{2}{\sigma_{2}}} + C_{2} \alpha H \sqrt{M D \cdot \textup{log} (1 + \frac{M}{\lambda D})} \\[5pt] \notag
    & + \frac{C_{3} \cdot H L^{3} D^{\frac{5}{2}} M \sqrt{\log \iota \log \left( \frac{1}{\sigma_2} \right) \log \left( \frac{M \left| \mathcal{A} \right|}{\sigma_2} \right)} \left \| \boldsymbol{q} - \tilde{\boldsymbol{q}} \right \|_{\boldsymbol{H}^{-1}}}{\iota^{\frac{1}{6}}} + \sqrt{16MH^{3} \textup{log} \frac{2}{\sigma_{1}}} \\ \notag
    \leq & C_{4} H \sqrt{MH \log \frac{1}{\sigma}} + C_{2} \alpha H \sqrt{M D \cdot \textup{log} (1 + \frac{M}{\lambda D})} + \frac{C_{3} H L^{3} D^{\frac{5}{2}} M \sqrt{\log \iota \log \left( \frac{1}{\sigma} \right) \log \left( \frac{M \left| \mathcal{A} \right|}{\sigma} \right)} \left \| \boldsymbol{q} - \tilde{\boldsymbol{q}} \right \|_{\boldsymbol{H}^{-1}}}{\iota^{\frac{1}{6}}}
\end{align}
\end{small}

Here, $\sigma_1, \sigma_2 \in (0,1)$. Specifically, the second inequality follows from Lemma~\ref{lemma:2_term}, and the third inequality follows from Lemma~\ref{lemma:1_term}. By setting $\sigma_1=\sigma_2=\frac{\sigma}{2}$ and applying a union bound, the result holds after absorbing constant factors into $C_2$, $C_3$, and $C_4$.

\end{proof}

\section{Lemmas}
\label{app:lemma}
\begin{lemma}
\label{lemma:org_regret}
    Adapted from Lemma 5.1 of~\citet{provable_kernel_nn}, the regret in Equation~\ref{eq:regret} can be decomposed as Equation~\ref{eq:org_regret}, where $\left\langle \cdot, \cdot \right\rangle$ means the inner product of two vectors.

\begin{align}
    \notag
    \textup{R}_{M} = & \sum_{m=1}^{M} Q_{1}^{*} (s_{1}^{m}, u_{1}^{m}) - Q_{1}^{\pi_{m}} (s_{1}^{m}, a_{1}^{m}) \\ \notag
    = & \sum_{m=1}^{M} V_{1}^{*} (s_{1}^{m}) - V_{1}^{\pi_{m}} (s_{1}^{m}) \\
    \label{eq:org_regret}
    = & \sum_{m=1}^{M} \sum_{h=1}^{H} \Big [ \mathbb{E}_{\pi^{*}} \left [ \delta_{h}^{m} (s_{h}, a_{h}) | s_{1} = s_{1}^{m} \right ] - \delta_{h}^{m} (s_{h}^{m}, a_{h}^{m}) \Big ] + \sum_{m=1}^{M} \sum_{h=1}^{H} (\zeta_{h}^{m} + \varepsilon _{h}^{m}) \\ \notag 
    & + \sum_{m=1}^{M} \sum_{h=1}^{H} \mathbb{E}_{\pi^{*}} \left [ \left \langle Q_{h}^{m} (s_{h}, \cdot), \pi_{h}^{*}(\cdot | s_{h}) - \pi_{m} (\cdot| s_{h}) \right \rangle | s_{1} = s_{1}^{m} \right ]
\end{align}
\end{lemma}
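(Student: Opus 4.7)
The plan is to follow the standard regret-decomposition recipe of \citet{provable_kernel_nn}: first reduce the $Q$-gap at step $1$ to a $V$-gap, then telescope $V^{*}-V^{\pi_{m}}$ across $h$ by inserting the algorithm's estimate $V_{h}^{m}$ as an intermediate quantity, and finally expand each resulting difference with the Bellman equation so that the temporal-difference error $\delta_{h}^{m}$ appears naturally. Concretely, I would first observe that $u_{1}^{m}$ is greedy under $\pi^{*}$ and $a_{1}^{m}$ is greedy under $\pi_{m}$, so $Q_{1}^{*}(s_{1}^{m},u_{1}^{m})=V_{1}^{*}(s_{1}^{m})$ and $Q_{1}^{\pi_{m}}(s_{1}^{m},a_{1}^{m})=V_{1}^{\pi_{m}}(s_{1}^{m})$, giving the middle equality of the lemma. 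Then I would write
\begin{equation*}
V_{h}^{*}(s)-V_{h}^{\pi_{m}}(s)=\bigl[V_{h}^{*}(s)-V_{h}^{m}(s)\bigr]+\bigl[V_{h}^{m}(s)-V_{h}^{\pi_{m}}(s)\bigr]
\end{equation*}
and handle the two brackets separately.

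For the first bracket I would use $V_{h}^{*}(s)=\langle Q_{h}^{*}(s,\cdot),\pi_{h}^{*}(\cdot|s)\rangle$ and $V_{h}^{m}(s)=\langle Q_{h}^{m}(s,\cdot),\pi_{m}(\cdot|s)\rangle$ to split the gap as
\begin{equation*}
\langle Q_{h}^{*}(s,\cdot)-Q_{h}^{m}(s,\cdot),\pi_{h}^{*}(\cdot|s)\rangle+\langle Q_{h}^{m}(s,\cdot),\pi_{h}^{*}(\cdot|s)-\pi_{m}(\cdot|s)\rangle.
\end{equation*}
The second summand, once chained over $h$ and placed under the $\pi^{*}$-expectation conditional on $s_{1}=s_{1}^{m}$, becomes exactly the third line of the claimed identity. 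For the first summand I would apply Bellman's equation $Q_{h}^{*}=r_{h}+\mathbb{P}_{h}V_{h+1}^{*}$, add and subtract $\mathbb{P}_{h}V_{h+1}^{m}$, and recognize the resulting expression as $\delta_{h}^{m}(s,a)+(\mathbb{P}_{h}(V_{h+1}^{*}-V_{h+1}^{m}))(s,a)$. Iterating this identity in expectation under $\pi^{*}$ produces the $\mathbb{E}_{\pi^{*}}[\delta_{h}^{m}(s_{h},a_{h})\mid s_{1}=s_{1}^{m}]$ contributions.

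For the second bracket I would apply the analogous Bellman expansion along the actual sampled trajectory. Writing $Q_{h}^{m}(s,a)-Q_{h}^{\pi_{m}}(s,a)=-\delta_{h}^{m}(s,a)+(\mathbb{P}_{h}(V_{h+1}^{m}-V_{h+1}^{\pi_{m}}))(s,a)$, summing over $h$ introduces two discrepancies between expectations and their realized samples: replacing $\mathbb{E}_{a\sim\pi_{m}(\cdot|s_{h}^{m})}[\,\cdot\,]$ by its evaluation at $a_{h}^{m}$ produces $\zeta_{h}^{m}$, and replacing $(\mathbb{P}_{h}V_{h+1})(s_{h}^{m},a_{h}^{m})$ by $V_{h+1}(s_{h+1}^{m})$ produces $\varepsilon_{h}^{m}$. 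Stacking these across $h$ yields $-\sum_{h}\delta_{h}^{m}(s_{h}^{m},a_{h}^{m})+\sum_{h}(\zeta_{h}^{m}+\varepsilon_{h}^{m})$ plus boundary terms that vanish because $V_{H+1}^{m}=V_{H+1}^{\pi_{m}}=0$. Summing over episodes $m$ and combining the two brackets gives the three claimed summands.

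The only delicate step is the bookkeeping of the two randomness sources captured by $\zeta_{h}^{m}$ and $\varepsilon_{h}^{m}$: one must verify that every time a conditional expectation over actions or next-states is swapped for its realized value along the trajectory, the discrepancy lands in exactly the martingale-difference quantity defined in the excerpt, with the correct sign. Once this sign-tracking is done carefully (and one uses $V_{H+1}=0$ to close the telescoping), the identity follows without any high-probability argument, since it is a deterministic rearrangement conditional on the sampled trajectory.
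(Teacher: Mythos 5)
Your proposal is correct and follows exactly the decomposition the paper itself relies on (the paper gives no proof of this lemma, deferring to Lemma 5.1 of the cited work): reduce the step-$1$ $Q$-gap to a value gap, insert the estimate $V_{h}^{m}$ as an intermediate term, expand both halves with the Bellman equations so the temporal-difference errors $\delta_{h}^{m}$ appear, and collect the action- and next-state-sampling discrepancies into the martingale differences, closing the telescope with $V_{H+1}=0$. One remark on the sign-tracking you rightly flag as the delicate step: your derivation produces $\zeta_{h}^{m}$ and $\varepsilon_{h}^{m}$ with a \emph{minus} between their two halves, e.g.\ $\zeta_{h}^{m}=\bigl(V_{h}^{m}-V_{h}^{\pi_{m}}\bigr)(s_{h}^{m})-\bigl(Q_{h}^{m}-Q_{h}^{\pi_{m}}\bigr)(s_{h}^{m},a_{h}^{m})$, which is the standard convention and the one under which the identity actually holds; the plus signs in the paper's displayed definitions of $\zeta_{h}^{m}$ and $\varepsilon_{h}^{m}$ appear to be a typo rather than a flaw in your argument.
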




\vspace{5pt}
\begin{lemma}
\label{lemma:2_term} 
    Adapted from Lemma 5.3 of~\citet{provable_kernel_nn}, with probability at least $1 - \sigma_{1}$, the second term in Equation~\ref{eq:decompose_regret} can be bounded as follows:

    \begin{equation}
    \sum_{m=1}^{M} \sum_{h=1}^{H} (\zeta_{h}^{m} + \varepsilon _{h}^{m}) \leq \sqrt{16MH^{3} \textup{log} \frac{2}{\sigma_{1}}}
\end{equation}

\end{lemma}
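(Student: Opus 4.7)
The plan is to recognize $\sum_{m=1}^{M}\sum_{h=1}^{H}(\zeta_h^m+\varepsilon_h^m)$ as the total of a single bounded martingale difference sequence and then to invoke the Azuma--Hoeffding inequality. Concretely, I would introduce a filtration indexed by $(m,h,i)$ with $i\in\{1,2\}$, where $\mathcal{F}_{m,h,1}$ contains everything observed through the start of step $h$ in episode $m$ (including $s_h^m$ and the policy $\pi_m$ and value estimates $Q_h^m,V_h^m$ fixed at the beginning of the episode), while $\mathcal{F}_{m,h,2}$ additionally includes the sampled action $a_h^m$. Then $\mathcal{F}_{m,h,2}$ feeds into $\mathcal{F}_{m,h+1,1}$ after the next state $s_{h+1}^m$ is drawn.

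Next I would verify that each term has zero conditional mean. Because $a_h^m\sim\pi_m(\cdot|s_h^m)$ and $V_h^{m}(s_h^m)=\mathbb{E}_{a\sim\pi_m}[Q_h^m(s_h^m,a)]$ and likewise $V_h^{\pi_m}(s_h^m)=\mathbb{E}_{a\sim\pi_m}[Q_h^{\pi_m}(s_h^m,a)]$, the quantity $\zeta_h^m$ is (up to sign) exactly the deviation of $Q_h^m(s_h^m,a_h^m)-Q_h^{\pi_m}(s_h^m,a_h^m)$ from its $\pi_m$-conditional mean given $\mathcal{F}_{m,h,1}$, so $\mathbb{E}[\zeta_h^m\mid\mathcal{F}_{m,h,1}]=0$. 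Similarly, because $s_{h+1}^m\sim\mathbb{P}_h(\cdot|s_h^m,a_h^m)$, the quantity $\varepsilon_h^m$ is the deviation of $V_{h+1}^m(s_{h+1}^m)-V_{h+1}^{\pi_m}(s_{h+1}^m)$ from its $\mathbb{P}_h$-conditional mean, so $\mathbb{E}[\varepsilon_h^m\mid\mathcal{F}_{m,h,2}]=0$. Interleaving $\zeta_h^m$ (adapted to $\mathcal{F}_{m,h,2}$) and $\varepsilon_h^m$ (adapted to $\mathcal{F}_{m,h+1,1}$) therefore yields a single martingale difference sequence of length $2MH$.

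For the boundedness, since $r_h\in[0,1]$ and there are $H$ remaining steps, every value function (both the true $V^{\pi_m}_h,Q^{\pi_m}_h$ under the standard Bellman recursion and the estimates $V_h^m,Q_h^m$, which one truncates to $[0,H]$ as is standard in LSVI-type analyses) lies in $[0,H]$. Consequently $|\zeta_h^m|\le 2H$ and $|\varepsilon_h^m|\le 2H$. Applying Azuma--Hoeffding to the interleaved sequence of $2MH$ terms with uniform bound $2H$ gives, for any $\sigma_1\in(0,1)$,
\[
\Pr\!\Bigl(\sum_{m=1}^{M}\sum_{h=1}^{H}(\zeta_h^m+\varepsilon_h^m)>\sqrt{2\cdot 2MH\cdot (2H)^2\log(2/\sigma_1)}\Bigr)\le \sigma_1,
\]
and the right-hand side simplifies to $\sqrt{16\,MH^3\log(2/\sigma_1)}$, which is exactly the claimed bound.

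The main obstacle is setting up the filtration and adaptedness carefully: one must be sure that $V_h^m$ and $\pi_m$ are frozen at the start of episode $m$ (so they are measurable with respect to $\mathcal{F}_{m,1,1}$) and that the interleaving of $\zeta$'s and $\varepsilon$'s really produces a single martingale of length $2MH$ rather than two separate ones that would cost a union-bound factor of $2$ in $\sigma_1$. A secondary point requiring a line of justification is the $[0,H]$ truncation of $Q_h^m,V_h^m$, which is standard in the LSVI literature the lemma is adapted from; without it one could not uniformly bound the martingale increments by $2H$.
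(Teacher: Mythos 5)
Your proof is correct and is essentially the argument the paper relies on: the paper supplies no proof of its own for this lemma, importing it from Lemma 5.3 of the cited reference, whose proof is exactly your interleaved-martingale Azuma--Hoeffding computation with $2MH$ increments each bounded by $2H$ (giving $\sqrt{2\cdot 2MH\cdot(2H)^2\log(2/\sigma_1)}=\sqrt{16MH^3\log(2/\sigma_1)}$). The one caveat, which you already gesture at with ``up to sign,'' is that the paper's printed definitions of $\zeta_h^m$ and $\varepsilon_h^m$ join the two halves with a $+$ rather than a $-$, so they are genuine martingale differences only under the evidently intended sign correction.
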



\begin{lemma}
\label{lemma:1_term}
For any $\sigma_{2} \in (0, 1)$, assume the width of the action-value network satisfies:

\begin{align}
    \iota = \textup{poly} (L, D, \frac{1}{\sigma_{2}}, \log \frac{M |\mathcal{A}|}{\sigma_{2}})
\end{align}

where $L$ is the number of layers in the action-value network, and $\textup{poly} (\cdot)$ means a polynomial function depending on the incorporated variables, and let:
    
    \begin{align}
        & \alpha = \sqrt{2 (D \cdot \log (1 + \frac{M \cdot \textup{log} |\mathcal{A}|}{\lambda}) - \log \sigma_{2})} + \sqrt{\lambda} \\
        & \eta \leq C_{1} (\iota \cdot D^{2} M^{\frac{11}{2}} L^{6} \cdot \log \frac{M |\mathcal{A}|}{\sigma_{2}})^{-1}
    \end{align}

then with probability at least $1 - \sigma_{2}$, the first term in Equation~\ref{eq:decompose_regret} is bounded as:

\begin{small}
\begin{align}
    & \sum_{m=1}^{M} \sum_{h=1}^{H} \Big [ \mathbb{E}_{\pi^{*}} \left [ \delta_{h}^{m} (s_{h}, a_{h}) | s_{1} = s_{1}^{m} \right ] - \delta_{h}^{m} (s_{h}^{m}, a_{h}^{m}) \Big ] \\[5pt] \notag
    \leq & 4H \sqrt{ MH \log \frac{2}{\sigma_{2}}} + C_{2} \alpha H \sqrt{M D \cdot \textup{log} (1 + \frac{M}{\lambda D})} + \frac{C_{3} \cdot H L^{3} D^{\frac{5}{2}} M \sqrt{\log \iota \log \left( \frac{1}{\sigma_2} \right) \log \left( \frac{M \left| \mathcal{A} \right|}{\sigma_2} \right)} \left \| \boldsymbol{q} - \tilde{\boldsymbol{q}} \right \|_{\boldsymbol{H}^{-1}}}{\iota^{\frac{1}{6}}}
\end{align}
\end{small}
\end{lemma}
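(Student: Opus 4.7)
My plan is to follow the now-standard regret decomposition for episodic value-based RL (as in \citet{provable_kernel_nn} and \citet{provable_linear_function}) and then handle the three resulting terms using the concentration machinery of Neural-LinUCB / Neural-LinTS, extended to the sequential RL setting. Concretely, I would first apply the identity in \cref{lemma:org_regret} to write
\begin{align}
\textup{Reg}_{M} = \underbrace{\sum_{m,h} \bigl[\mathbb{E}_{\pi^{*}}[\delta_{h}^{m}(s_{h},a_{h})\mid s_{1}=s_{1}^{m}] - \delta_{h}^{m}(s_{h}^{m},a_{h}^{m})\bigr]}_{\textup{(I)}}
+ \underbrace{\sum_{m,h}(\zeta_{h}^{m}+\varepsilon_{h}^{m})}_{\textup{(II)}}
+ \underbrace{\sum_{m,h}\mathbb{E}_{\pi^{*}}\bigl[\langle Q_{h}^{m}(s_{h},\cdot), \pi_{h}^{*}(\cdot|s_{h}) - \pi_{m}(\cdot|s_{h})\rangle\bigr]}_{\textup{(III)}}. \notag
\end{align}

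Term (III) is immediately non-positive, since in \cref{alg:theoretical_algorithm} the executed policy is $a_{h}^{m} = \arg\max_{a} Q_{h}^{m}(s_{h}^{m},a)$, so $\pi_{m}$ is the greedy maximizer of $Q_{h}^{m}$ and $\langle Q_{h}^{m}(s_{h},\cdot), \pi_{h}^{*} - \pi_{m}\rangle \leq 0$ pointwise. Term (II) is handled by \cref{lemma:2_term}: both $\zeta_{h}^{m}$ and $\varepsilon_{h}^{m}$ are bounded martingale differences with magnitude at most $O(H)$, so Azuma--Hoeffding gives the $O(H\sqrt{MH\log(1/\sigma)})$ bound. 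The real work lies in (I).

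For term (I), the plan is to import the Neural-LinUCB / Neural-LinTS analysis (which I complete for the TS case in \cref{app:ts_bound}) into the episodic RL setting. The key structural observation is that, conditional on the embedding network $\phi_{h}^{m}(\cdot,\cdot)$ being fixed, Line 9 of \cref{alg:theoretical_algorithm} is exactly a ridge regression on the targets $q_{h}^{m}$, so by the standard self-normalized concentration inequality for ridge regression combined with the choice
\begin{equation}
\alpha = \sqrt{2\bigl(d\log(1 + M\log|\mathcal{A}|/\lambda) - \log\sigma\bigr)} + \sqrt{\lambda}, \notag
\end{equation}
the parameter vector $\boldsymbol{\theta}_{h}^{m}$ lies in an ellipsoid $\{\boldsymbol{\theta}: \|\boldsymbol{\theta} - \boldsymbol{\theta}_{h}^{m}\|_{\boldsymbol{A}_{h}^{m}} \leq \alpha\}$ that contains $\boldsymbol{\theta}^{*}$ with high probability. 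This yields \emph{optimism} $Q_{h}^{m}(s,a) \geq Q_{h}^{*}(s,a)$ up to the neural-network approximation error, which makes the sign in (I) controllable: the Bellman residual $\delta_{h}^{m}$ is non-positive plus a bonus, so $\mathbb{E}_{\pi^{*}}[\delta_{h}^{m}] - \delta_{h}^{m}(s_{h}^{m},a_{h}^{m}) \leq 2\alpha\beta_{h}^{m}(s_{h}^{m},a_{h}^{m}) + (\text{NN error})$. Summing over $(m,h)$ and applying the elliptical potential lemma to $\sum_{m,h}\|\phi_{h}^{m}\|_{(\boldsymbol{A}_{h}^{m})^{-1}}$ produces the $C_{2}\alpha H\sqrt{Md\log(1 + M/(\lambda d))}$ term. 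The neural-network approximation error is absorbed into the $O(\iota^{-1/6})$ term by invoking the NTK-based perturbation bounds of \citet{shallow_exploration}, which require the width condition $\iota = \textup{poly}(L,d,1/\sigma,\log(M|\mathcal{A}|/\sigma))$ and the learning-rate condition on $\eta$ to guarantee that $\phi_{h}^{m}$ stays close to its initial-point linearization throughout training.

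The main obstacle I anticipate is the rigorous handling of the neural-network approximation error and its interaction with the adaptive bandit concentration. Unlike in stateless Neural-LinUCB, here the regression targets $q_{h}^{m} = r_{h}^{m} + \max_{a}\bar{Q}_{h+1}^{m}(s_{h+1}^{m},a)$ themselves depend on previously learned networks, which introduces sequential dependencies between episodes and layers $h$. To control these, I would (i) condition on the ``good event'' that the NTK-linearization perturbation is uniformly small across all $(m,h,s,a)$ pairs encountered, which requires a union bound of size $M|\mathcal{A}|H$ and forces the polynomial dependence of $\iota$ on $\log(M|\mathcal{A}|/\sigma)$; (ii) propagate the approximation error through the $H$-step Bellman recursion, picking up an additional factor of $H$ that appears in the final bound; and (iii) combine with the elliptical potential sum to yield the $L^{3}d^{5/2}M\sqrt{\log(\cdot)}\,\|\boldsymbol{q}-\tilde{\boldsymbol{q}}\|_{\boldsymbol{H}^{-1}}/\iota^{1/6}$ term stated in \cref{theorem:cae_regret}. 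Finally, setting $\sigma = \max\{\sigma_{1},\sigma_{2}\}$ and choosing $C_{4} \geq \sqrt{2}+4$ in the union bound over (I) and (II) completes the bound.
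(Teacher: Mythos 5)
Your overall plan is sound and lands on the right ingredients, but the route you take through term (I) is genuinely different from the paper's. (Note also that the three-way decomposition and the treatment of terms (II) and (III) belong to the proof of \cref{theorem:cae_regret}, not of \cref{lemma:1_term}, which concerns term (I) alone.) You argue in the classical LSVI-UCB style: establish a confidence ellipsoid for $\boldsymbol{\theta}_{h}^{m}$, deduce optimism, conclude $\mathbb{E}_{\pi^{*}}[\delta_{h}^{m}] - \delta_{h}^{m}(s_{h}^{m},a_{h}^{m}) \leq 2\alpha\beta_{h}^{m}(s_{h}^{m},a_{h}^{m}) + (\text{NN error})$ pointwise, and finish with the elliptical potential lemma. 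The paper instead first drops the $\mathbb{E}_{\pi^{*}}$ term by optimism, then decomposes $-\delta_{h}^{m}$ into a martingale part $\omega_{h}^{m}$ (bounded by Azuma--Hoeffding, which is precisely where the $H\sqrt{2MH\log(2/\sigma_{2})}$ term in the statement comes from), a telescoping part, and an estimation-error part; the telescoping collapses $\sum_{m,h}(\rho_{h+1}^{m}+\varphi_{h}^{m})$ into a bandit-style regret $\sum_{m}\sum_{h\geq 2} Q_{h}^{*}(s_{h}^{m},a_{h}^{m}) - Q_{h}^{*}(s_{h}^{m},u_{h}^{m})$ in \autoref{eq:MAB_loss}, which is then bounded wholesale by importing Theorem 4.4 of \citet{shallow_exploration} (\cref{lemma:ucb_regret}) and its Thompson-sampling analogue (\cref{lemma:ts_regret}). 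Your route is more self-contained and avoids the detour through a $Q^{*}$-action-gap regret, at the cost of having to re-derive the optimism and NTK perturbation bounds that the paper simply imports; it also would not naturally produce the $H\sqrt{2MH\log(2/\sigma_{2})}$ term, which is harmless since your bound would only be tighter.

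One substantive caveat: \cref{lemma:1_term} must cover both exploration rules admitted by \cref{alg:theoretical_algorithm}, and your central step --- the ellipsoid contains $\boldsymbol{\theta}^{*}$ with high probability, hence optimism --- is a UCB argument. Thompson sampling is optimistic only with constant (not high) probability per step, so that step fails verbatim for the TS variant. The paper sidesteps this by bounding the difference $\left| \textup{Reg}_{\textup{ Thompson Sampling}} - \textup{Reg}_{\textup{ UCB}} \right|$ directly in \cref{lemma:ts_regret}; you gesture at completing the TS case separately in \cref{app:ts_bound}, but as written your argument for term (I) only establishes the UCB case. You would need either an anti-concentration (optimism-with-constant-probability) argument or the paper's comparison-to-UCB device to close this.
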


\begin{proof}
According to~\citet{provable_kernel_nn}, there is:

\begin{align}
    \sum_{m=1}^{M} \sum_{h=1}^{H} \Big [ \mathbb{E}_{\pi^{*}} \left [ \delta_{h}^{m} (s_{h}, a_{h}) | s_{1} = s_{1}^{m} \right ] - \delta_{h}^{m} (s_{h}^{m}, a_{h}^{m}) \Big ] \leq \sum_{m=1}^{M} \sum_{h=1}^{H} - \delta_{h}^{m} (s_{h}^{m}, a_{h}^{m})
\end{align}

Considering $\delta_{h}^{m} (s_{h}^{m}, a_{h}^{m})$, it can be decomposed as:

\begin{align}
    \delta_{h}^{m} (s_{h}^{m}, a_{h}^{m}) = & r_{h}^{m} + (\mathbb{P}_{h} V_{h+1}^{m}) (s_{h}^{m}, a_{h}^{m}) - Q_{h}^{m} (s_{h}^{m}, a_{h}^{m}) \\[5pt] \notag
    = & r_{h}^{m} + (\mathbb{P}_{h} V_{h+1}^{m}) (s_{h}^{m}, a_{h}^{m}) - Q_{h}^{*} (s_{h}^{m}, a_{h}^{m}) + Q_{h}^{*} (s_{h}^{m}, a_{h}^{m}) - Q_{h}^{m} (s_{h}^{m}, a_{h}^{m}) \\[5pt] \notag
    = & \mathbb{P}_{h} (V_{h+1}^{m} - V_{h+1}^{*})(s_{h}^{m}, a_{h}^{m}) + (Q_{h}^{*} - Q_{h}^{m}) (s_{h}^{m}, a_{h}^{m}) \\[5pt] \notag
    = & \underbrace{\mathbb{P}_{h} (V_{h+1}^{m} - V_{h+1}^{*})(s_{h}^{m}, a_{h}^{m}) - (V_{h+1}^{m} - V_{h+1}^{*})(s_{h+1}^{m})}_{\omega_{h}^{m}} \\[5pt] \notag 
    & + \underbrace{(V_{h+1}^{m} - V_{h+1}^{*})(s_{h+1}^{m})}_{\rho_{h+1}^{m}} + \underbrace{(Q_{h}^{*} - Q_{h}^{m}) (s_{h}^{m}, a_{h}^{m})}_{\varphi_{h}^{m}}
\end{align}

By Azuma-Hoeffding inequality, we can bound $\sum_{m=1}^{M} \sum_{h=1}^{H} \omega_{h}^{m}$ as Equation~\ref{eq:hoeffding} with probability at least $1 - \sigma_{3}$ where $\sigma_{3} \in (0, 1)$.

\begin{align}
\label{eq:hoeffding}
    - 2H \sqrt{ MH \log \frac{2}{\sigma_{3}}} \leq \sum_{m=1}^{M} \sum_{h=1}^{H} \omega_{h}^{m} \leq 2H \sqrt{ MH \log \frac{2}{\sigma_{3}}}
\end{align}

As $\rho_{h+1}^{m}$ can be decomposed as Equation~\ref{eq:rho} where $u_{h+1}^{m} \sim \pi_{h+1}^{*} (\cdot| s_{h+1}^{m})$, there is Equation~\ref{eq:sub_delta}.

\begin{align}
\label{eq:rho}
    & \rho_{h+1}^{m} = (V_{h+1}^{m} - V_{h+1}^{*}) (s_{h+1}^{m}) = Q_{h+1}^{m}(s_{h+1}^{m}, a_{h+1}^{m}) - Q_{h+1}^{*} (s_{h+1}^{m}, u_{h+1}^{m}) \\[5pt]
    \label{eq:sub_delta}
    \Rightarrow  & \sum_{m=1}^{M} \sum_{h=1}^{H} (\rho_{h+1}^{m} + \varphi_{h}^{m}) \\[5pt] \notag
    = & \sum_{m=1}^{M} \sum_{h=1}^{H-1} \left[ Q_{h+1}^{m}(s_{h+1}^{m}, a_{h+1}^{m}) - Q_{h+1}^{*} (s_{h+1}^{m}, u_{h+1}^{m}) \right] + \sum_{m=1}^{M} \sum_{h=1}^{H} (Q_{h}^{*} - Q_{h}^{m}) (s_{h}^{m}, a_{h}^{m}) \\[5pt] 
    \label{eq:MAB_loss}
    = & \underbrace{\sum_{m=1}^{M} \sum_{h=2}^{H} Q_{h}^{*} (s_{h}^{m}, a_{h}^{m}) - Q_{h}^{*} (s_{h}^{m}, u_{h}^{m})}_{\textup{R}_{\textup{MAB}}} + \sum_{m=1}^{M} (Q_{1}^{*} - Q_{1}^{m}) (s_{1}^{m}, a_{1}^{m})
\end{align}

Specifically, the second equation follows from $Q_{H+1}^{*} (s_{H+1}^{m}, a_{H+1}^{m}) = 0$ and $Q_{H+1}^{m} (s_{H+1}^{m}, a_{H+1}^{m}) = 0$. The second term in Equation~\ref{eq:MAB_loss} originates from the estimation error of the action-value function, which is constrained by the convergence properties of DQN. Consequently, to complete the proof of Lemma~\ref{lemma:1_term}, it suffices to establish a bound for the $\textup{R}_{\textup{MAB}}$ term, while the second term is omitted for conciseness in the remaining discussion. Bounds of $\textup{R}_{\textup{MAB}}$ under UCB- and Thompson Sampling-based exploration strategies are proved in Lemma~\ref{lemma:ucb_regret} and Lemma~\ref{lemma:ts_regret}, respectively. 

By setting $C_{2} = \max\{C_{\text{ucb}}, C_{\text{ts}}\}$, $\sigma_3 = \sigma_4 = \frac{\sigma_2}{2}$, applying a union bound, and absorbing constant factors into $C_2$ and $C_3$, the proof is completed.

\end{proof}

\vspace{10pt}
\subsection{Regret Bound of UCB-based Exploration}
\label{app:ucb_bound}

In this subsection, we first introduce the third assumption from \textit{deep representation and shallow exploration}~\citep{shallow_exploration}, which, with Assumptions~\ref{ass:ass_1} and~\ref{ass:ass_3}, forms the standard assumptions. We then present Lemma~\ref{lemma:ucb_regret}.
\begin{assumption}
    \label{ass:ass_2}
    For $\forall s_{1}, s_{2} \in \mathcal{S}$ and $\forall a_{1}, a_{2} \in \mathcal{A}$, there is a constant $l_{Lip} > 0$, such that:
    \begin{align}
    \label{eq:lip}
        \left\| \nabla _{\boldsymbol{W}} \phi (s_{1}, a_{1}|\boldsymbol{W}_{h}^{1}) - \nabla _{\boldsymbol{W}} \phi (s_{2}, a_{2}|\boldsymbol{W}_{h}^{1}) \right\|_{2} \leq l_{Lip} \left\| (s_{1}; a_{1}) - (s_{2}; a_{2}) \right\|_{2}
    \end{align}
\end{assumption}

\vspace{5pt}

\begin{lemma}
\label{lemma:ucb_regret}
Adapted from Theorem 4.4 of~\citet{shallow_exploration}, suppose the standard initializations and assumptions hold. Additionally, assume without loss of generality that $\left \| \boldsymbol{\theta}^{*} \right \|_{2} \leq 1$ 
and $\left \| \phi(s_{h}, a_{h}) \right \|_{2} \leq 1$. If with the UCB-based exploration strategy, then for any $\sigma_{4} \in (0, 1)$, let the width of the action-value network satisfies:

\begin{align}
    \iota = \textup{poly} (L, D, \frac{1}{\sigma_{4}}, \log \frac{M |\mathcal{A}|}{\sigma_{4}})
\end{align}

where $L$ is the number of layers in the action-value network, and $\textup{poly} (\cdot)$ means a polynomial function depending on the incorporated variables, and let:
    
    \begin{align}
    \label{eq:alpha}
        & \alpha = \sqrt{2 (D \cdot \log (1 + \frac{M \cdot \textup{log} |\mathcal{A}|}{\lambda}) - \log \sigma_{4})} + \sqrt{\lambda} \\
        & \eta \leq C_{1} (\iota \cdot D^{2} M^{\frac{11}{2}} L^{6} \cdot \log \frac{M |\mathcal{A}|}{\sigma_{4}})^{-1}
    \end{align}

    then with probability at least $1-\sigma_{4}$, the term $\textup{R}_{\textup{UCB}}$ in Equation~\ref{eq:MAB_loss} can be bounded as follows:

    \begin{small}
    \begin{align}
        \textup{R}_{\textup{UCB}} \leq & C_{\textup{ucb}} \cdot \alpha H \sqrt{M D \cdot \textup{log} (1 + \frac{M}{\lambda D})} + \frac{C_{3} H L^{3} D^{\frac{5}{2}} M \sqrt{\log \iota \log \left( \frac{1}{\sigma_4} \right) \log \left( \frac{M \left| \mathcal{A} \right|}{\sigma_4} \right)} \left \| \boldsymbol{q} - \tilde{\boldsymbol{q}} \right \|_{\boldsymbol{H}^{-1}}}{\iota^{\frac{1}{6}}}
    \end{align}
    \end{small}

    where $C_{1}, C_{\textup{ucb}}, C_{3}$ are constants independent of the problem parameters; $\boldsymbol{q} = (q_{1}^{1}; q_{2}^{1}; \ldots; q_{H}^{1}; \ldots; q_{H}^{M})$ and $\tilde{\boldsymbol{q}} = (Q_{1}^{1} (s_{1}^{1}, a_{1}^{1}); \ldots; Q_{H}^{1} (s_{H}^{1}, a_{H}^{1}); ...; Q_{H}^{M} (s_{H}^{M}, a_{H}^{M}))$ are the target and estimated value vectors, respectively.
    
\end{lemma}

Notably, the proof of the above lemma relies on the concentration of self-normalized stochastic processes. However, since $Q_{h}^{m}$ is not independent of historical data, this result cannot be directly applied. Instead, a similar approach to that in~\citet{provable_kernel_nn} is adopted by leveraging Uniform Convergence across all possible inputs within the value function class $\mathcal{Q}$. This ensures that the maximum deviation between the true and learned values over all time steps remains small, \ie, $\sup_{Q_{h}^{m} \in \mathcal{Q}} \left| Q_h^m - Q_h^{*} \right| \leq \epsilon_m$ for $\forall h \in \left[ H \right]$. Crucially, the error $\epsilon_m$ decreases as the number of episodes increases. Applying triangle inequality, we decompose the error bound $\textup{R}_{\textup{UCB}}$ into two terms to manage the dependency:

\begin{itemize}[leftmargin=1em]
    \item A true optimal value function component, to which the concentration of self-normalized stochastic processes applies.
    \item A cumulative error term dependent on $\epsilon_m$, which can be systematically bounded.
\end{itemize}

\vspace{10pt}
\subsection{Regret Bound of Thompson Sampling-based Exploration}
\label{app:ts_bound}

\begin{lemma}
\label{lemma:ts_regret}
Under the same settings as those of Lemma~\ref{lemma:ucb_regret}, if with the Thompson Sampling-based exploration strategy, the term $\textup{R}_{\textup{Thompson Sampling}}$ in Equation~\ref{eq:MAB_loss} can be bounded as Equation~\ref{eq:ts_bound}, where $C_{\textup{ts}}$ is a constant.
    \begin{small}
    \begin{align}
    \label{eq:ts_bound}
        \textup{R}_{\textup{Thompson Sampling}} \leq & C_{\textup{ts}} \cdot \alpha H \sqrt{M D \cdot \textup{log} (1 + \frac{M}{\lambda D})} + \frac{C_{3} H L^{3} D^{\frac{5}{2}} M \sqrt{\log \iota \log \left( \frac{1}{\sigma_4} \right) \log \left( \frac{M \left| \mathcal{A} \right|}{\sigma_4} \right)} \left \| \boldsymbol{q} - \tilde{\boldsymbol{q}} \right \|_{\boldsymbol{H}^{-1}}}{\iota^{\frac{1}{6}}}
    \end{align}
    \end{small}

\end{lemma}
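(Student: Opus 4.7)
The plan is to mirror the proof of Lemma C.5 for UCB, replacing the deterministic optimism enjoyed by UCB with the probabilistic (anti-concentration) optimism of Gaussian Thompson Sampling, along the lines of the linear TS analysis of Abeille--Lazaric. Concretely, introduce the sampled action--value function $\tilde{Q}_{h}^{m}(s,a) = (\boldsymbol{\theta}_{h}^{m})^{\mathsf{T}} \phi(s,a) + \alpha (\boldsymbol{\Delta\theta}_{h}^{m})^{\mathsf{T}} \phi(s,a)$ induced by the sampled noise in line~12 of \cref{alg:theoretical_algorithm}, so that $a_{h}^{m} = \arg\max_{a} \tilde{Q}_{h}^{m}(s_{h}^{m}, a)$, and then decompose each per-episode regret term in $\textup{Reg}_{\text{MAB}}$ as
\begin{equation*}
Q_{h}^{*}(s_{h}^{m}, u_{h}^{m}) - Q_{h}^{m}(s_{h}^{m}, a_{h}^{m}) = \bigl[Q_{h}^{*}(s_{h}^{m}, u_{h}^{m}) - \tilde{Q}_{h}^{m}(s_{h}^{m}, u_{h}^{m})\bigr] + \bigl[\tilde{Q}_{h}^{m}(s_{h}^{m}, u_{h}^{m}) - \tilde{Q}_{h}^{m}(s_{h}^{m}, a_{h}^{m})\bigr] + \bigl[\tilde{Q}_{h}^{m}(s_{h}^{m}, a_{h}^{m}) - Q_{h}^{m}(s_{h}^{m}, a_{h}^{m})\bigr].
\end{equation*}
The middle bracket is non-positive by the greedy choice of $a_{h}^{m}$, the last bracket equals $\alpha (\boldsymbol{\Delta\theta}_{h}^{m})^{\mathsf{T}} \phi_{h}^{m}$ and is bounded by Cauchy--Schwarz and a Gaussian tail bound on $\|\boldsymbol{\Delta\theta}_{h}^{m}\|_{\boldsymbol{A}_{h}^{m}} = O(\sqrt{d})$, yielding $O(\alpha\sqrt{d}\,\|\phi_{h}^{m}\|_{(\boldsymbol{A}_{h}^{m})^{-1}})$.

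Next I would handle the first bracket, the \emph{optimism gap}, with the anti-concentration trick. Conditioned on the filtration up to episode $m$, step $h$, the sampled parameter $\boldsymbol{\theta}_{h}^{m} + \alpha\boldsymbol{\Delta\theta}_{h}^{m}$ has a Gaussian distribution whose variance covers a constant fraction of the confidence ellipsoid used in Lemma C.5, so with a constant probability $p \geq 1/(4\sqrt{\pi e})$ the event $\{\tilde{Q}_{h}^{m}(s_{h}^{m}, u_{h}^{m}) \geq Q_{h}^{*}(s_{h}^{m}, u_{h}^{m})\}$ holds. Following the standard argument, one shows that for any unsaturated state the expected optimism gap is at most $1/p$ times $O(\alpha \|\phi_{h}^{m}\|_{(\boldsymbol{A}_{h}^{m})^{-1}})$, with saturated-state contributions absorbed into a martingale increment handled by Azuma--Hoeffding exactly as in the $\zeta, \varepsilon$ terms. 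Summing across $m,h$, the elliptical potential lemma gives $\sum_{m,h}\|\phi_{h}^{m}\|_{(\boldsymbol{A}_{h}^{m})^{-1}} \leq H\sqrt{Md\log(1 + M/(\lambda d))}$, which produces the first term of the stated bound with a constant $C_{ts}$ inflated over $C_{ucb}$ by the $1/p$ factor from anti-concentration.

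Finally, the neural-network approximation error is inherited unchanged from the Neural-LinUCB machinery invoked in the proof of Lemma C.5: the NTK linearization argument in Zhang et al.\ controls the gap between $\phi(\cdot,\cdot|\boldsymbol{W}_{h}^{m})$ and its initialization, contributing the term
\begin{equation*}
\frac{C_{3}\,H L^{3} d^{5/2} M \sqrt{\log(\iota + 1/\sigma_{4} + M|\mathcal{A}|/\sigma_{4})}\,\|\boldsymbol{q} - \tilde{\boldsymbol{q}}\|_{\boldsymbol{H}^{-1}}}{\iota^{1/6}}.
\end{equation*}
I would verify that this error term is agnostic to whether the exploration bonus is of UCB or TS type, since it only bounds the discrepancy between the linear model used in the regret analysis and the actual neural action--value function.

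The main obstacle is the interplay between the TS randomness and the time-varying, data-dependent features $\phi(\cdot,\cdot|\boldsymbol{W}_{h}^{m})$: the standard linear TS analysis presumes fixed features, but here the embedding is itself trained on the trajectory that TS generates, so the filtration structure required by the anti-concentration lemma and by the self-normalized martingale concentration is non-trivial. I would resolve this exactly as Neural-LinUCB does, namely by exploiting the under-step-size assumption $\eta \leq C_{1}(\iota d^{2} M^{11/2} L^{6}\log(M|\mathcal{A}|/\sigma_{4}))^{-1}$ to keep $\boldsymbol{W}_{h}^{m}$ in an NTK neighborhood of its initialization throughout training, so that the features behave as approximately fixed kernel features and both the anti-concentration step and the elliptical potential bound apply up to lower-order corrections that are subsumed by the $\iota^{-1/6}$ term.
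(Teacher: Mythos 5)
Your proposal is a legitimate route to the stated bound, but it is not the route the paper takes, and the difference is substantive. You follow the classical Gaussian Thompson Sampling analysis of Agrawal--Goyal and Abeille--Lazaric: introduce the sampled value function $\tilde{Q}_{h}^{m}$, split the regret into an optimism gap at $u_{h}^{m}$, a non-positive greedy-selection term, and a perturbation term at $a_{h}^{m}$, and then control the optimism gap by anti-concentration (constant-probability optimism, saturated versus unsaturated actions, a $1/p$ inflation, and an extra martingale). The paper instead never invokes anti-concentration at all: it starts from the linearization of $Q_{h}^{*}(s,u)-Q_{h}^{*}(s,a)$ in Lemma A.1 of the Neural-LinUCB reference, uses the TS action-selection inequality (\autoref{eq:ts_selection}) to replace the model-preference term $\vartheta_{h}^{m}$ by the projected Gaussian perturbation, bounds $\left\| \boldsymbol{\Delta\theta}_{h}^{m} \right\|_{\boldsymbol{A}_{h}^{m}} \leq \sqrt{d}+\sqrt{2\log(1/\sigma_{4})}$ using only the \emph{upper} Gaussian tail, and then bounds the \emph{difference} $\left| \textup{Reg}_{\textup{ Thompson Sampling}} - \textup{Reg}_{\textup{ UCB}} \right|$ term-by-term against display A.7 of that reference via the elliptical potential lemma, finally setting $C_{ts}=C_{ucb}+C_{5}$. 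The paper's comparison argument is shorter and inherits the UCB machinery (including the self-normalized concentration and the NTK error term) wholesale; your argument is self-contained and standard, but you should be explicit that (i) the quantity actually bounded in \autoref{eq:MAB_loss} is $\sum Q_{h}^{*}(s_{h}^{m},a_{h}^{m})-Q_{h}^{*}(s_{h}^{m},u_{h}^{m})$, entirely in terms of $Q^{*}$, not the mixed quantity $Q_{h}^{*}(s,u)-Q_{h}^{m}(s,a)$ you decompose, and (ii) the anti-concentration route classically pays an extra $\sqrt{d}$ from oversampling, so your claim that $C_{ts}$ is inflated only by a constant factor $1/p$ needs to be reconciled with the $O(\alpha\sqrt{d}\,\|\phi\|_{(\boldsymbol{A}_{h}^{m})^{-1}})$ per-step bound you derive, which as written yields $\alpha d\sqrt{M}$ rather than $\alpha\sqrt{Md}$. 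Your closing remarks on the data-dependent features and the filtration issue are well taken and match the caveat the paper itself records after \cref{lemma:ucb_regret} and \cref{lemma:ts_regret}.
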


\begin{proof}
According to Lemma A.1 of~\citet{shallow_exploration}, there exits $\boldsymbol{W}_h^{\#}$ such that $Q_{h}^{*}(s, u) - Q_{h}^{*} (s, a)$ can be decomposed as Equation~\ref{eq:ts_bound_}, where $g (s, a; \boldsymbol{W}) = \nabla _{\boldsymbol{W}} \phi(s, a; \boldsymbol{W})$.

\begin{align}
\label{eq:ts_bound_}
    & Q_{h}^{*}(s, u) - Q_{h}^{*} (s, a) \\[5pt] \notag
    = & (\boldsymbol{\theta}_{h}^{*})^{\mathsf{T}} \left[ \phi (s, u; \boldsymbol{W}_{h}^{m}) - \phi (s, a; \boldsymbol{W}_{h}^{m}) \right] + (\boldsymbol{\theta}_{h}^{1})^{\mathsf{T}} \left[ g (s, u; \boldsymbol{W}_{h}^{1}) - g (s, a; \boldsymbol{W}_{h}^{1}) \right] (\boldsymbol{W}_{h}^{\#} - \boldsymbol{W}_{h}^{m}) \\[5pt] \notag
    = & (\boldsymbol{\theta}_{h}^{1})^{\mathsf{T}} \left[ g (s, u; \boldsymbol{W}_{h}^{1}) - g (s, a; \boldsymbol{W}_{h}^{1}) \right] (\boldsymbol{W}_{h}^{\#} - \boldsymbol{W}_{h}^{m}) \\[5pt] \notag
    & + \underbrace{(\boldsymbol{\theta}_{h}^{m})^{\mathsf{T}} \left[ \phi (s, u; \boldsymbol{W}_{h}^{m}) - \phi (s, a; \boldsymbol{W}_{h}^{m}) \right]}_{\vartheta_{h}^{m}} - (\boldsymbol{\theta}_{h}^{m} - \boldsymbol{\theta}_{h}^{*})^{\mathsf{T}} \left[ \phi (s, u; \boldsymbol{W}_{h}^{m}) - \phi (s, a; \boldsymbol{W}_{h}^{m}) \right]
\end{align}

Based on the action selection process using Thompson Sampling-based exploration strategy in Algorithm~\ref{alg:theoretical_algorithm}, we derive Equation~\ref{eq:ts_selection}.

\begin{align}
\label{eq:ts_selection}
    (\boldsymbol{\theta}_{h}^{m} + \alpha \Delta \boldsymbol{\theta}_{h}^{m})^{\mathsf{T}} \phi (s, u; \boldsymbol{W}_{h}^{m}) \leq (\boldsymbol{\theta}_{h}^{m} + \alpha \Delta \boldsymbol{\theta}_{h}^{m})^{\mathsf{T}} \phi (s, a; \boldsymbol{W}_{h}^{m})
\end{align}

Consequently, for any $\sigma_{5} \in (0, 1)$, with probability at least $1-\sigma_5$, the term $\vartheta_{h}^{m}$ is bounded as Equation~\ref{eq:vartheta_bound}.

\begin{align}
\label{eq:vartheta_bound}
\vartheta_{h}^{m} \leq & \left\| \Delta \boldsymbol{\theta}_{h}^{m} \right\|_{\boldsymbol{A}_{h}^{m}} \left\| \phi (s, a; \boldsymbol{W}_{h}^{m}) - \phi (s, u; \boldsymbol{W}_{h}^{m}) \right\|_{(\boldsymbol{A}_{h}^{m})^{-1}} \\[5pt] \notag
\leq & (\sqrt{D} + \sqrt{2 \log \frac{1}{\sigma_{5}}}) \left\| \phi (s, a; \boldsymbol{W}_{h}^{m}) - \phi (s, u; \boldsymbol{W}_{h}^{m}) \right\|_{(\boldsymbol{A}_{h}^{m})^{-1}}
\end{align}

Specifically, the last inequality above is because $\boldsymbol{\Delta \theta}_{h}^{m} \sim N(0, (\boldsymbol{A}_{h}^{m})^{-1})$. Substituting the bound of $\vartheta_{h}^{m}$ back into Equation~\ref{eq:ts_bound_} further yields:

\begin{align}
\label{eq:ts_bound__}
     Q_{h}^{*}(s, u) - Q_{h}^{*} (s, a) \leq & (\boldsymbol{\theta}_{h}^{1})^{\mathsf{T}} \left[ g (s, u; \boldsymbol{W}_{h}^{1}) - g (s, a; \boldsymbol{W}_{h}^{1}) \right] (\boldsymbol{W}_{h}^{\#} - \boldsymbol{W}_{h}^{m}) \\[5pt] \notag
    & + (\sqrt{D} + \sqrt{2 \log \frac{1}{\sigma_{5}}}) \left\| \phi (s, a; \boldsymbol{W}_{h}^{m}) - \phi (s, u; \boldsymbol{W}_{h}^{m}) \right\|_{(\boldsymbol{A}_{h}^{m})^{-1}} \\[5pt] \notag 
    & - (\boldsymbol{\theta}_{h}^{m} - \boldsymbol{\theta}_{h}^{*})^{\mathsf{T}} \left[ \phi (s, u; \boldsymbol{W}_{h}^{m}) - \phi (s, a; \boldsymbol{W}_{h}^{m}) \right]
\end{align}

Comparing Equation~\ref{eq:ts_bound__} with Equation A.7 of~\citet{shallow_exploration}, the difference between the regrets of the Thompson Sampling-based and UCB-based exploration strategies is bounded as in Equation~\ref{eq:ts_ucb}.

\begin{align}
\label{eq:ts_ucb}
    \notag
    & \left| \textup{R}_{\textup{Thompson Sampling}} - \textup{R}_{\textup{UCB}} \right| \\[5pt] \notag
    \leq & \sum_{m=1}^{M} \sum_{h=1}^{H} (\sqrt{D} + \sqrt{2 \log \frac{1}{\sigma_{5}}}) \left\| \phi (s_h^m, a_h^m; \boldsymbol{W}_{h}^{m}) - \phi (s_h^m, u_h^m; \boldsymbol{W}_{h}^{m}) \right\|_{(\boldsymbol{A}_{h}^{m})^{-1}} \\[5pt] \notag 
    & + \sum_{m=1}^{M} \sum_{h=1}^{H} \alpha \left\| \phi (s_h^m, a_h^m; \boldsymbol{W}_{h}^{m}) \right\|_{(\boldsymbol{A}_{h}^{m})^{-1}} + \sum_{m=1}^{M} \sum_{h=1}^{H} \alpha \left\| \phi (s_h^m, u_h^m; \boldsymbol{W}_{h}^{m}) \right\|_{(\boldsymbol{A}_{h}^{m})^{-1}} \\[5pt]
    \leq & C_{5} \alpha H \sqrt{MD \cdot \textup{log} (1 + \frac{M}{\lambda D})}
\end{align}

Setting $C_{\text{ts}} = C_{\text{ucb}} + C_{5}$ completes the proof.
\end{proof}

Notably, the above proof applies a union bound, which requires assigning a constant to $\alpha$ in Equation~\ref{eq:alpha}. For brevity, we omit the allocation of this constant and the corresponding discussion of the failure probabilities.

\vspace{5pt}

\subsection{Discussion about the Standard Assumptions}
\label{app:assumptions}
Assumption~\ref{ass:ass_1} can be readily satisfied by transforming the state–action pairs $(s; a)$ into $(s; a;s; a)$ and applying an appropriate scaling.

Assumption~\ref{ass:ass_3} is a mild condition. Specifically, prior studies have demonstrated that for two-layer ReLU networks, this assumption can be directly derived from Asumption~\ref{ass:ass_1}. A diverse input distribution and a wide neural network can largely ensure that $\boldsymbol{H}$ remains positive definite.

Assumption~\ref{ass:ass_2} holds when the gradient of the neural network with respect to certain weights does not fluctuate excessively. In other words, within a neighborhood of a given state–action pair, the gradient with respect to these weights remains well-controlled. This is a standard assumption in the non-convex optimization literature, commonly used to ensure the convergence of alternating optimization procedures in which parameters are updated iteratively.

\newpage
\section{Experiment}
\label{app:experiment}

In this section, we present additional experimental results and provide the hyperparameters used to reproduce the experiments reported in this paper.

\begin{figure}[t]
  \centering
  \begin{subfigure}[t]{0.3\textwidth}
    \includegraphics[width=\textwidth, trim={0 0 2.5cm 1.5cm}, clip]{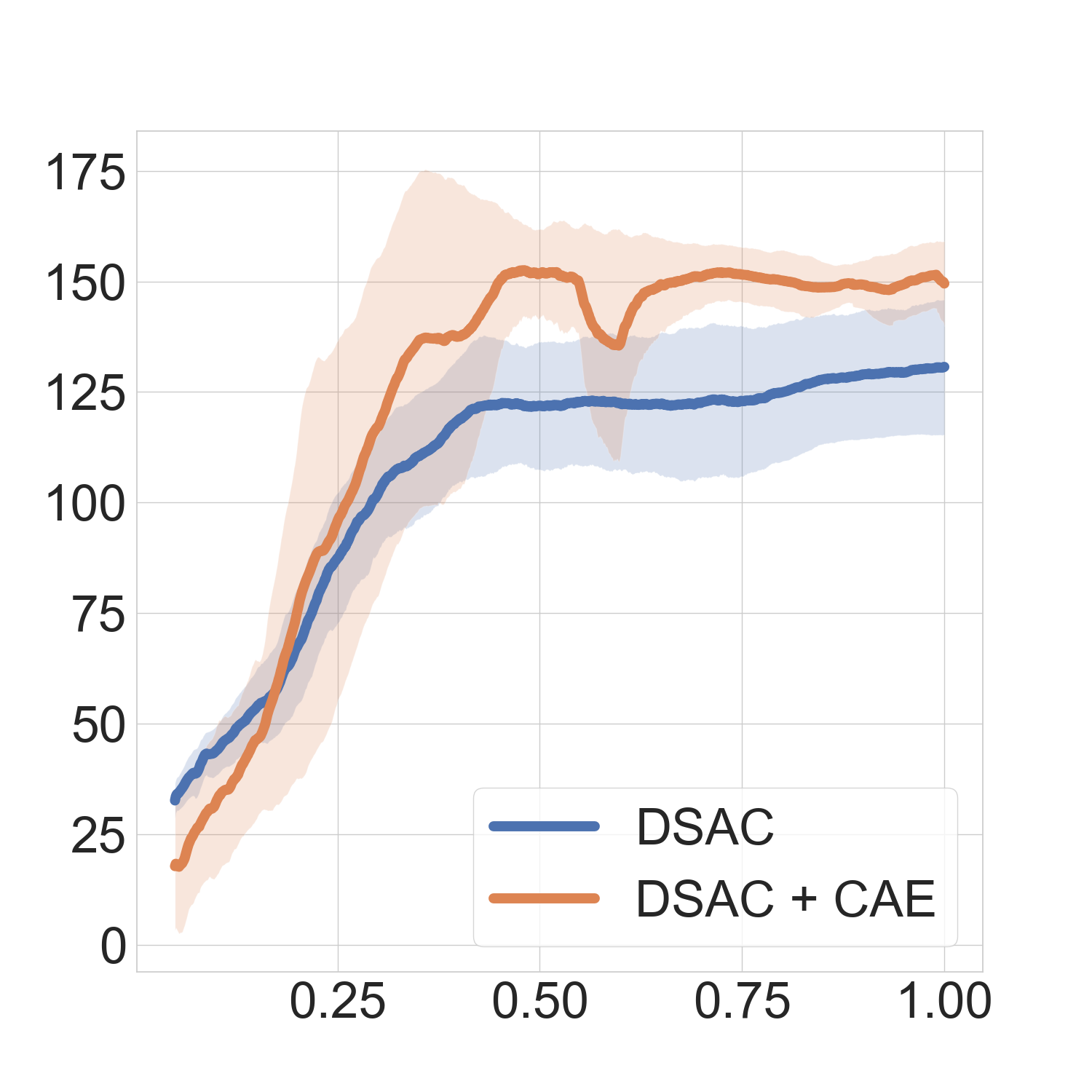}
    \caption{Swimmer}
    \label{fig:swimmer_dsac}
  \end{subfigure}
  \hfill
  \begin{subfigure}[t]{0.3\textwidth}
    \includegraphics[width=\textwidth, trim={0 0 2.5cm 1.5cm}, clip]{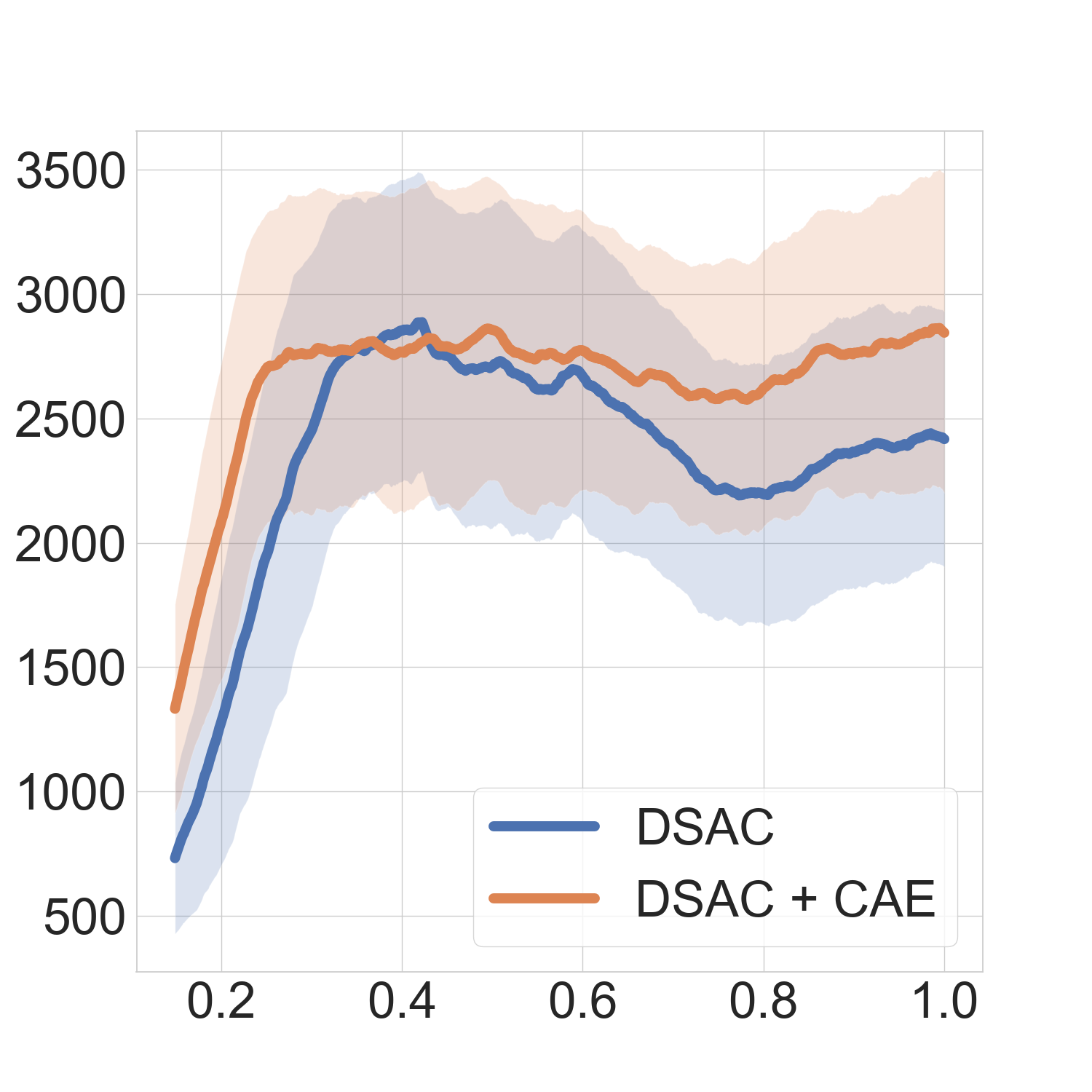}
    \caption{Hopper}
    \label{fig:hopper_dsac}
  \end{subfigure}
  \hfill
    \begin{subfigure}[t]{0.3\textwidth}
    \includegraphics[width=\textwidth, trim={0 0 2.5cm 1.5cm}, clip]{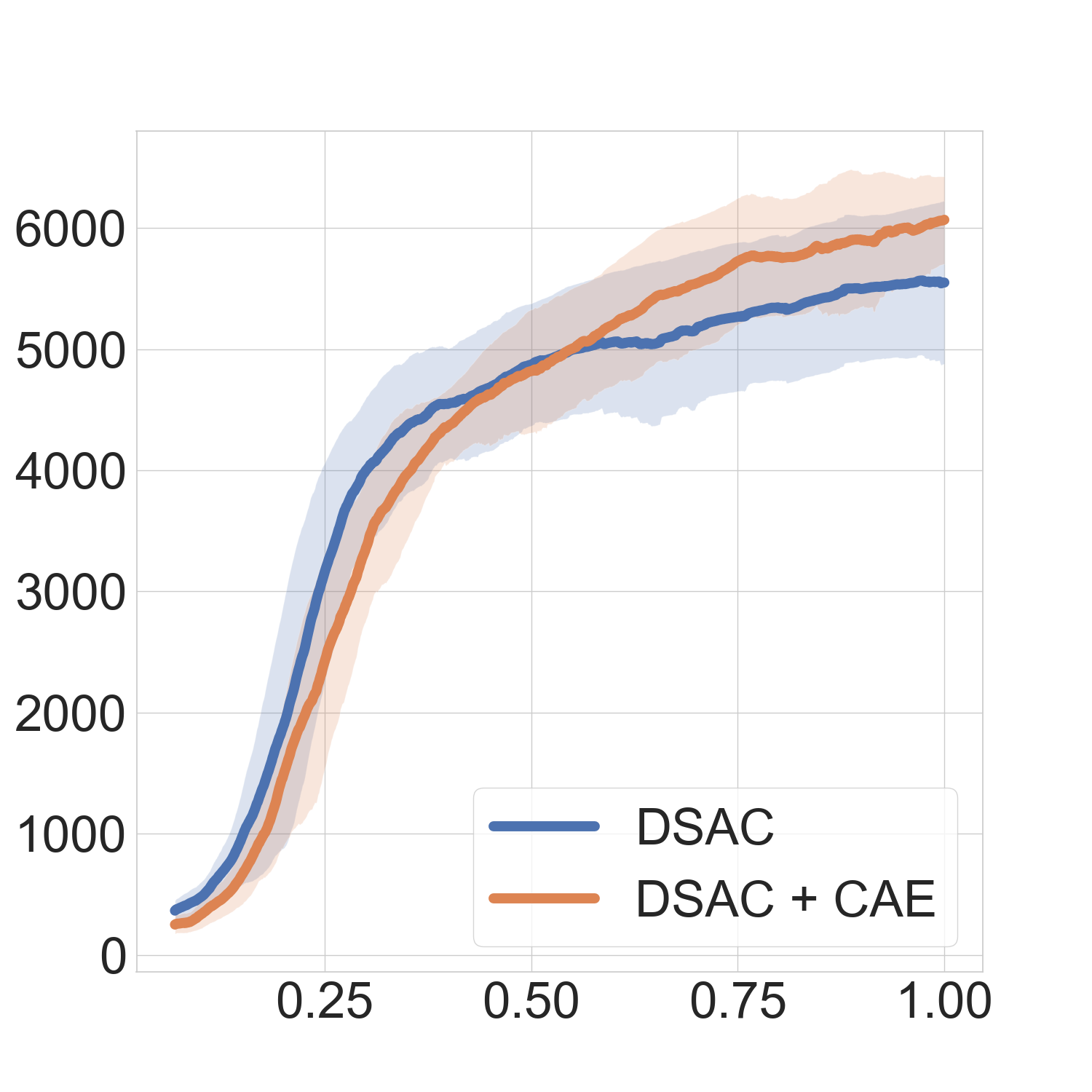}
    \caption{Walker2d}
    \label{fig:walker2d_dsac}
  \end{subfigure}
  \vskip\baselineskip
  \begin{subfigure}[t]{0.3\textwidth}
    \includegraphics[width=\textwidth, trim={0 0 2.5cm 1.5cm}, clip]{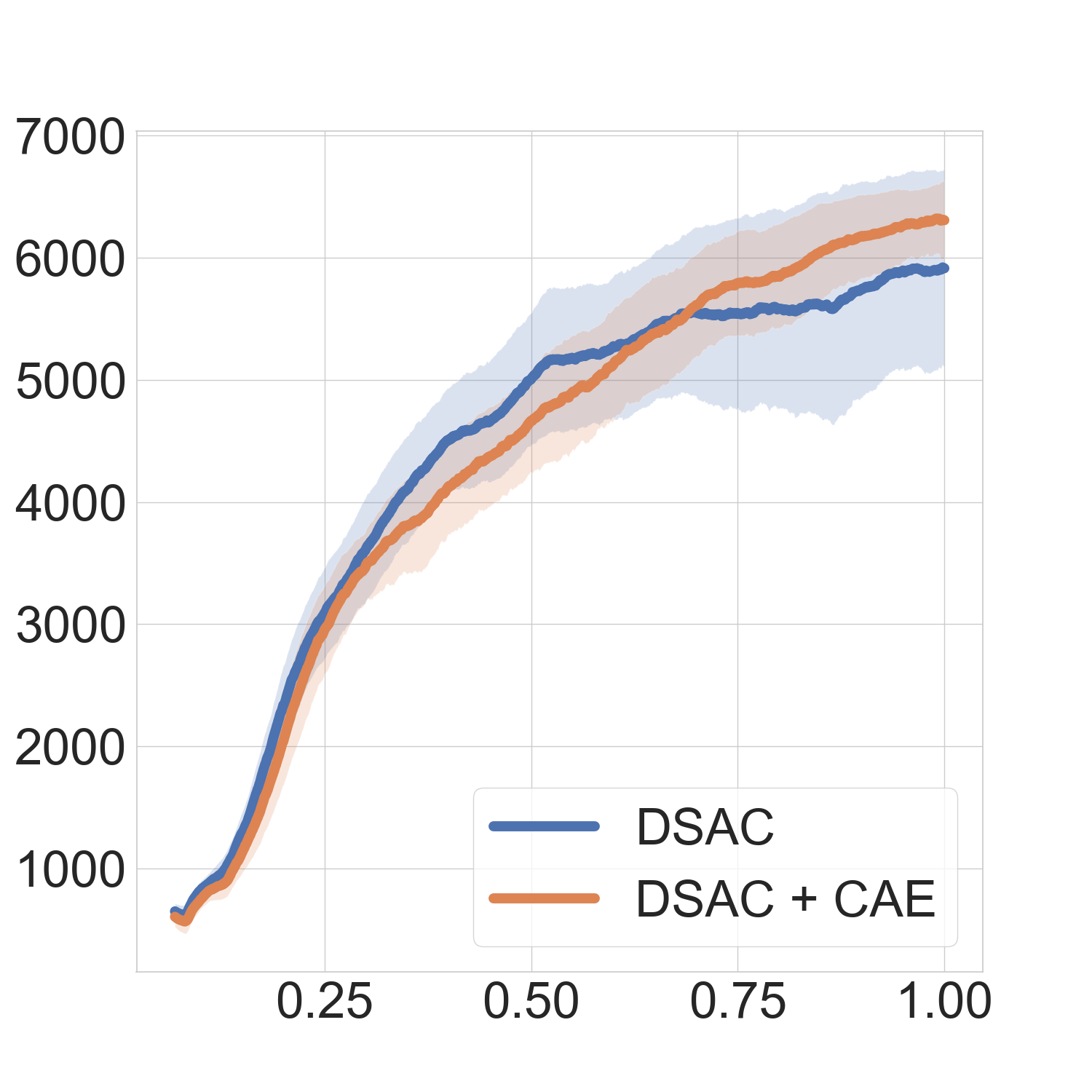}
    \caption{Ant}
    \label{fig:ant_dsac}
  \end{subfigure}
  \hfill
  \begin{subfigure}[t]{0.3\textwidth}
    \includegraphics[width=\textwidth, trim={0 0 2.5cm 1.5cm}, clip]{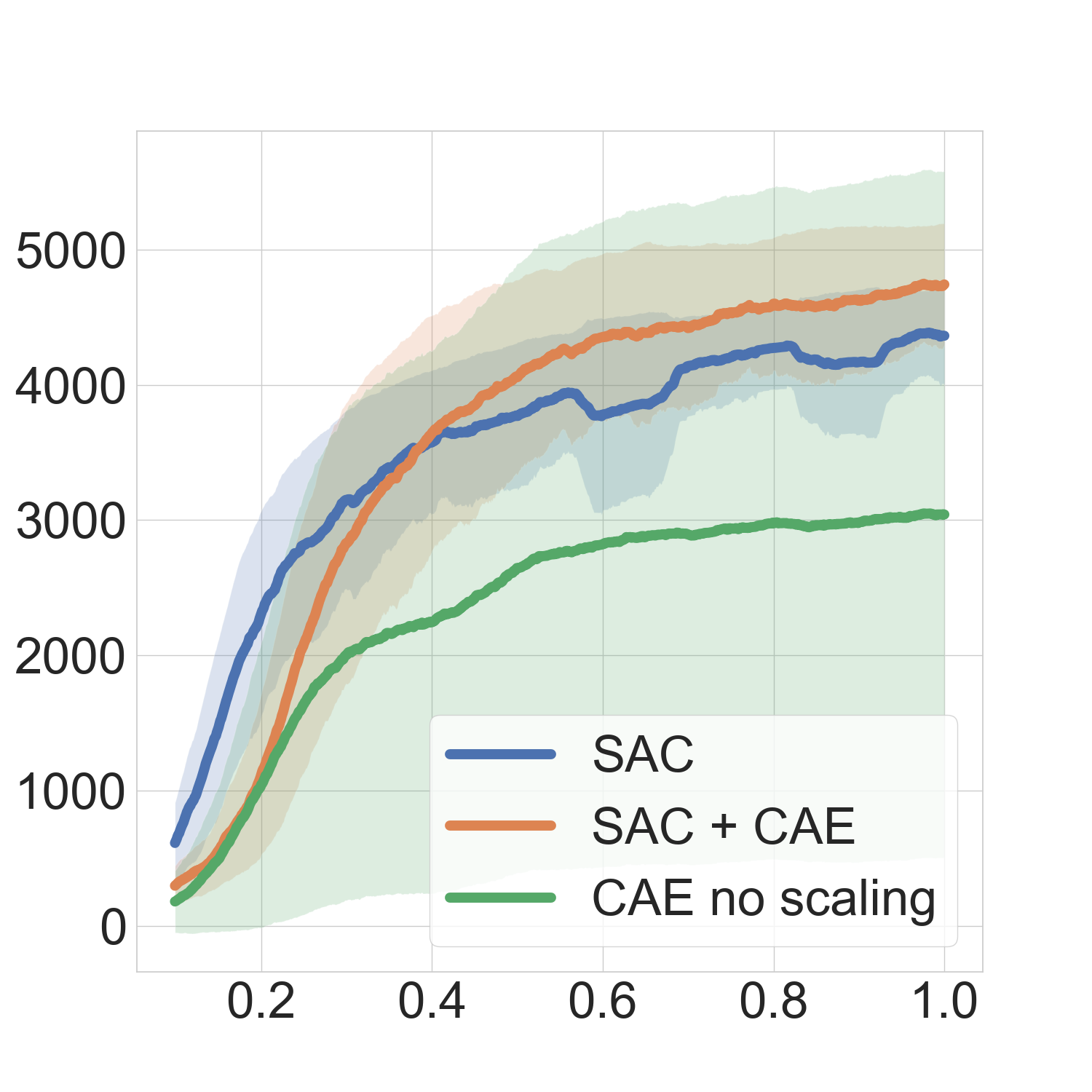}
    \caption{Walker2d}
    \label{fig:sac_walker2d_ablation}
  \end{subfigure}
  \hfill
  \begin{subfigure}[t]{0.3\textwidth}
    \includegraphics[width=\textwidth, trim={0 0 2.5cm 1.5cm}, clip]{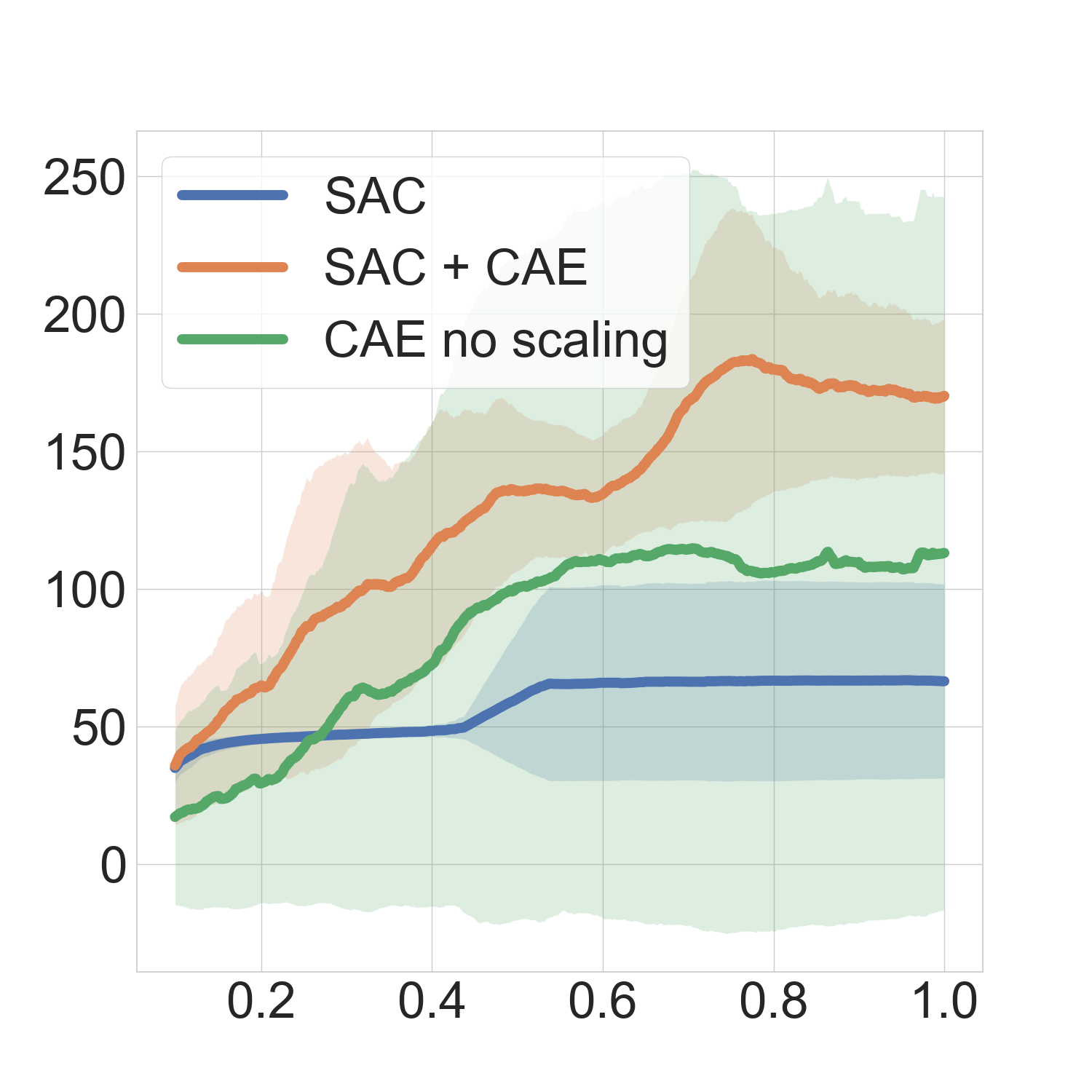}
    \caption{Swimmer}
    \label{fig:sac_swimmer_ablation}
  \end{subfigure}
  \caption{Experimental results of DSAC on MuJoCo-v3, \ie, from Figure~\ref{fig:swimmer_dsac} to Figure~\ref{fig:ant_dsac}, and those of ablation study to the \textit{scaling strategy} on MuJoCo-v4, \ie, Figure~\ref{fig:sac_walker2d_ablation} and Figure~\ref{fig:sac_swimmer_ablation}.}
  \label{fig:mujoco_v3}
\end{figure}

\subsection{Experiment on Mujoco}
\label{app:mujoco}

\textbf{Ablation study to the \textit{scaling strategy}}. We conduct experiments to assess the necessity of the scaling strategy. Experiment results, illustrated in Figure~\ref{fig:sac_walker2d_ablation} and Figure~\ref{fig:sac_swimmer_ablation}, are based on randomly selected tasks for SAC. As shown, SAC enhanced with \cae fails to deliver satisfactory performance on \textit{Walker2d} and \textit{Swimmer} tasks when the scaling strategy is not applied. It achieves high performance under certain seeds, while it performs poorly under others, leading to poor overall results and large variance. This highlights the critical role of the scaling strategy in ensuring the practical stability and effectiveness of \cae.

\renewcommand{\arraystretch}{1.2}
\begin{table}[t]
\caption{Exploration coefficient for various MuJoCo tasks and algorithms}
\label{tab:mujoco}
\vspace{5pt}

\begin{center}

\scalebox{0.8}{
\begin{tabular}{l|c|c|c|c}
    \toprule
    \diagbox{Env}{Algorithms} & SAC & PPO & TD3 & DSAC \\
    \midrule
    Swimmer & $0.2$ & $0.1$ & $0.1$ & $0.1$ \\
    \hline
    Ant & $0.7$ & $0.2$ & $0.3$ & $1.0$ \\
    \hline
    Walker2d & $1.0$ & $0.13$ & $0.8$ & $3.0$ \\
    \hline
    Hopper & $0.4$ & $0.14$ & $0.3$ & $2.0$ \\
    \hline
    HalfCheetah & $0.4$ & $0.5$ & $3.7$ & $3.0$ \\
    \hline
    Humanoid & $4.0$ & $0.1$ & $6.0$ & $4.5$ \\
    \bottomrule
\end{tabular}
}

\vspace{-5pt}
\end{center}
\end{table}
\renewcommand{\arraystretch}{1}

Hyperparameters of various RL algorithms for the experiments on MuJoCo are completely the same as those in the public codebase CleanRL. \cae introduces only two more hyperparameters, \ie, the exploration coefficient $\alpha$ and the ridge which is set as $\lambda=1$. Exploration coefficients are summarized in Table~\ref{tab:mujoco} for various tasks and algorithms, and the experimental results on various MuJoCo tasks involving different RL algorithms are in Figure~\ref{fig:mujoco_v3} and Figure~\ref{fig:mujoco_v4}. Notably, we only present results for cases where the \textit{RPI} exceeds $5.0\%$, as smaller \textit{RPI} is not clearly distinguishable in the figures.

\subsection{Experiment on MiniHack}
\label{app:minihack}

In Subsection~\ref{subsec:minihack}, we evaluate our methods \cae and \caep on nine representative MiniHack tasks, whose detailed descriptions are provided in Table~\ref{tab:minihack_task}.

\renewcommand{\arraystretch}{1.2}
\begin{table}[H]
\caption{MiniHack tasks evaluated in Subsection~\ref{subsec:minihack}.}
\label{tab:minihack_task}
\vspace{5pt}

\begin{center}

\scalebox{0.9}{
\begin{tabular}{l|c|c}
    \toprule
    Short names in Subsection~\ref{subsec:minihack}  & Full names & Task types \\
    \midrule
    N4  & MultiRoom-N4 & \multirow{5}{*}{Navigation-based} \\
    \cline{1-2}
    N4-Locked & MultiRoom-N4-Locked & \\
    \cline{1-2}
    N6 & MultiRoom-N6 & \\
    \cline{1-2}
    N6-Locked & MultiRoom-N6-Locked &  \\
    \cline{1-2}
    N10-OD & MultiRoom-N10-OpenDoor &  \\
    \hline
    Horn & Freeze-Horn-Restricted & \multirow{4}{*}{Skill-based} \\
    \cline{1-2}
    Random & Freeze-Random-Restricted &  \\
    \cline{1-2}
    Wand & Freeze-Wand-Restricted &  \\
    \cline{1-2}
    LavaCross & LavaCross-Restricted &  \\
    \bottomrule
\end{tabular}
}

\vspace{-5pt}
\end{center}
\end{table}
\renewcommand{\arraystretch}{1}

The hyperparameters for IMPALA, E3B, \cae, and \caep used in our experiments are summarized in Table~\ref{tab:impala} and Table~\ref{tab:e3b_cae}, aligning with those from the E3B experiments~\citep{E3B}. The experimental results on MiniHack are presented in Figure~\ref{fig:minihack}. 

\renewcommand{\arraystretch}{1.2}
\begin{table}[H]
\caption{IMPALA Hyperparameters for MiniHack~\citep{E3B}}
\label{tab:impala}
\vspace{5pt}

\begin{center}

\scalebox{0.9}{
\begin{tabular}{l|c}
    \toprule
    Learning rate  & $0.0001$ \\
    \hline
    RMSProp smoothing constant & $0.99$ \\
    \hline
    RMSProp momentum & $0$ \\
    \hline
    RMSProp & $10^{-5}$ \\
    \hline
    Unroll length & $80$ \\
    \hline
    Number of buffers & $80$ \\
    \hline
    Number of learner threads & $4$ \\
    \hline
    Number of actor threads & $8$ \\
    \hline
    Max gradient norm & $40$ \\
    \hline
    Entropy cost & $0.0005$ \\
    \hline
    Baseline cost & $0.5$ \\
    \hline
    Discounting factor & $0.99$ \\
    \bottomrule
\end{tabular}
}

\vspace{-5pt}
\end{center}
\end{table}
\renewcommand{\arraystretch}{1}

\renewcommand{\arraystretch}{1.2}
\begin{table}[H]
\caption{E3B, \cae, and \caep Hyperparameters for MiniHack}
\vspace{5pt}

\begin{center}

\scalebox{0.9}{
\begin{tabular}{c|c|c}
    \toprule
    \multirow{4}{*}{E3B, \cae, and \caep} & Scaling strategy & True \\
    \cline{2-3}
    & Ridge regularizer & $0.1$ \\
    \cline{2-3}
    & Entropy Cost & $0.005$ \\
    \cline{2-3}
    & Exploration coefficient & $1$ \\
    \hline
    \caep & Dimension of $\boldsymbol{U}$ & $128$ or $256$ \\
    \bottomrule
\end{tabular}
}

\vspace{-5pt}
\end{center}
\label{tab:e3b_cae}
\end{table}

\subsection{Experiment on Habitat}
\label{app:habitat}

The hyperparameters for PPO, E3B, \cae, and \caep used in these experiments are summarized in Table~\ref{tab:ppo_habitat} and Table~\ref{tab:e3b_cae_habitat}.

\renewcommand{\arraystretch}{1.2}
\begin{table}[H]
\caption{PPO Hyperparameters for Habitat are adopted from \textit{habitat-lab}~\citep{Habitat, Habitat_2, Habitat_3} 
}
\label{tab:ppo_habitat}
\vspace{5pt}

\begin{center}

\scalebox{0.9}{
\begin{tabular}{l|c}
    \toprule
    Clipping  & $0.2$ \\
    \hline
    PPO epoch  & $4$ \\
    \hline
    Value loss coefficient & $0.5$ \\
    \hline
    Entropy coefficient & $0.01$ \\
    \hline
    Learning rate  & $2.5e-4$ \\
    \hline
    $\epsilon-$greedy & 1e-5 \\
    \hline
    Max gradient norm & $0.2$ \\
    \hline
    Rollout steps & $128$ \\
    \hline
    Use GAE & True \\
    \hline
    Discounting factor & $0.99$ \\
    \hline
    Number of actor threads & $16$ \\
    \bottomrule
\end{tabular}
}

\vspace{-5pt}
\end{center}
\end{table}
\renewcommand{\arraystretch}{1}

\renewcommand{\arraystretch}{1.2}
\begin{table}[H]
\caption{E3B, \cae, and \caep Hyperparameters for Habitat}
\vspace{5pt}

\begin{center}

\scalebox{0.9}{
\begin{tabular}{c|c|c}
    \toprule
    \multirow{4}{*}{E3B, \cae, and \caep} & Scaling strategy & False \\
    \cline{2-3}
    & Ridge regularizer & $0.1$ \\
    \cline{2-3}
    & Inverse Dynamics Model updates per PPO epoch & $3$ \\
    \cline{2-3}
    & Exploration coefficient & $0.1$ \\
    \hline
    \caep & Dimension of $\boldsymbol{U}$ & $256$ \\
    \bottomrule
\end{tabular}
}

\vspace{-5pt}
\end{center}
\label{tab:e3b_cae_habitat}
\end{table}

\section{Limitations}
\label{sec:limitation}
One limitation of the proposed method is that, despite the small number of additional trainable parameters, computing the uncertainty incurs a non-negligible computational overhead due to the need for matrix inversion. This can be mitigated by using approximation techniques, such as the \textbf{Rank-1} method described in Subsection~\ref{subsec:caep}. Additionally, other linear MAB methods could be integrated into the proposed framework to avoid the need to calculate the inverse Gram matrix.

\newpage
\begin{figure}[H]
  \centering
  \begin{subfigure}[t]{0.3\textwidth}
    \includegraphics[width=\textwidth, trim={0 0 2.5cm 1.5cm}, clip]{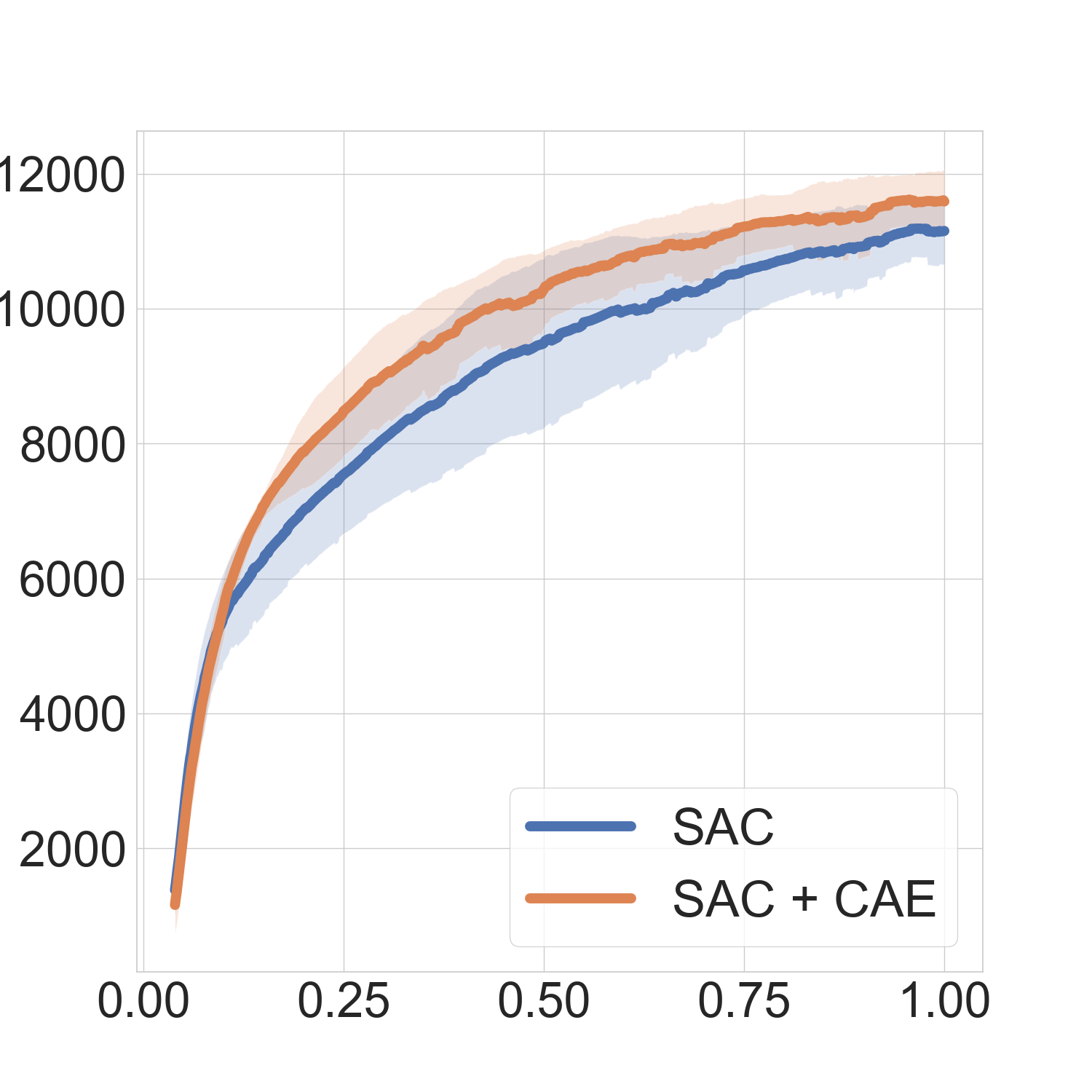}
    \caption{HalfCheetah}
    \label{fig:halfcheetah_sac}
  \end{subfigure}
  \hfill
  \begin{subfigure}[t]{0.3\textwidth}
    \includegraphics[width=\textwidth, trim={0 0 2.5cm 1.5cm}, clip]{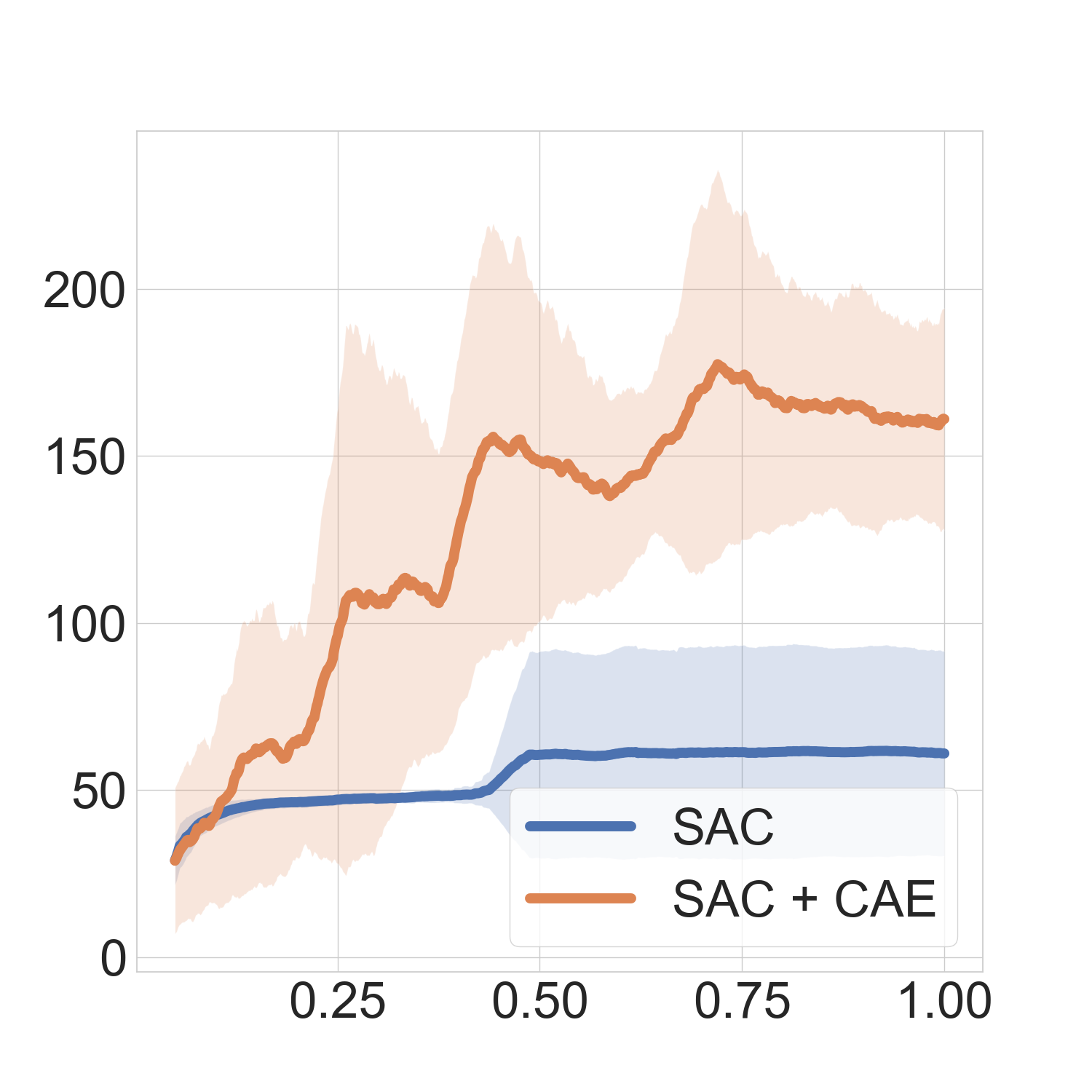}
    \caption{Swimmer}
    \label{fig:swimmer_sac}
  \end{subfigure}
  \hfill
  \begin{subfigure}[t]{0.3\textwidth}
    \includegraphics[width=\textwidth, trim={0 0 2.5cm 1.5cm}, clip]{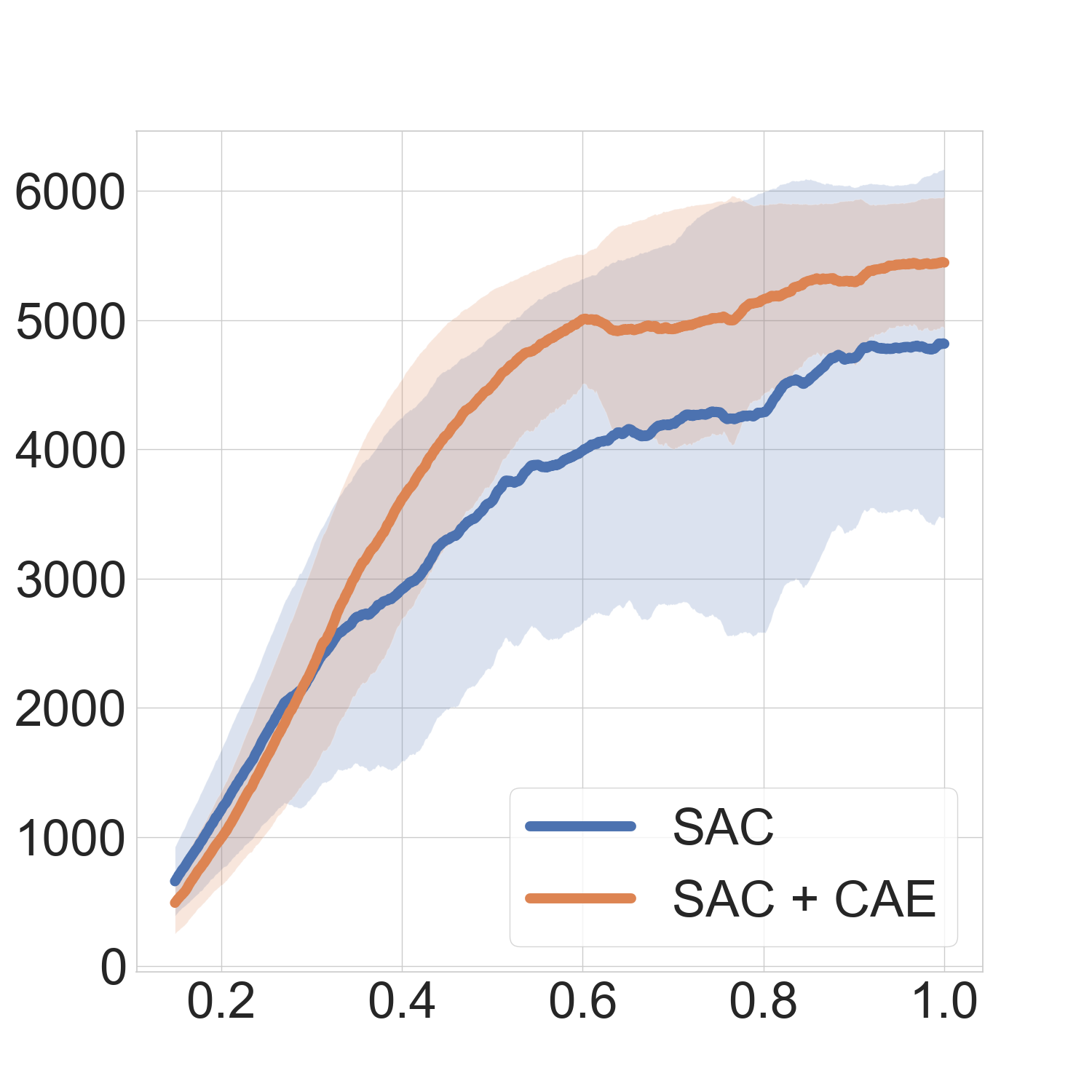}
    \caption{Ant}
    \label{fig:ant_sac}
  \end{subfigure}

  \vspace{-1mm}

  \begin{subfigure}[t]{0.3\textwidth}
    \includegraphics[width=\textwidth, trim={0 0 2.5cm 1.5cm}, clip]{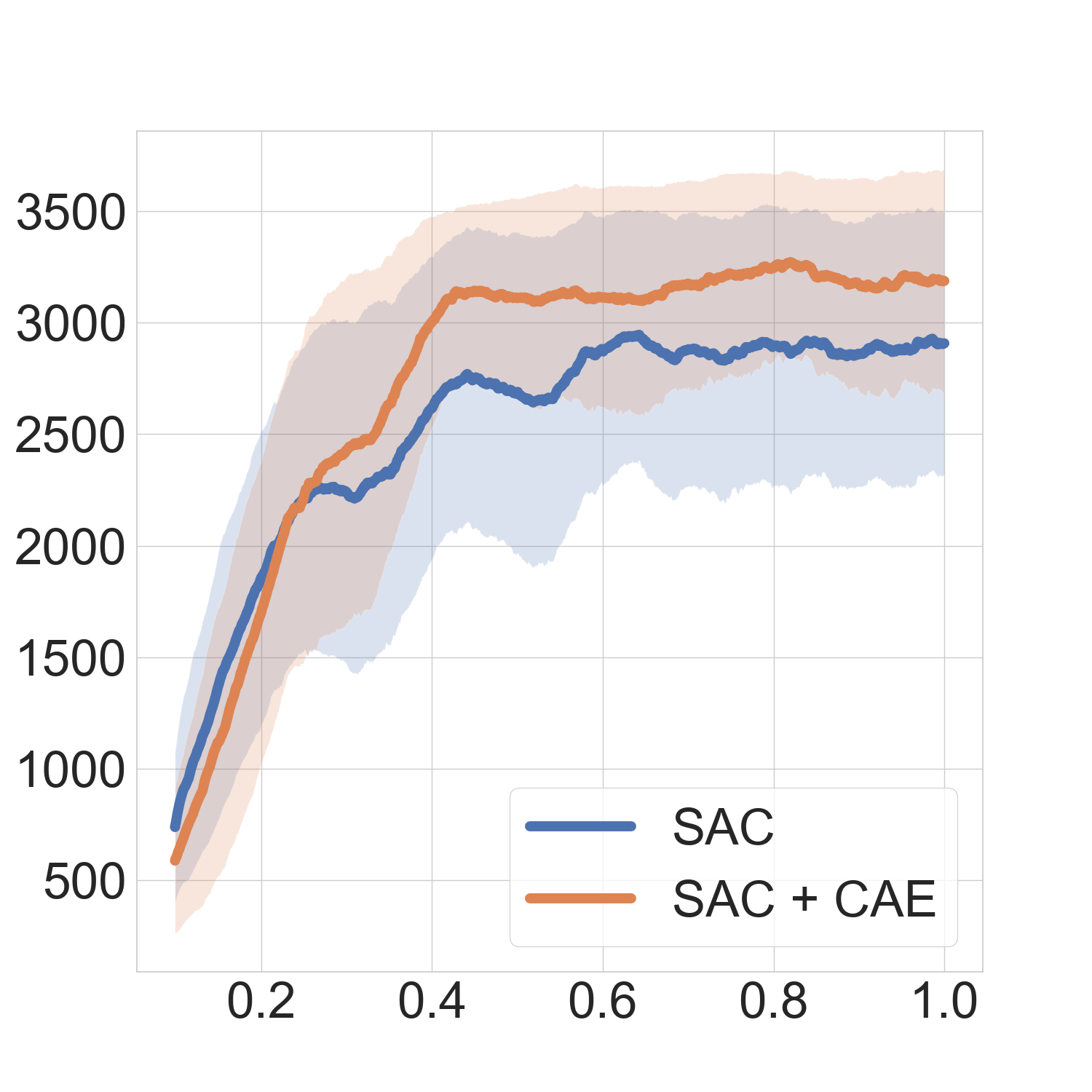}
    \caption{Hopper}
    \label{fig:hopper_sac}
  \end{subfigure}
  \hfill
  \begin{subfigure}[t]{0.3\textwidth}
    \includegraphics[width=\textwidth, trim={0 0 2.5cm 1.5cm}, clip]{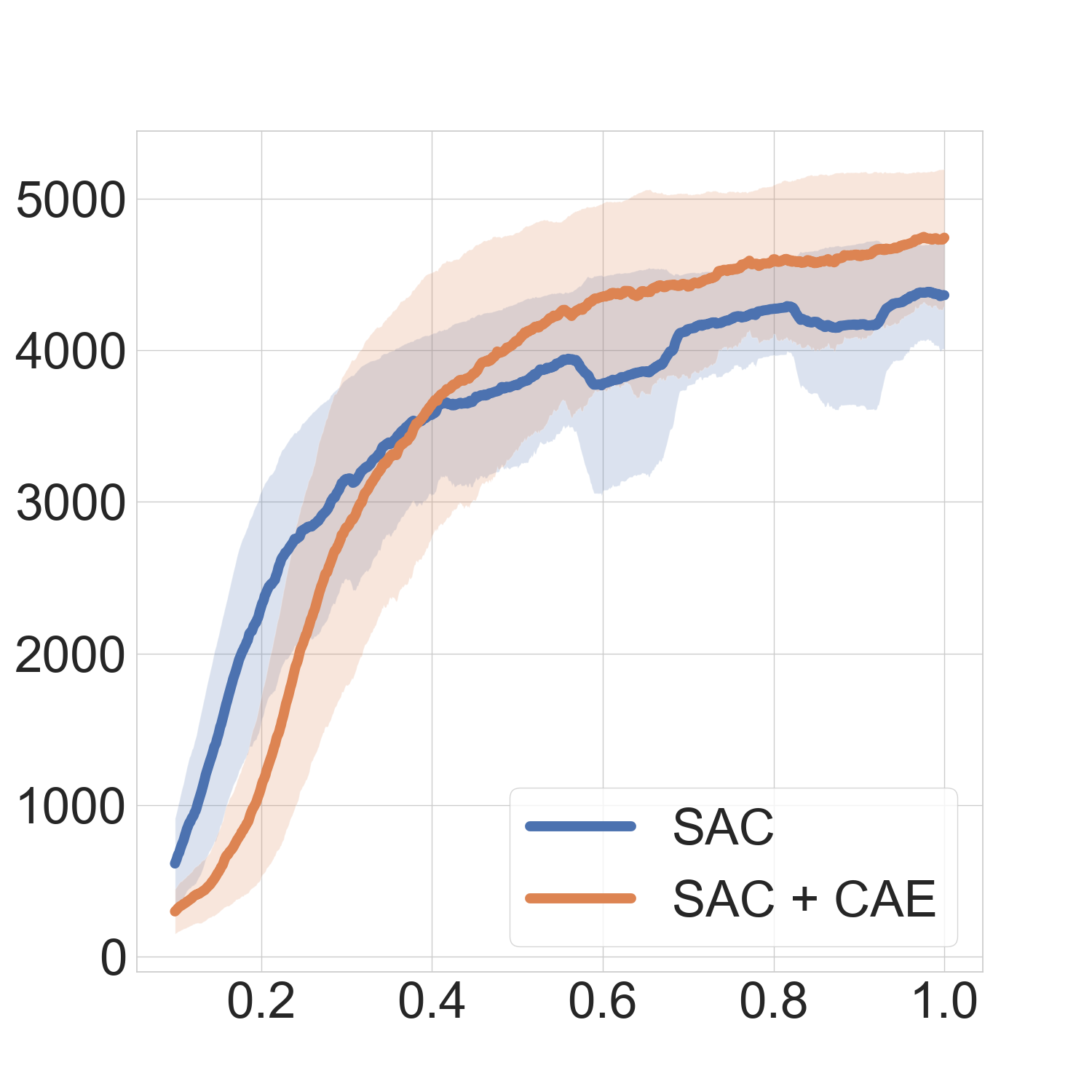}
    \caption{Walker2d}
    \label{fig:walker_sac}
  \end{subfigure}
  \hfill
  \begin{subfigure}[t]{0.3\textwidth}
    \includegraphics[width=\textwidth, trim={0 0 2.5cm 1.5cm}, clip]{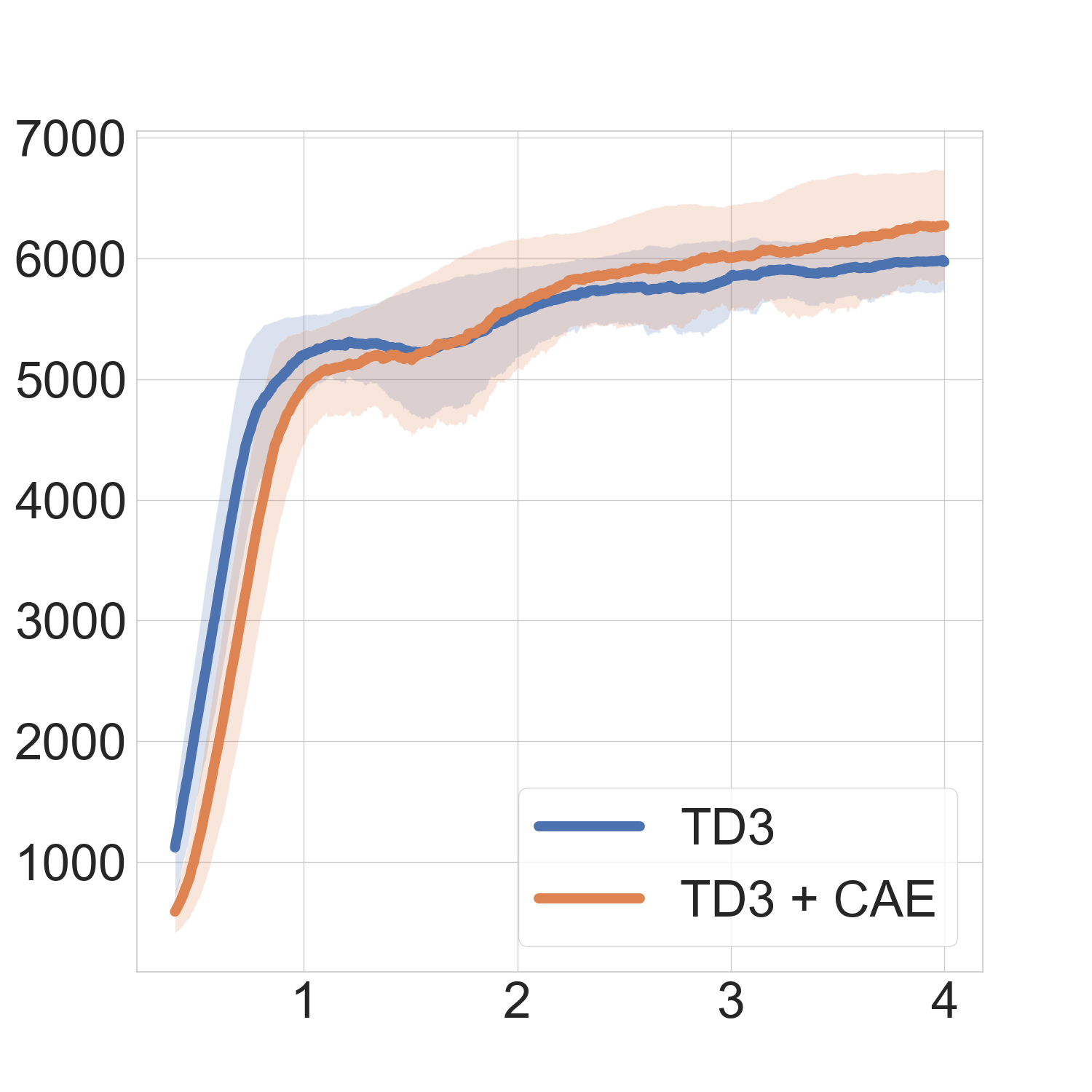}
    \caption{Humanoid}
    \label{fig:humanoid_td3}
  \end{subfigure}

  \vspace{-1mm}

  \begin{subfigure}[t]{0.3\textwidth}
    \includegraphics[width=\textwidth, trim={0 0 2.5cm 1.5cm}, clip]{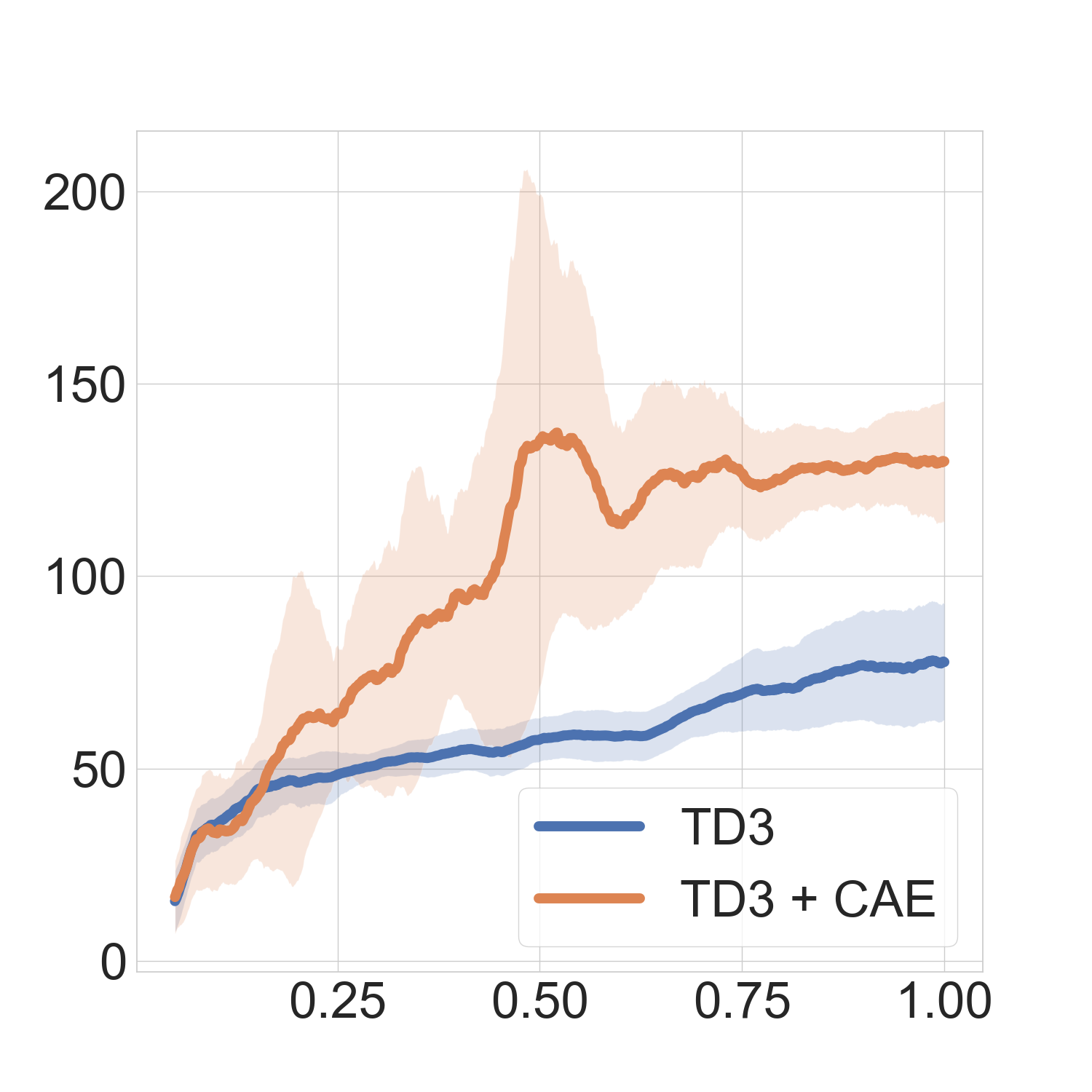}
    \caption{Swimmer}
    \label{fig:swimmer_td3}
  \end{subfigure}
  \hfill
  \begin{subfigure}[t]{0.3\textwidth}
    \includegraphics[width=\textwidth, trim={0 0 2.5cm 1.5cm}, clip]{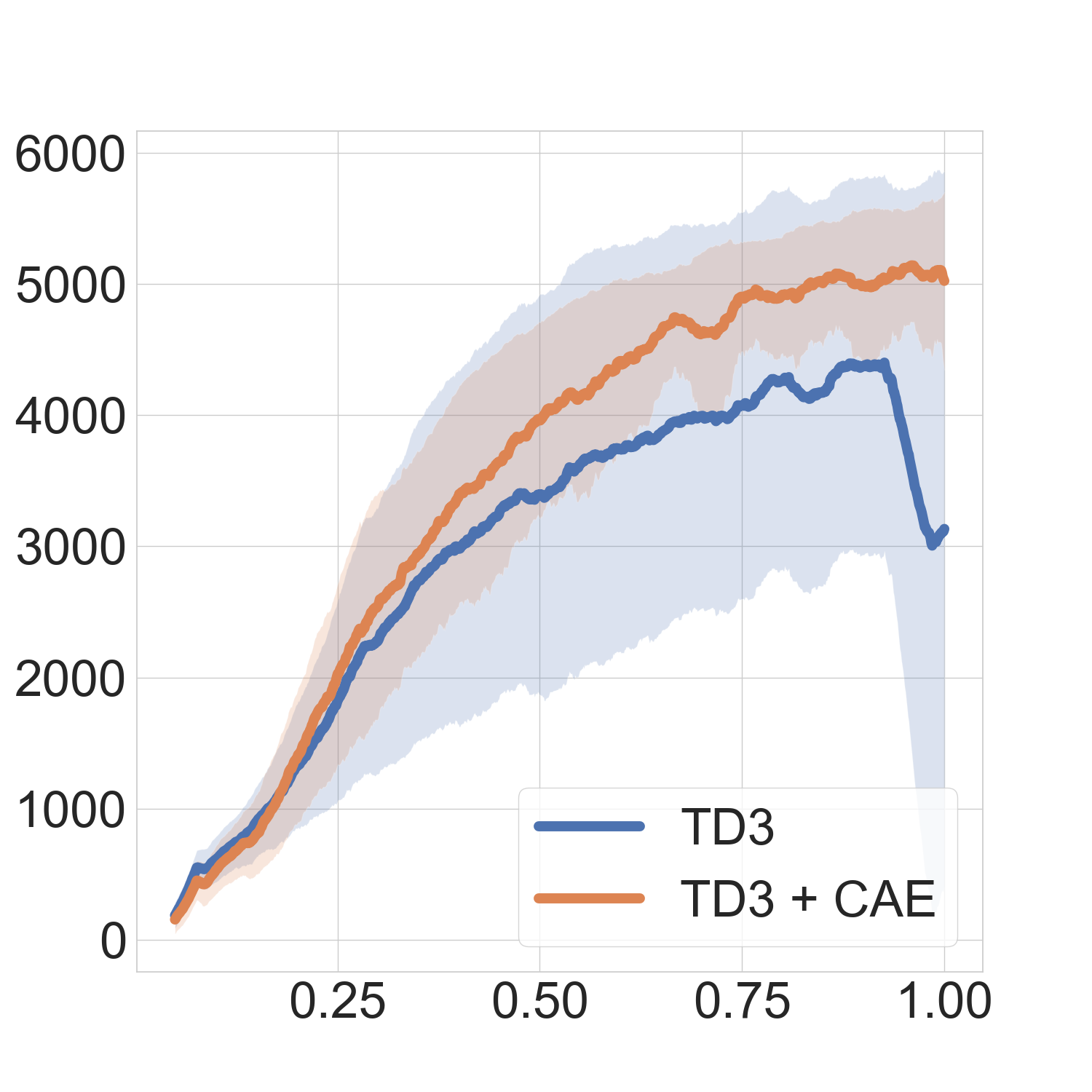}
    \caption{Ant}
    \label{fig:ant_td3}
  \end{subfigure}
  \hfill
  \begin{subfigure}[t]{0.3\textwidth}
    \includegraphics[width=\textwidth, trim={0 0 2.5cm 1.5cm}, clip]{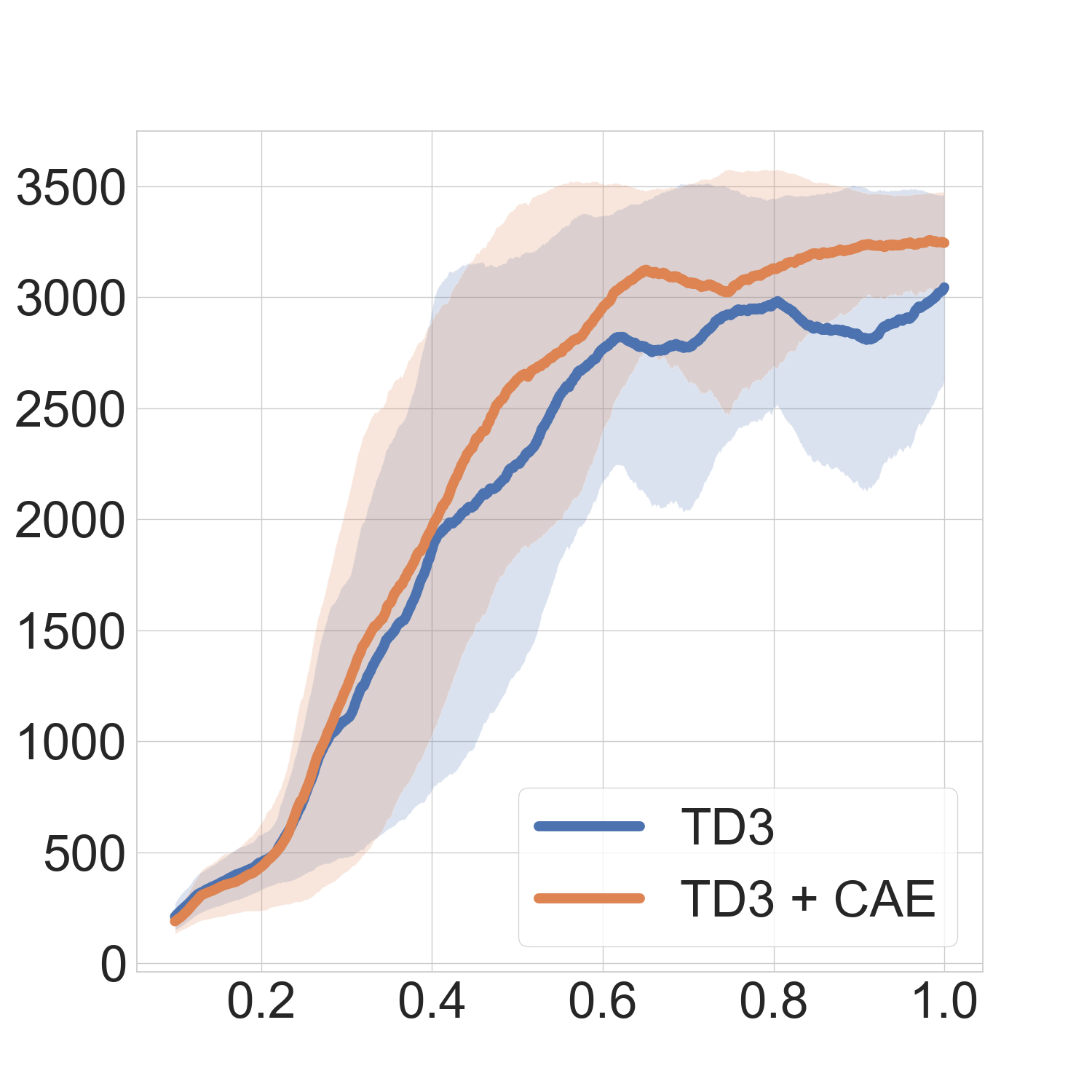}
    \caption{Hopper}
    \label{fig:hopper_td3}
  \end{subfigure}

  \vspace{-1mm}

  \begin{subfigure}[t]{0.3\textwidth}
    \includegraphics[width=\textwidth, trim={0 0 2.5cm 1.5cm}, clip]{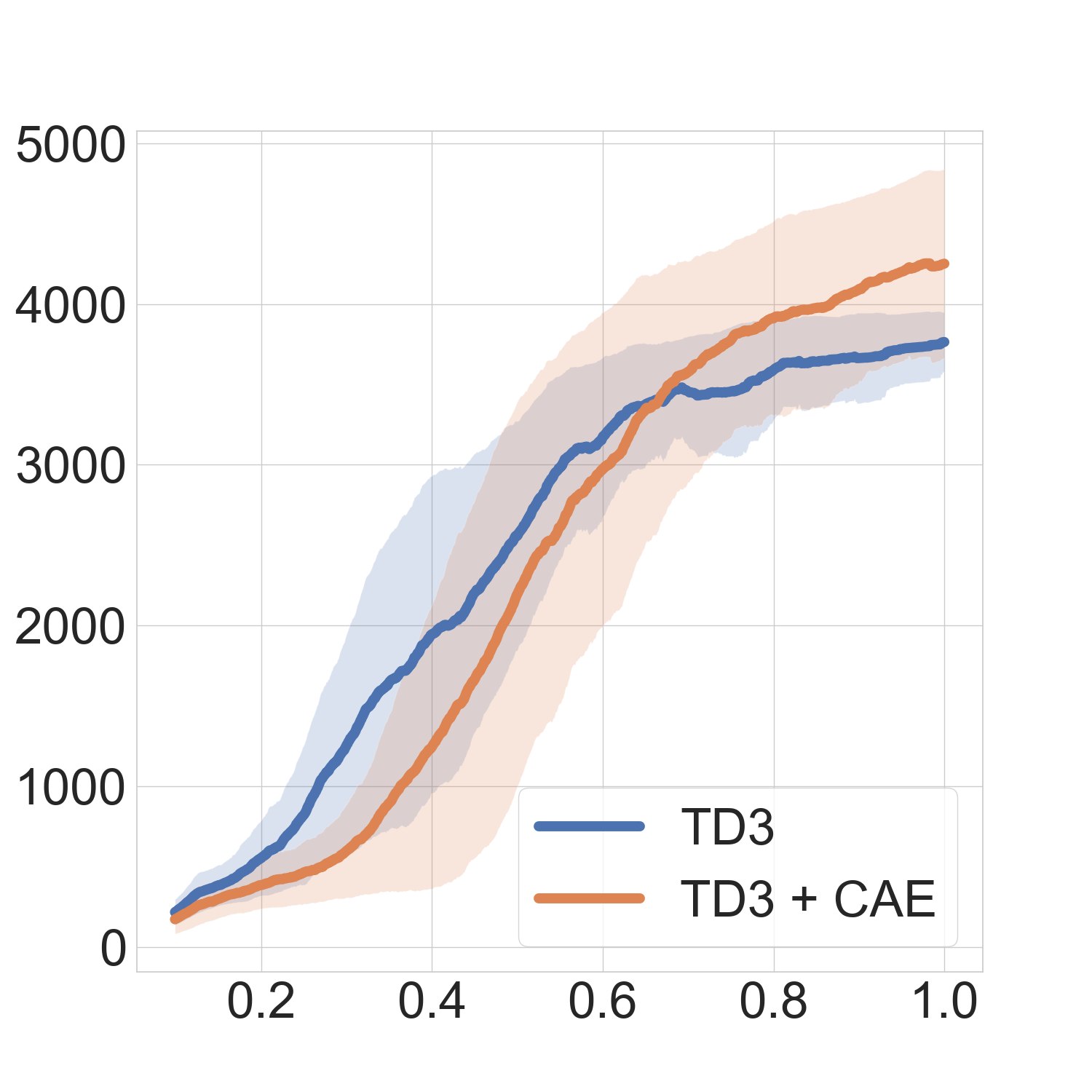}
    \caption{Walker2d}
    \label{fig:walker2d_td3}
  \end{subfigure}
  \hfill
  \begin{subfigure}[t]{0.3\textwidth}
    \includegraphics[width=\textwidth, trim={0 0 2.5cm 1.5cm}, clip]{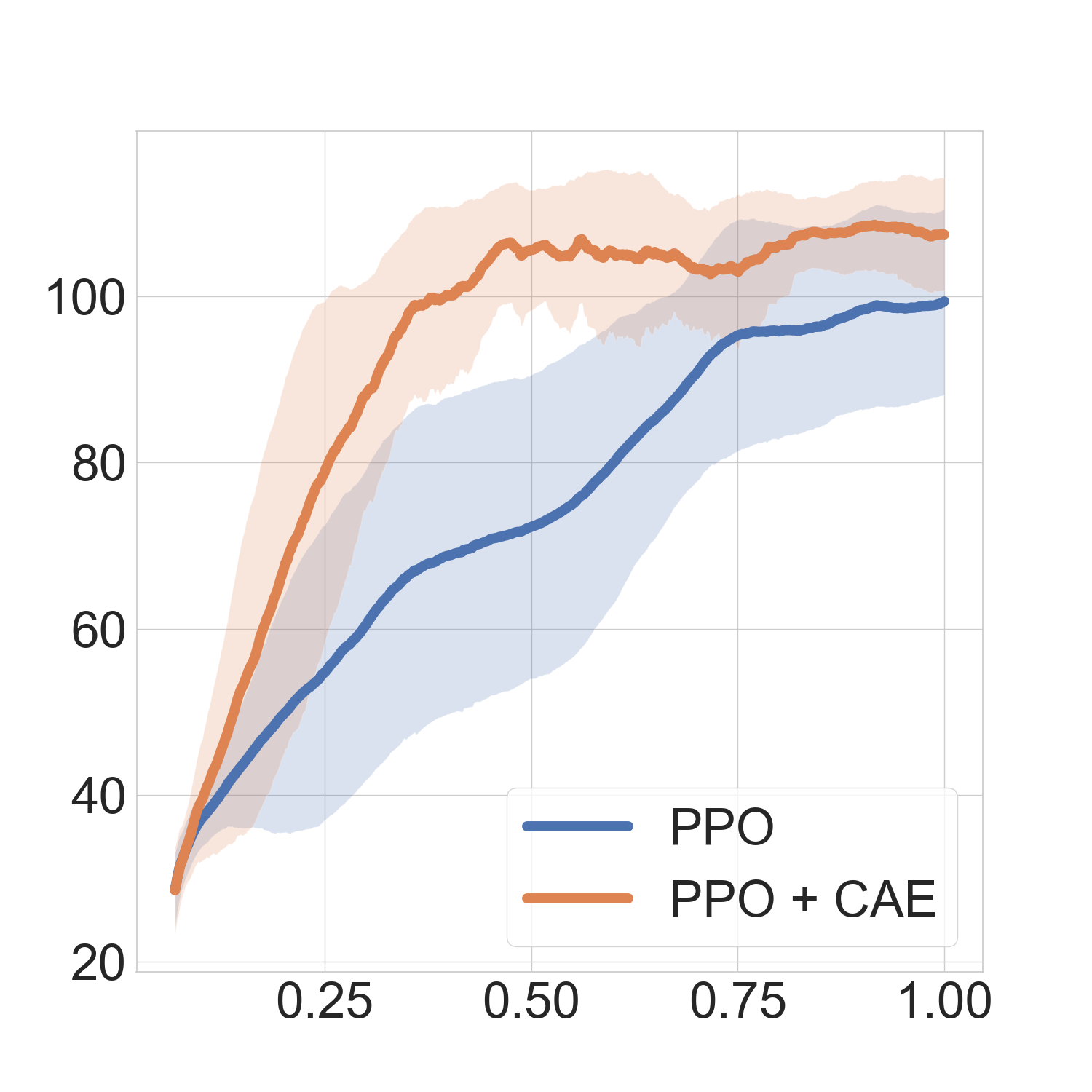}
    \caption{Swimmer}
    \label{fig:swimmer_ppo}
  \end{subfigure}
  \hfill
  \begin{subfigure}[t]{0.3\textwidth}
    \includegraphics[width=\textwidth, trim={0 0 2.5cm 1.5cm}, clip]{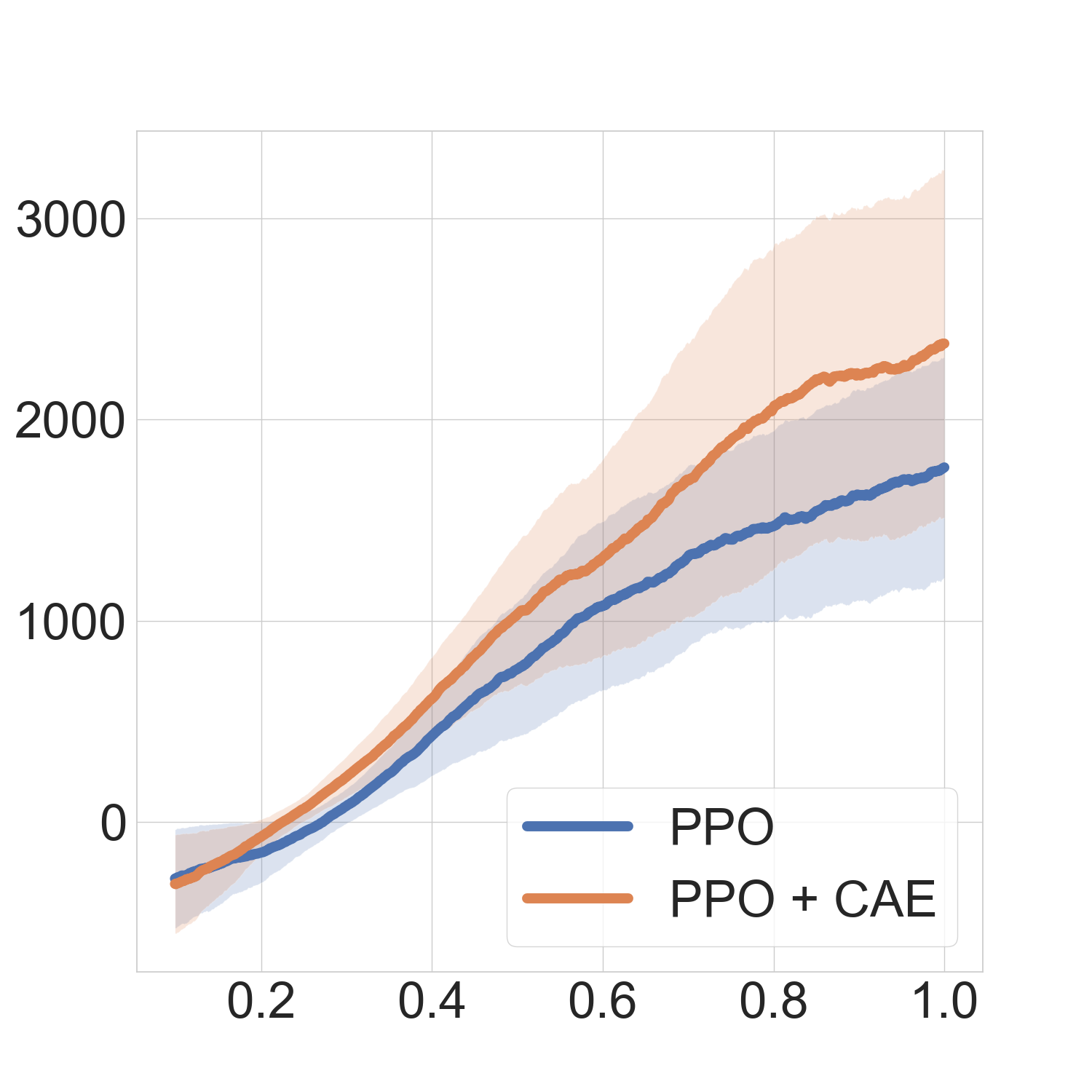}
    \caption{Ant}
    \label{fig:ant_ppo}
  \end{subfigure}

  \caption{Experimental results on MuJoCo-v4.}
  \label{fig:mujoco_v4}
\end{figure}

\newpage
\begin{figure}[H]
  \centering
  \begin{subfigure}[t]{0.3\textwidth}
    \includegraphics[width=\textwidth, trim={0 0 2.5cm 1.5cm}, clip]{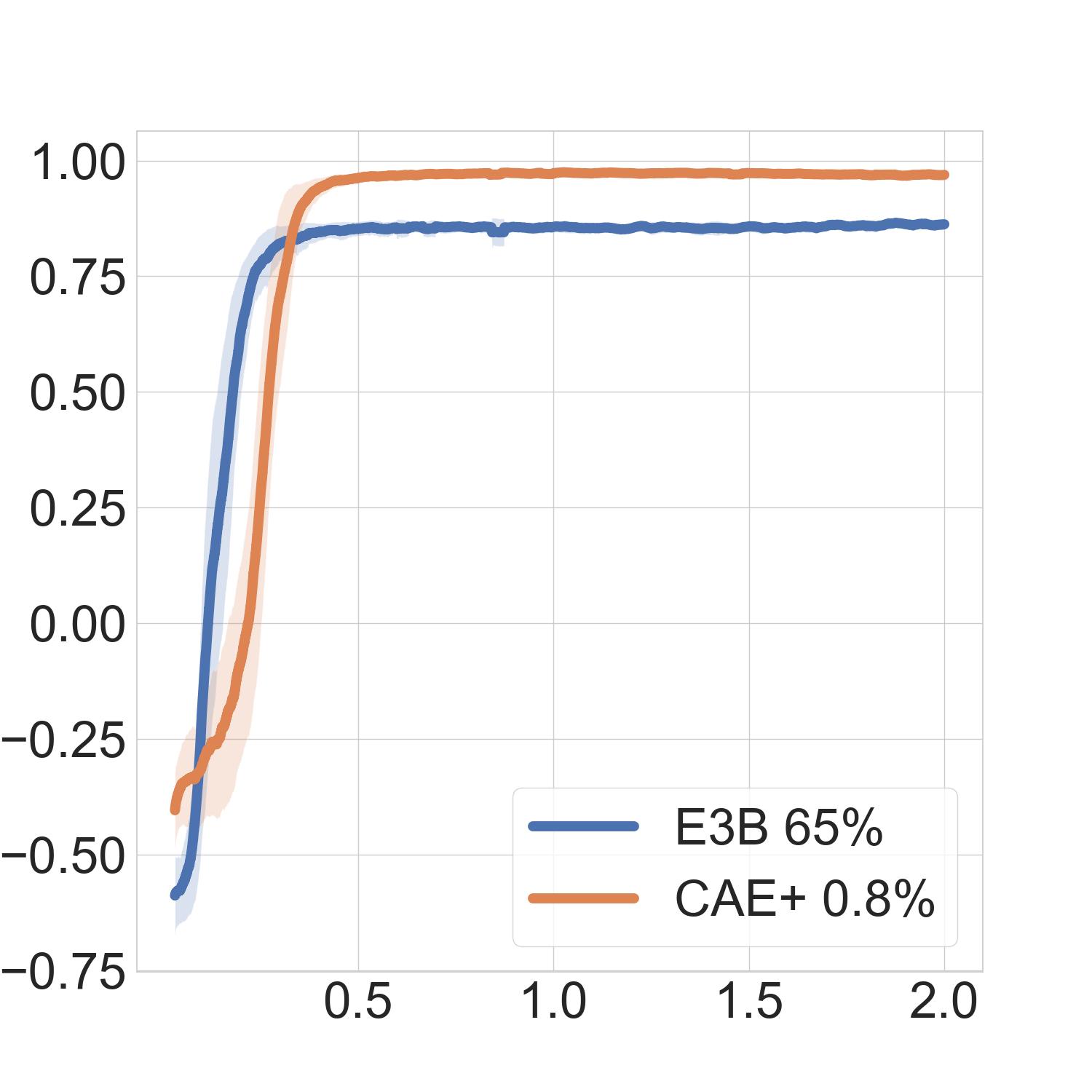}
    \caption{MultiRoom-N4-v0}
    \label{fig:multiroom-n4}
  \end{subfigure}
  \hfill
  \begin{subfigure}[t]{0.3\textwidth}
    \includegraphics[width=\textwidth, trim={0 0 2.5cm 1.5cm}, clip]{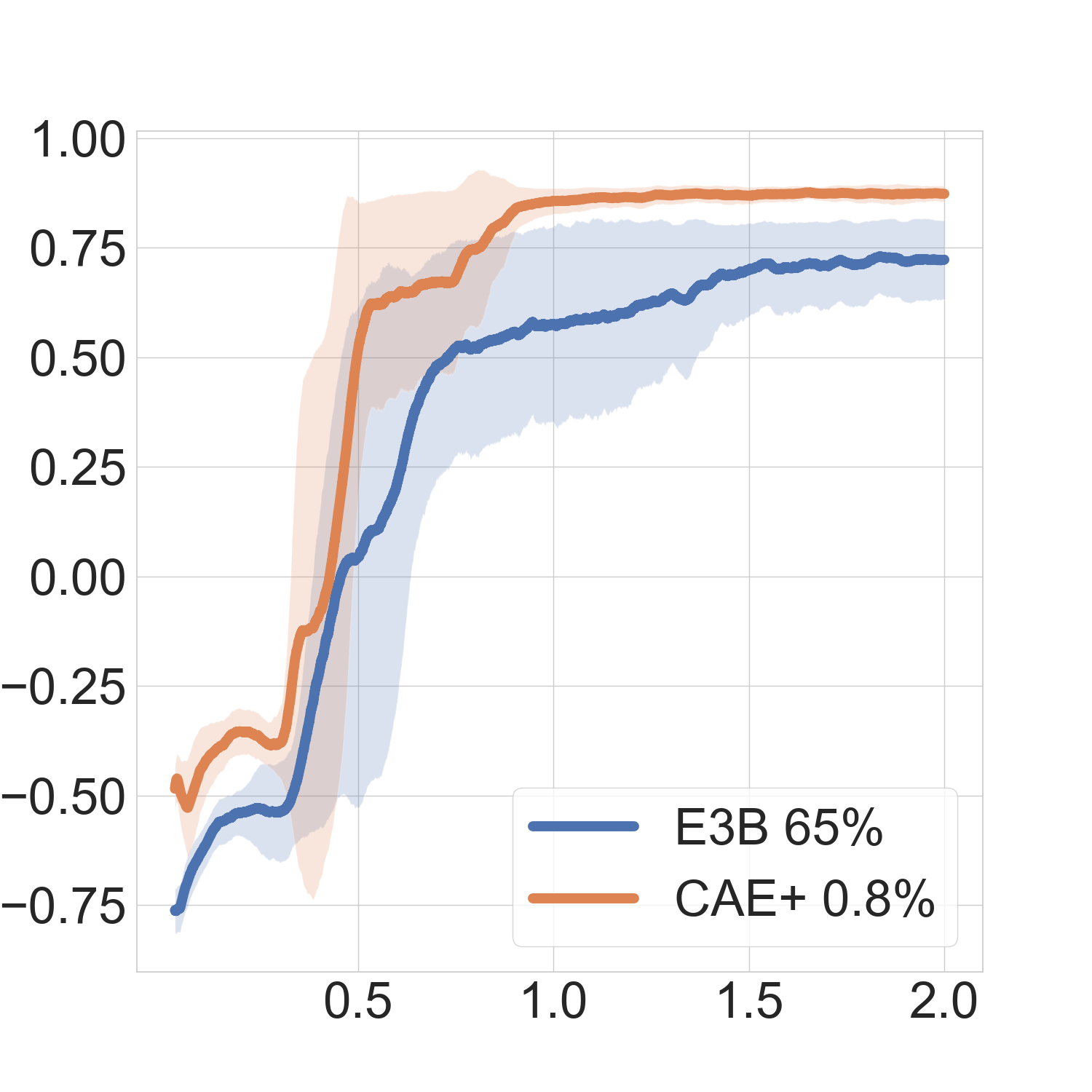}
    \caption{MultiRoom-N4-Locked-v0}
    \label{fig:multiroom-n4-locked}
  \end{subfigure}
  \hfill
  \begin{subfigure}[t]{0.3\textwidth}
    \includegraphics[width=\textwidth, trim={0 0 2.5cm 1.5cm}, clip]{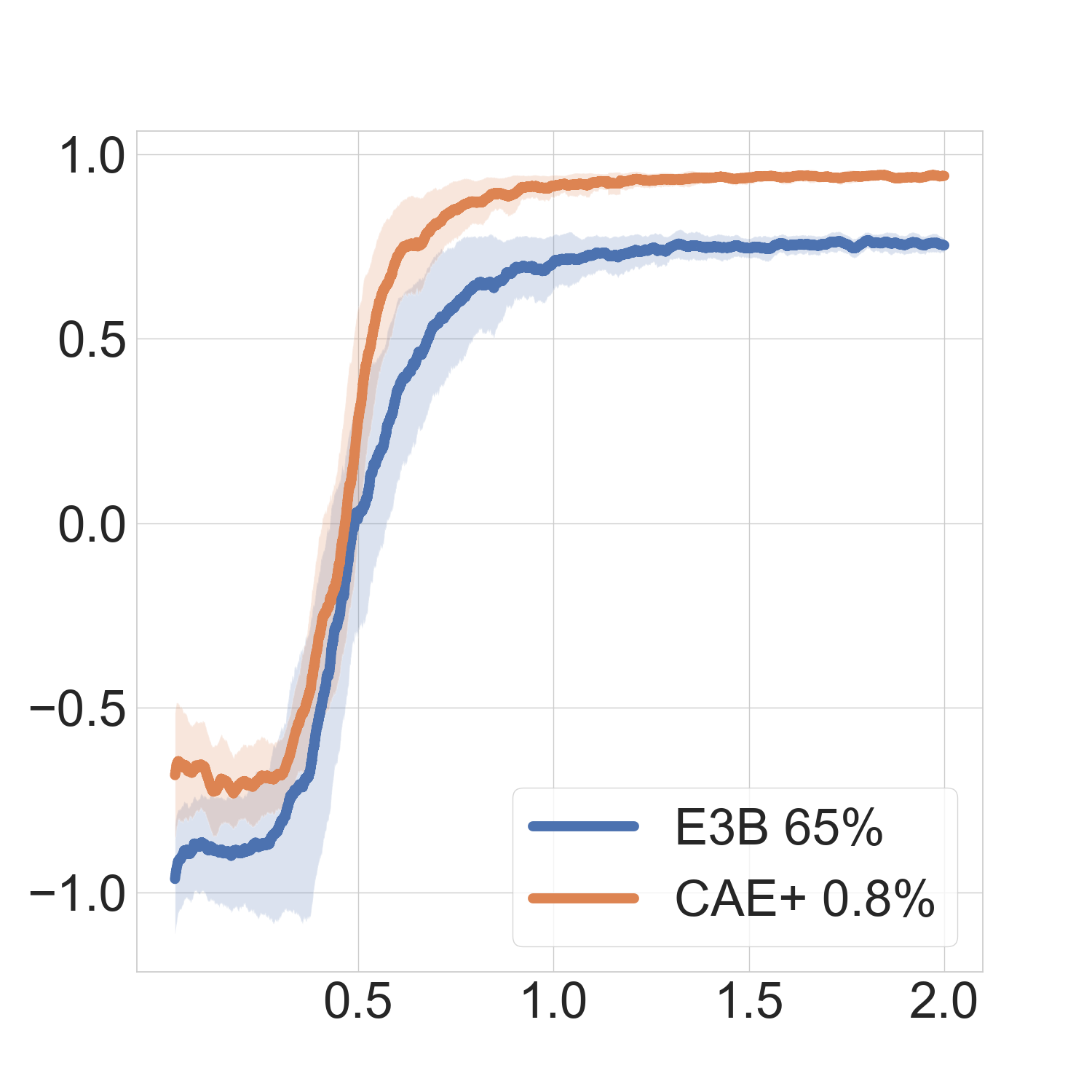}
    \caption{MultiRoom-N6-v0}
    \label{fig:multiroom-n6}
  \end{subfigure}
  \vskip\baselineskip
  \begin{subfigure}[t]{0.3\textwidth}
    \includegraphics[width=\textwidth, trim={0 0 2.5cm 1.5cm}, clip]{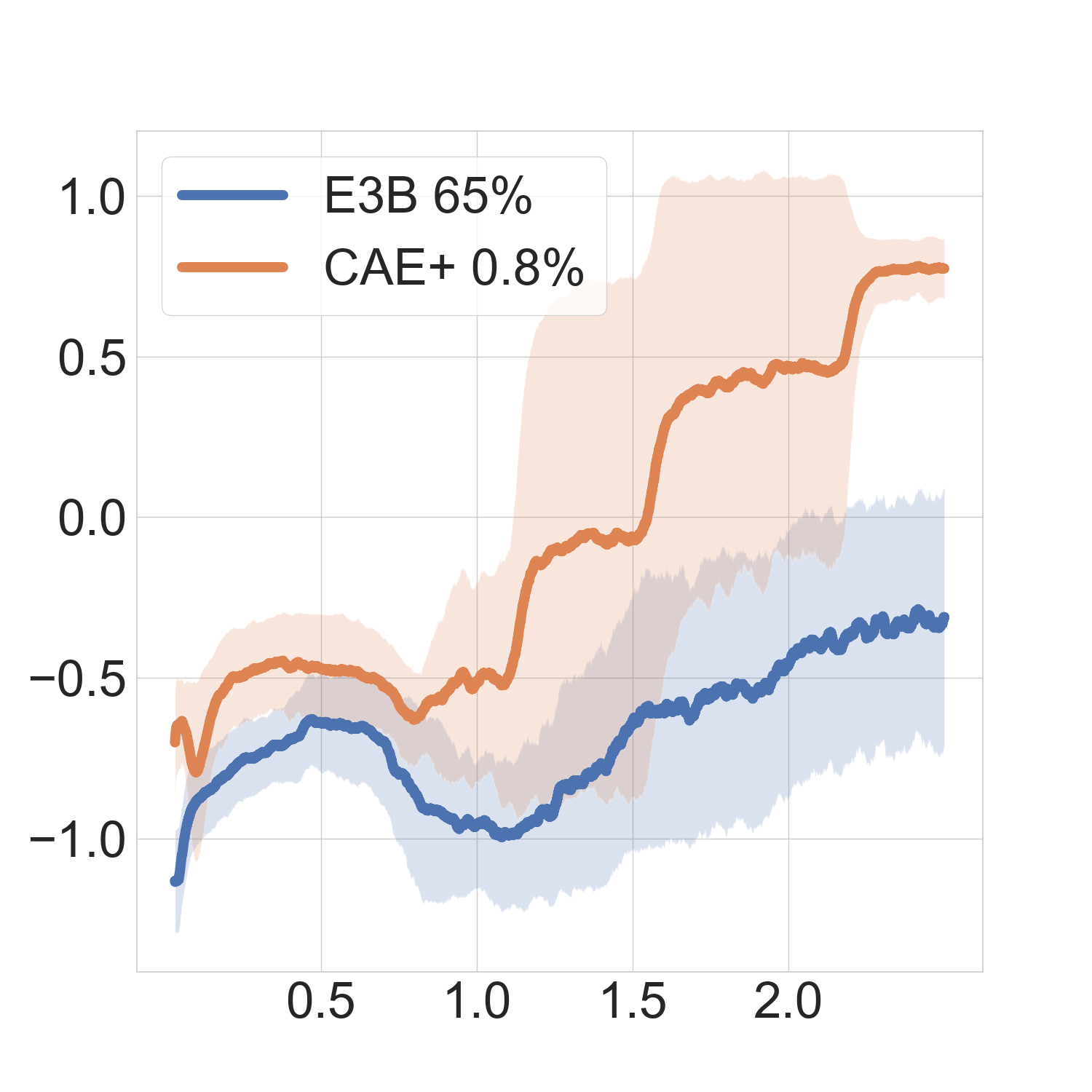}
    \caption{MultiRoom-N6-Locked-v0}
    \label{fig:multiroom-n6-locked_}
  \end{subfigure}
  \hfill
  \begin{subfigure}[t]{0.3\textwidth}
    \includegraphics[width=\textwidth, trim={0 0 2.5cm 1.5cm}, clip]{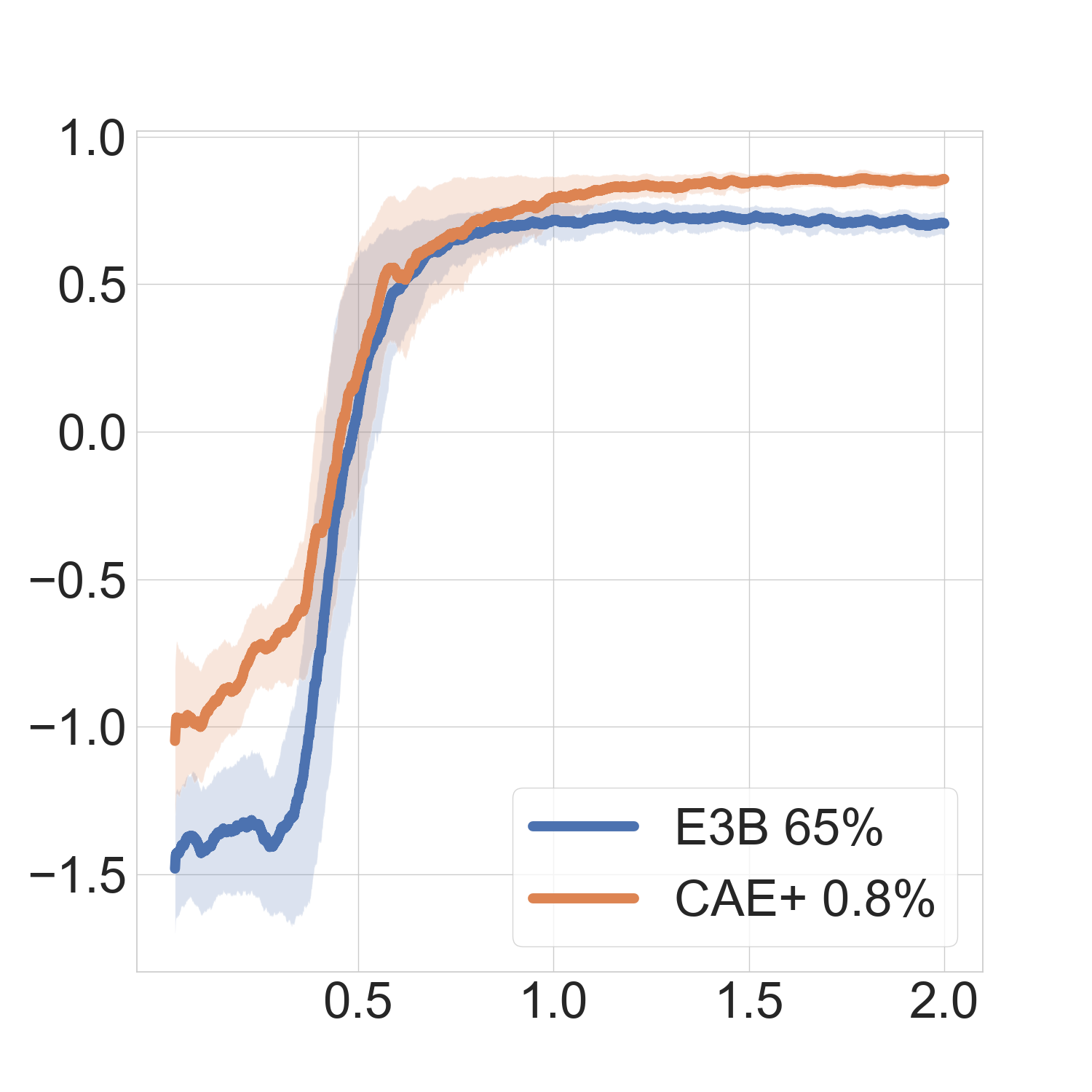}
    \caption{MultiRoom-N10-OpenDoor}
    \label{fig:multiroom-n10-opendoor}
  \end{subfigure}
  \hfill
  \begin{subfigure}[t]{0.3\textwidth}
    \includegraphics[width=\textwidth, trim={0 0 2.5cm 1.5cm}, clip]{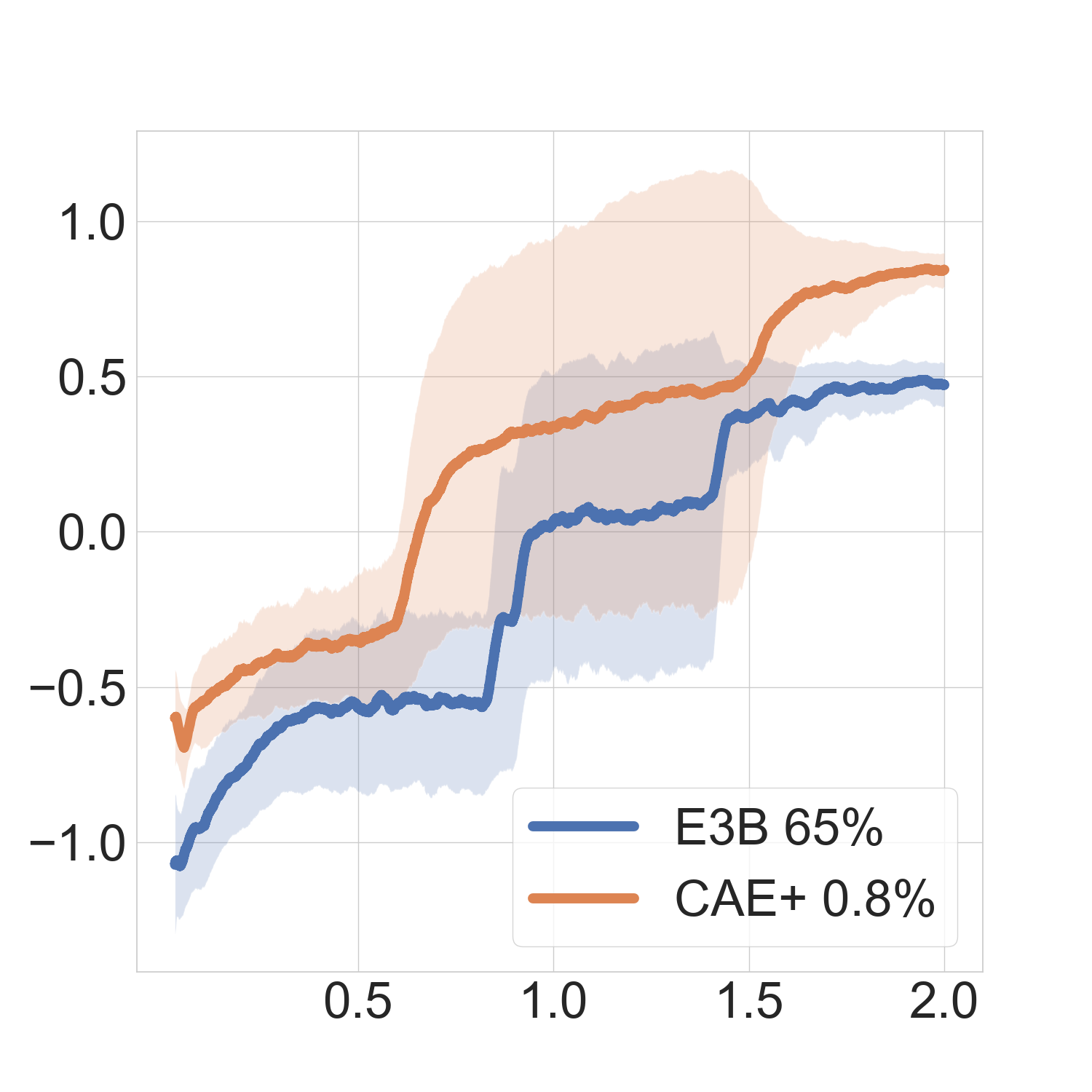}
    \caption{Freeze-Horn-Restricted-v0}
    \label{fig:freeze-horn-restricted}
  \end{subfigure}
  \vskip\baselineskip
  \begin{subfigure}[t]{0.3\textwidth}
    \includegraphics[width=\textwidth, trim={0 0 2.5cm 1.5cm}, clip]{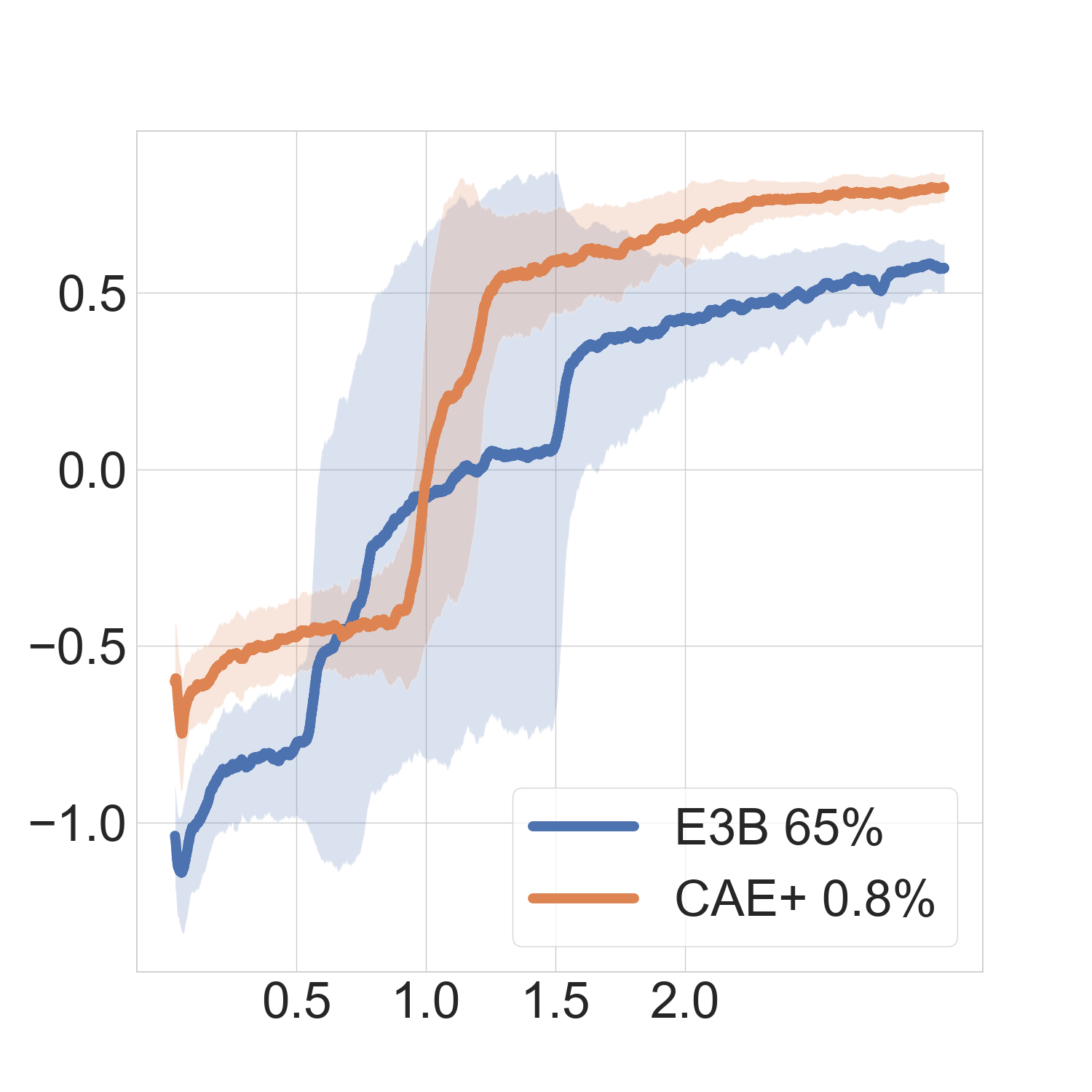}
    \caption{Freeze-Random-Restricted-v0}
    \label{fig:freeze-random-restricted}
  \end{subfigure}
  \hfill
  \begin{subfigure}[t]{0.3\textwidth}
    \includegraphics[width=\textwidth, trim={0 0 2.5cm 1.5cm}, clip]{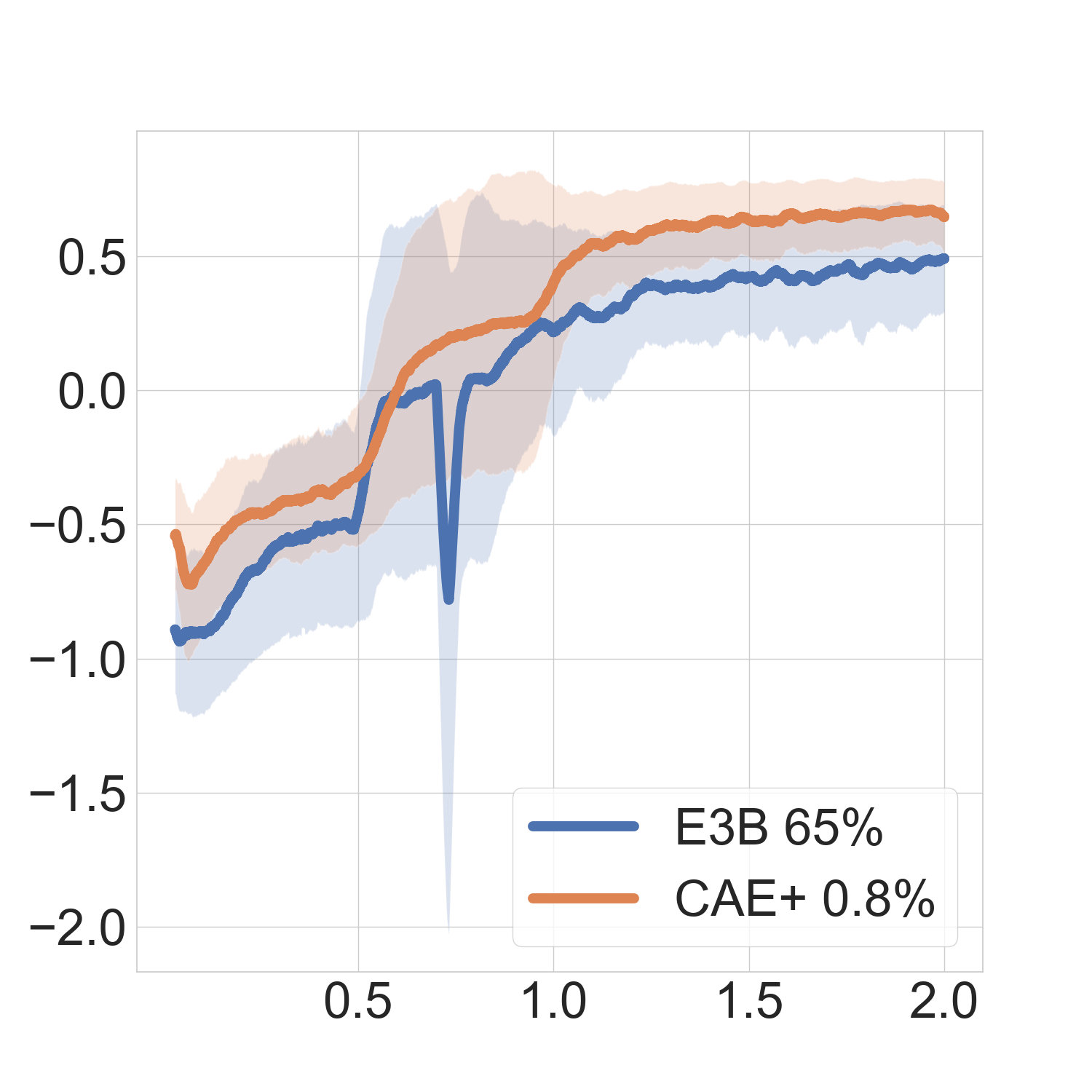}
    \caption{Freeze-Wand-Restricted-v0}
    \label{fig:freeze-wand-restricted}
  \end{subfigure}
  \hfill
  \begin{subfigure}[t]{0.3\textwidth}
    \includegraphics[width=\textwidth, trim={0 0 2.5cm 1.5cm}, clip]{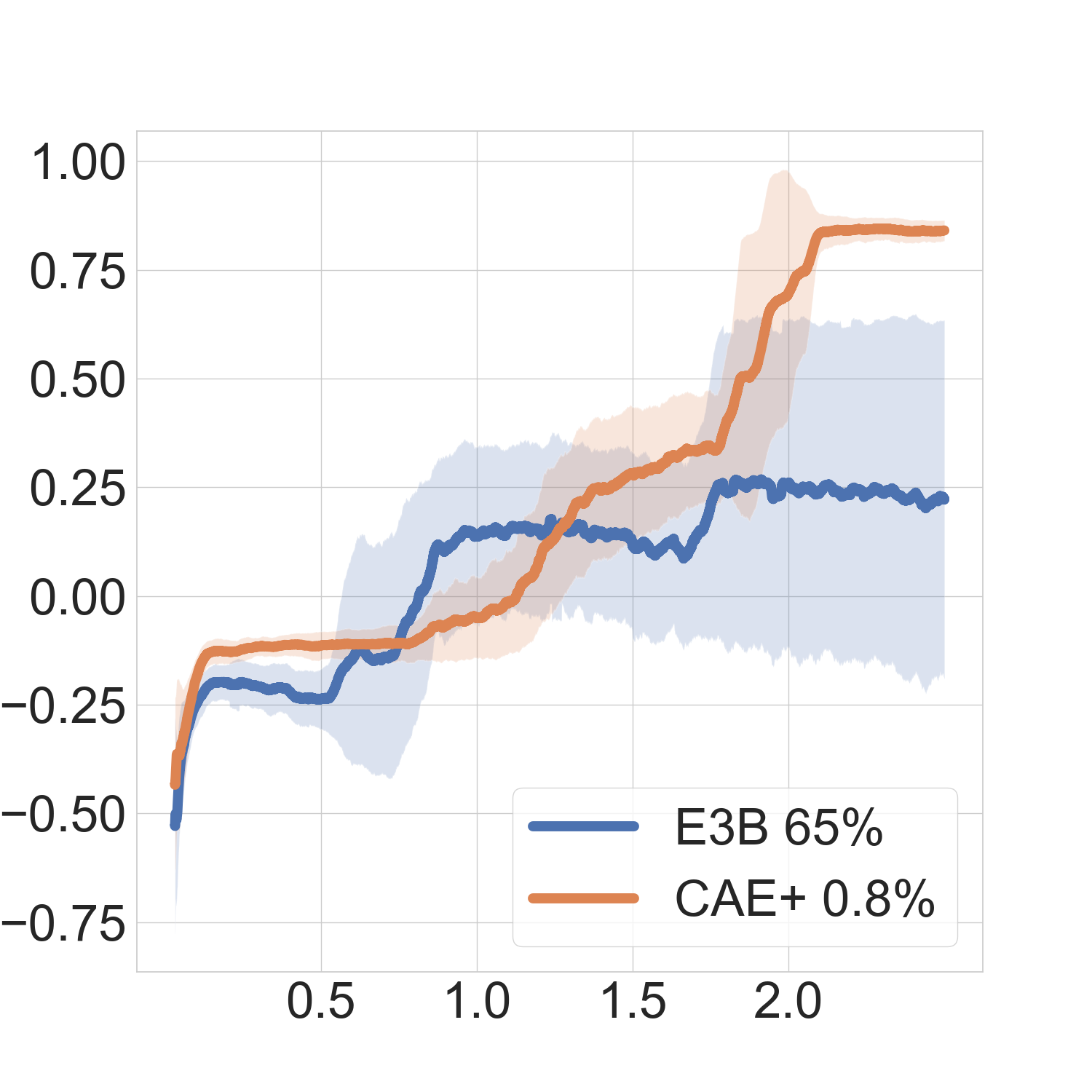}
    \caption{LavaCross-Restricted}
    \label{fig:lavacross-restricted}
  \end{subfigure}
  \caption{Experimental results on MiniHack.}
  \label{fig:minihack}
\end{figure}

\end{document}